\def\eqref#1{equation~\ref{#1}}
\def\1{\bm{1}}
\DeclareMathAlphabet{\mathsfit}{\encodingdefault}{\sfdefault}{m}{sl}
\SetMathAlphabet{\mathsfit}{bold}{\encodingdefault}{\sfdefault}{bx}{n}
\def\sym#1{\ifmmode^{#1}\else\(^{#1}\)\fi}
\newcommand{\bfx}{\mathbf{x}}
\newcommand{\eg}{\textit{e.g.,~}}
\newtheorem{definition}{Definition}
\newtheorem{assumption}{Assumption}
\newtheorem{theorem}{Theorem}
\newtheorem{lemma}{Lemma}
\newtheorem{remark}{Remark}
\newtheorem{example}{Example}
\definecolor{forestgreen}{rgb}{0.133, 0.545, 0.133}
\definecolor{yellowyellow}{rgb}{0.133, 0.545, 0.133}
\definecolor{COLOR_MEAN}{HTML}{f0f0f0}
\definecolor{ROW_COLOR}{HTML}{C9F7F4}
\definecolor{greendot}{HTML}{06d6a0}
\definecolor{magneta}{HTML}{FE6D73}
\definecolor{dark_green}{HTML}{17C3B2}
\newcommand*\colourcheck[1]{%
  \expandafter\newcommand\csname #1check\endcsname{\textcolor{#1}{\ding{52}}}%
}
\newcommand*\colourcross[1]{%
  \expandafter\newcommand\csname #1check\endcsname{\textcolor{#1}{\ding{55}}}%
}
\title{Sparse Mixture-of-Experts are Domain Generalizable Learners}
\author{
    Bo Li\textsuperscript{1}\thanks{Equal contribution} \quad
    Yifei Shen\textsuperscript{2}$^{*}$ \;
    Jingkang Yang\textsuperscript{1} \;
    Yezhen Wang\textsuperscript{3} \;
    \textbf{Jiawei Ren\textsuperscript{1}} \; \\
    \;\textbf{Tong Che\textsuperscript{3, 4}} \; \quad
    \textbf{Jun Zhang\textsuperscript{2}} \quad
    \textbf{Ziwei Liu\textsuperscript{1}} \\
    \textsuperscript{1}S-Lab, Nanyang Technological University \\
    \textsuperscript{2}The Hong Kong University of Science and Technology \\
    \textsuperscript{3}Mila-Quebec AI Institute \quad \textsuperscript{4}Nvidia Research \\
    \texttt{\{libo0013,ziwei.liu\}@ntu.edu.sg}\\
}
\begin{document}

\maketitle

\maketitle
\begin{abstract}
    Human visual perception can easily generalize to out-of-distributed visual data, which is far beyond the capability of modern machine learning models. Domain generalization (DG) aims to close this gap, with existing DG methods mainly focusing on the loss function design. In this paper, we propose to explore an orthogonal direction, i.e., the design of the backbone architecture. It is motivated by an empirical finding that transformer-based models trained with empirical risk minimization (ERM) outperform CNN-based models employing state-of-the-art (SOTA) DG algorithms on multiple DG datasets. We develop a formal framework to characterize a network's robustness to distribution shifts by studying its architecture's alignment with the correlations in the dataset. This analysis guides us to propose a novel DG model built upon vision transformers, namely \emph{Generalizable Mixture-of-Experts (GMoE)}. Extensive experiments on DomainBed demonstrate that GMoE trained with ERM outperforms SOTA DG baselines by a large margin. Moreover, GMoE is complementary to existing DG methods and its performance is substantially improved when trained with DG algorithms.

\end{abstract}

\section{Introduction}
\subsection{Motivations}
Generalizing to out-of-distribution (OOD) data is an innate ability for human vision, but highly challenging for machine learning models~\citep{recht2019imagenet,geirhos2021partial,ma2022principles}. Domain generalization (DG) is one approach to address this problem, which encourages models to be resilient under various distribution shifts such as background, lighting, texture, shape, and geographic/demographic attributes. 

From the perspective of representation learning, there
are several paradigms towards this goal, including domain alignment~\citep{DBLP:journals/jmlr/GaninUAGLLML16,DBLP:conf/nips/HoffmanMZ18}, invariant causality prediction~\citep{DBLP:journals/corr/abs-1907-02893,DBLP:conf/icml/KruegerCJ0BZPC21}, meta-learning~\citep{DBLP:journals/corr/abs-2110-09410,zhang2021adaptive}, ensemble learning~\citep{DBLP:conf/icip/ManciniBC018,cha2021swad}, and feature disentanglement~\citep{DBLP:journals/corr/abs-2109-05826,DBLP:journals/corr/abs-2111-13839}. The most popular approach to implementing these ideas is to design a specific loss function. For example, DANN~\citep{DBLP:journals/jmlr/GaninUAGLLML16} aligns domain distributions by adversarial losses. Invariant causal prediction can be enforced by a penalty of gradient norm~\citep{DBLP:journals/corr/abs-1907-02893} or variance of training risks~\citep{DBLP:conf/icml/KruegerCJ0BZPC21}. Meta-learning and domain-specific loss functions~\citep{DBLP:journals/corr/abs-2110-09410,zhang2021adaptive} have also been employed to enhance the performance. Recent studies have shown that these approaches improve ERM and achieve promising results on large-scale DG datasets~\citep{wiles2021fine}.

Meanwhile, in various computer vision tasks, the innovations in backbone architectures play a pivotal role in performance boost and have attracted much attention~\citep{he2016deep,hu2018squeeze,DBLP:conf/iccv/LiuL00W0LG21}. Additionally, it has been empirically demonstrated in~\cite{sivaprasad2021reappraising} that different CNN architectures have different performances on DG datasets. Inspired by these pioneering works, we conjecture that \emph{backbone architecture design would be promising for DG}. To verify this intuition, we evaluate a transformer-based model and compare it with CNN-based architectures of equivalent computational overhead, as shown in Fig. \ref{fig:vit_comp}. To our surprise, a vanilla ViT-S/16~\citep{DBLP:conf/iclr/DosovitskiyB0WZ21} trained with empirical risk minimization (ERM) outperforms ResNet-50 trained with SOTA DG algorithms~\citep{cha2021swad,rame2021fishr,DBLP:journals/corr/abs-2104-09937} on DomainNet, OfficeHome and VLCS datasets, despite the fact that both architectures have a similar number of parameters and enjoy close performance on in-distribution domains. We theoretically validate this effect based on the algorithmic alignment framework~\citep{xu2019what,li2021does}. We first prove that a network trained with the ERM loss function is more robust to distribution shifts if its architecture is more \emph{similar} to the invariant correlation, where the similarity is formally measured by the \emph{alignment value} defined in~\citet{xu2019what}. On the contrary, a network is less robust if its architecture aligns with the spurious correlation. We then investigate the alignment between backbone architectures (i.e., convolutions and attentions) and the correlations in these datasets, which explains the superior performance of ViT-based methods.

To further improve the performance, our analysis indicates that we should exploit properties of invariant correlations in vision tasks and design network architectures to align with these properties. This requires an investigation that sits at the intersection of domain generalization and classic computer vision. In domain generalization, it is widely believed that the data are composed of some sets of attributes and distribution shifts of data are distribution shifts of these attributes~\citep{wiles2021fine}. The latent factorization model of these attributes is almost identical to the generative model of visual attributes in classic computer vision~\citep{ferrari2007learning}. To capture these diverse attributes, we propose a Generalizable Mixture-of-Experts (GMoE), which is built upon sparse mixture-of-experts (sparse MoEs)~\citep{DBLP:conf/iclr/ShazeerMMDLHD17} and vision transformer~\citep{DBLP:conf/iclr/DosovitskiyB0WZ21}. The sparse MoEs were originally proposed as key enablers for extremely large, but efficient models~\citep{fedus2022review}. By theoretical and empirical evidence, we demonstrate that MoEs are experts for processing visual attributes, leading to a better alignment with invariant correlations. Based on our analysis, we modify the architecture of sparse MoEs to enhance their performance in DG. Extensive experiments demonstrate that GMoE achieves superior domain generalization performance both with and without DG algorithms.

\vspace{-2mm}
\subsection{Contributions}
\vspace{-2mm}
In this paper, we formally investigate the impact of the backbone architecture on DG and propose to develop effective DG methods by backbone architecture design. Specifically, our main contributions are summarized as follows:

\textbf{A Novel View of DG:} In contrast to previous works, this paper initiates a formal exploration of the backbone architecture in DG. Based on algorithmic alignment~\citep{xu2019what}, we prove that a network is more robust to distribution shifts if its architecture aligns with the invariant correlation, whereas less robust if its architecture aligns with spurious correlation. The theorems are verified on synthetic and real datasets.

\textbf{A Novel Model for DG:} Based on our theoretical analysis, we propose Generalizable Mixture-of-Experts (GMoE) and prove that it enjoys a better alignment than vision transformers. GMoE is built upon sparse mixture-of-experts~\citep{DBLP:conf/iclr/ShazeerMMDLHD17} and vision transformer~\citep{DBLP:conf/iclr/DosovitskiyB0WZ21}, with a theory-guided performance enhancement for DG.

\textbf{Excellent Performance:} We validate GMoE's performance on all $8$ large-scale datasets of DomainBed. Remarkably, GMoE trained with ERM achieves SOTA performance on $7$ datasets in the train-validation setting and on $8$ datasets in the leave-one-domain-out setting. Furthermore, the GMoE trained with DG algorithms achieves better performance than GMoE trained with ERM.

\section{Preliminaries}
\subsection{Notations}
Throughout this paper, $a$, $\bm{a}$, $\bm{A}$ stand for a scalar, a column vector, a matrix, respectively. $O(\cdot)$ and $\omega(\cdot)$ are asymptotic notations. We denote the training dataset, training distribution, test dataset, and test distribution as $\mathcal{E}_{tr}$, $D_{tr}$, $\mathcal{E}_{te}$, and $D_{te}$, respectively.

\subsection{Attribute Factorization}
The attribute factorization~\citep{wiles2021fine} is a realistic generative model under distribution shifts. Consider a joint distribution of the input $\bm{x}$ and corresponding attributes $a^1, \cdots, a^K$ (denoted as $a^{1:K}$) with $a^i \in \mathcal{A}^i$, where $\mathcal{A}^i$ is a finite set. The label can depend on one or multiple attributes. Denote the latent factor as $\bm{z}$, the data generation process is given by
\begin{align}\label{eq:data}
    \bm{z} \sim p(\bm{z}), \quad a^i \sim p(a^i|\bm{z}), \quad \bm{x} \sim p(\bm{x}|\bm{z}), \quad p(a^{1:K},\bm{x}) = p(a^{1:K})\int p(\bm{x}|\bm{z})p(\bm{z}|a^{1:K})d\bm{z}.
\end{align}
The distribution shift arises if different marginal distributions of the attributes are given but they share the same conditional generative process. Specifically, we have $p_{\text{train}}(a^{1:K}) \neq p_{\text{test}}(a^{1:K})$, but the generative model in \eqref{eq:data} is shared across the distributions, i.e., we have $p_{\text{test}}(a^{1:K},\bm{x}) = p_{\text{test}}(a^{1:K})\int p(\bm{x}|\bm{z})p(\bm{z}|a^{1:K})d\bm{z}$ and similarly for $p_{\text{train}}$. The above description is abstract and we will illustrate with an example.


\begin{example}\label{exmp:dsprites} \textup{(DSPRITES \citep{dsprites17}) Consider $\mathcal{A}^1 = \{\text{red, blue}\}$ and $\mathcal{A}^2 = \{\text{ellipse, square}\}$. The target task is a shape classification task, where the label depends on attribute $a^2$. In the training dataset, $90\%$ ellipses are red and $50\%$ squares are blue, while in the test dataset all the attributes are distributed uniformly. As the majority of ellipses are red, the classifier will use color as a shortcut in the training dataset, which is so-called \emph{geometric skews}~\citep{nagarajan2020understanding}. However, this shortcut does not exist in the test dataset and the network fails to generalize.}
\end{example}

In classic computer vision, the attributes are named \emph{visual attributes} and they follow a similar data generation process~\citep{ferrari2007learning}. We shall discuss them in detail in Section \ref{sec:method}.

\subsection{Algorithmic Alignment}
We first introduce algorithmic alignment, which characterizes the easiness of IID reasoning tasks by measuring the similarity between the backbone architecture and target function. The alignment is formally defined as the following.

\begin{definition}\label{def:aa} (Alignment; \citep{xu2019what}) Let $\mathcal{N}$ denote a neural network with $n$ modules $\{\mathcal{N}_i\}_{i=1}^n$ and assume that a target function for learning $y = g(\bm{x})$ can be decomposed into $n$ functions $f_1,\cdots, f_n$. The network $\mathcal{N}$ aligns with the target function if replacing $\mathcal{N}_i$ with $f_i$, it outputs the same value as algorithm $g$. The alignment value between $\mathcal{N}$ and $f$ is defined as 
\begin{align}\label{eq:alignment}
    \text{Alignment}(\mathcal{N},f,\epsilon,\delta) := n \cdot \max_{i} \mathcal{M}(f_i, \mathcal{N}_i, \epsilon, \delta),
\end{align}
where $\mathcal{M}(f_i, \mathcal{N}_i, \epsilon, \delta)$ denotes the sample complexity measure for $\mathcal{N}_i$ to learn $f_i$ with $\epsilon$ precision at failure probability $\delta$ under a learning algorithm when the training distribution is the same as the test distribution.
\end{definition}

In Definition \ref{def:aa}, the original task is to learn $f$, which is a challenging problem. Intuitively, if we could find a backbone architecture that is suitable for this task, it helps to break the original task into simpler sub-tasks, i.e., to learn $f_1, \cdots, f_n$ instead. Under the assumptions of algorithmic alignment \citep{xu2019what}, $f$ can be learned optimally if the sub-task $f_1, \cdots, f_n$ can be learned optimally. Thus, a good alignment makes the target task easier to learn, and thus improves IID generalization, which is given in Theorem \ref{thm:aa} in Appendix \ref{sec:proof_arch}. In Section \ref{sec:neural_arch_for_dg}, we extend this framework to the DG setting.

\section{On the Importance of Neural Architecture for Domain Generalization}
\label{sec:neural_arch_for_dg}
In this section, we investigate the impact of the backbone architecture on DG, from a motivating example to a formal framework.

\begin{figure*}
    \centering
    \subfigure[Performance comparison.]
    {
        \includegraphics[width=0.4\columnwidth]{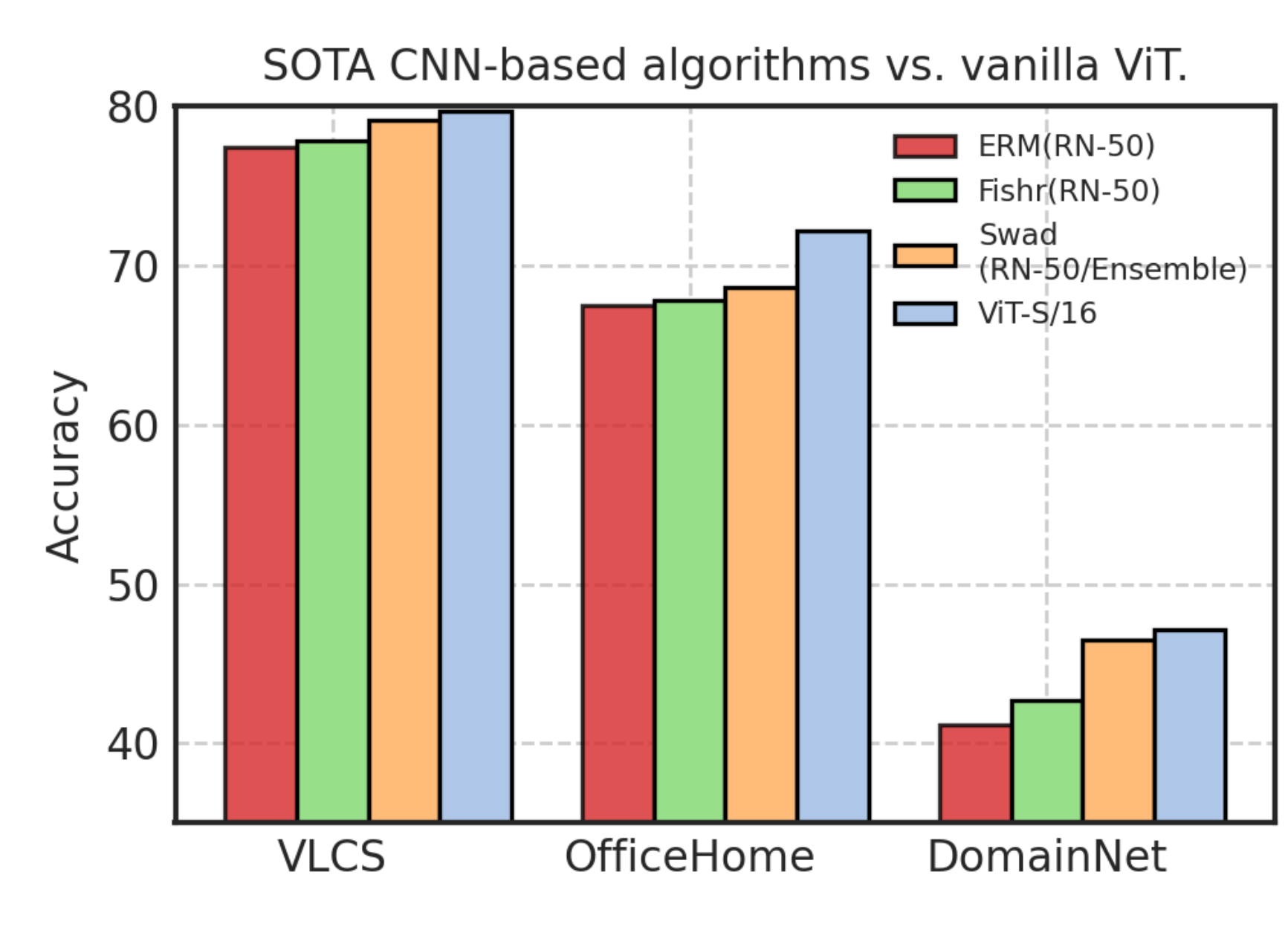}
        \label{fig:vit_comp}
    }\hfil
    \subfigure[Dataset visualization.]
    {
        \includegraphics[width=0.4\columnwidth]{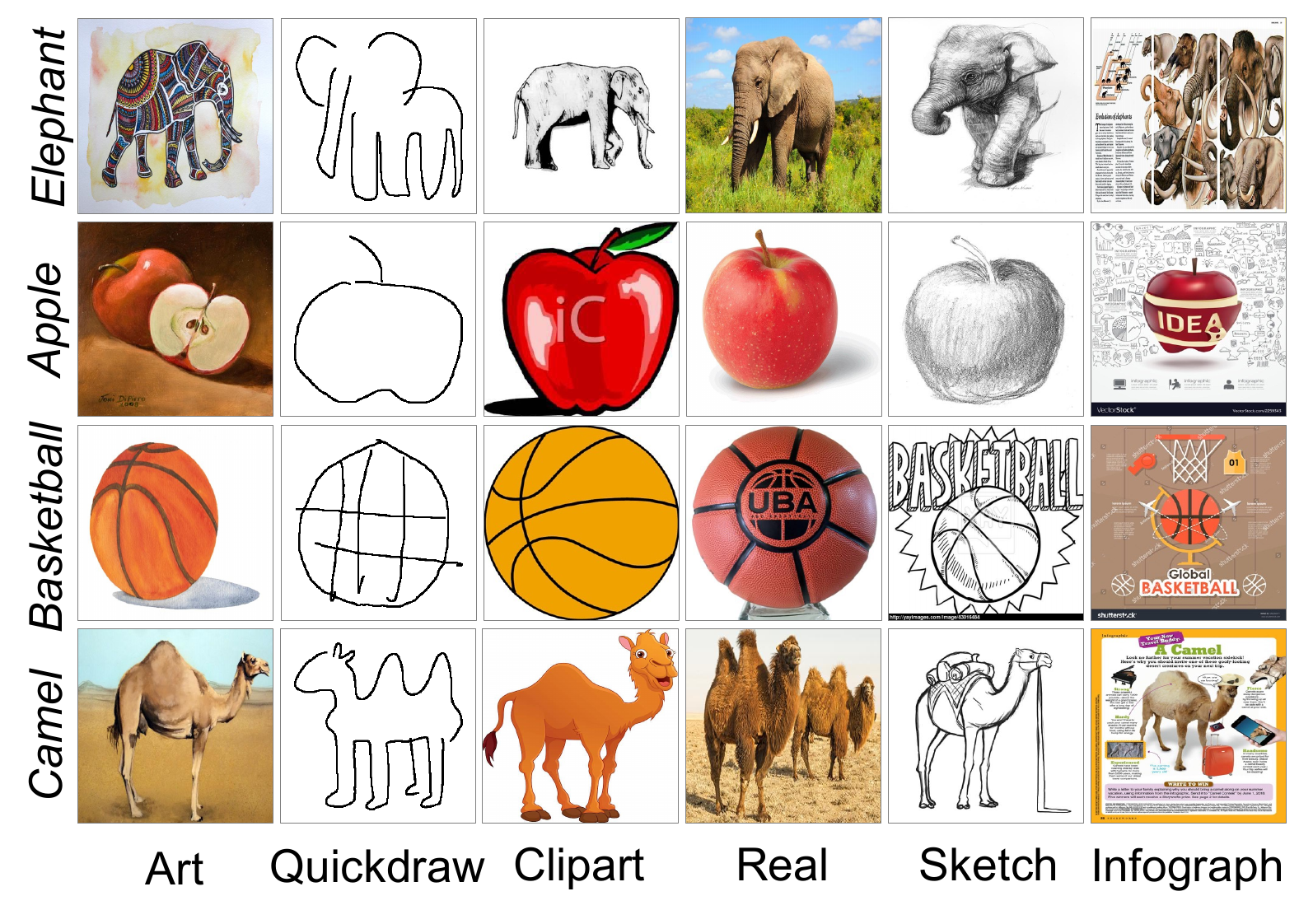}
        \label{fig:DomainNet}
    }\hfil
    
    \caption{(a) Performance comparison of ViT-S/16 (w/21.8M trainable parameters) with ERM and the ResNet-50 (w/25.6M) backbone with SOTA DG algorithms. ERM(RN-50), Fishr(RN-50), Swad(RN-50/Ensemble) denotes the ResNet-50 trained with ERM, Fish~\citep{DBLP:journals/corr/abs-2104-09937} , Fishr~\citep{rame2021fishr}, and Swad~\citep{cha2021swad}, respectively. (b) Image examples from DomainNet. Each row shows one class and each column shows one domain. }
\end{figure*}

\subsection{A Motivating Example: CNNs versus Vision Transformers}
We adopt DomainBed~\citep{DBLP:conf/iclr/GulrajaniL21} as the benchmark, which implements SOTA DG algorithms with ResNet50 as the backbone. We test the performance of ViT trained with ERM on this benchmark, without applying any DG method. The results are shown in Fig. \ref{fig:vit_comp}. To our surprise, ViT trained with ERM already outperforms CNNs with SOTA DG algorithms on several datasets, which indicates that the selection of the backbone architecture is potentially more important than the loss function in DG. In the remaining of this article, we will obtain a theoretical understanding of this phenomenon and improve ViT for DG by modifying its architecture.

\vspace{-1mm}
\subsection{Understanding from a Theoretical Perspective}
\vspace{-1mm}
The above experiment leads to an intriguing question: how does the backbone architecture impact the network's performance in DG? In this subsection, we endeavor to answer this question by extending the algorithmic alignment framework~\citep{xu2019what} to the DG setting.

To have a tractable analysis for nonlinear function approximation, we first make an assumption on the distribution shift.

\begin{assumption}\label{assumption:ds} Denote $\mathcal{N}_1$ as the first module of the network (including one or multiple layers) of the network. Let $p_{\text{train},\mathcal{N}_1}(\bm{s})$ and $p_{\text{test},\mathcal{N}_1}(\bm{s})$ denote the probability density functions of features after $\mathcal{N}_1$. Assume that the support of the training feature distribution covers that of the test feature distribution, i.e., $\max_{\bm{s}} \frac{p_{\text{test},\mathcal{N}_1}(\bm{s})}{p_{\text{train},\mathcal{N}_1}(\bm{s})}\leq C$, where $C$ is a constant independent of the number of training samples.
\end{assumption}

\begin{remark} \textup{(Interpretations of Assumption \ref{assumption:ds}) This condition is practical in DG, especially when we have a pretrained model for disentanglement (e.g., on DomainBed~\citep{DBLP:conf/iclr/GulrajaniL21}). In Example \ref{exmp:dsprites}, the training distribution and test distribution have the same support. In DomainNet, although the elephants in quickdraw are visually different from the elephants in other domains, the quickdraw picture's attributes/features (e.g., big ears and long noise) are covered in the training domains. From a technical perspective, it is impossible for networks trained with gradient descent to approximate a wide range of nonlinear functions in the out-of-support regime~\citep{xu2020neural}. Thus, this condition is necessary if we do not impose strong constraints on the target functions.}
\end{remark}

We define several key concepts in DG. The target function is an invariant correlation across the training and test datasets. For simplicity, we assume that the labels are noise-free.

\begin{assumption}\label{assumption:invariant} (Invariant correlation) Assume there exists a function $g_c$ such that for training data, we have $g_c(\mathcal{N}_1(\bm{x}) ) = y, \forall x \in \mathcal{E}_{tr}$, and for test data, we have $\mathbb{P}_{D_{\text{te}}}[\|g_c(\mathcal{N}_1(\bm{x}) ) - y\| \leq \epsilon] > 1 - \delta$.
\end{assumption}

We then introduce the spurious correlation~\citep{wiles2021fine}, i.e., some attributes are correlated with labels in the training dataset but not in the test dataset. The spurious correlation exists only if the training distribution differs from the test distribution and this distinguishes DG from classic PAC-learning settings~\citep{xu2019what}.

\begin{assumption}\label{assumption:spurious} (Spurious correlation) Assume there exists a function $g_s$ such that for training data $g_s(\mathcal{N}_1(\bm{x}) ) = y, \forall x \in \mathcal{E}_{tr}$, and for test data, we have $\mathbb{P}_{D_{\text{te}}}[\|g_s(\mathcal{N}_1(\bm{x}) ) - y\| > \omega(\epsilon)] > 1 - \delta$.
\end{assumption}

The next theorem extends algorithmic alignment from IID generalization (Theorem \ref{thm:aa}) to DG.

\begin{theorem}\label{thm:arch} (Impact of Backbone Architecture in Domain Generalization) Denote $\mathcal{N}' = \{\mathcal{N}_2,\cdots, \mathcal{N}_n \}$. Assuming we train the neural network with ERM, and Assumption \ref{assumption:ds}, \ref{assumption:invariant}, \ref{assumption:spurious} hold, we have the following statements:
\begin{enumerate}
    \item If $\text{Alignment}(\mathcal{N}',g_c,\epsilon,\delta) \leq |\mathcal{E}_{tr}|$, we have $\mathbb{P}_{D_{te}}[\|\mathcal{N}(\bm{x}) - y\| \leq O(\epsilon)] > 1 - O(\delta)$;
    \item If $\text{Alignment}(\mathcal{N}',g_s,\epsilon,\delta) \leq |\mathcal{E}_{tr}|$, we have $\mathbb{P}_{D_{te}}[\|\mathcal{N}(\bm{x}) - y\| > \omega(\epsilon)] > 1 - O(\delta)$. 
\end{enumerate}
\end{theorem}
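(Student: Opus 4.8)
The plan is to exploit the compositional structure $\mathcal{N} = \mathcal{N}' \circ \mathcal{N}_1$, treating $\mathcal{N}_1$ as a fixed feature map (the disentangling module of the Remark) and $\mathcal{N}'$, its $n-1$ trailing modules, as the part learned by ERM on the induced features $\bm{s} = \mathcal{N}_1(\bm{x})$. Both statements then follow from a single three-step chain: (i) use the IID algorithmic-alignment guarantee (Theorem \ref{thm:aa}) to show the ERM-trained $\mathcal{N}'$ approximates the \emph{aligned} target — $g_c$ in Part 1, $g_s$ in Part 2 — on the training feature distribution; (ii) transport this approximation bound to the test feature distribution using the bounded density ratio of Assumption \ref{assumption:ds}; and (iii) close the argument with Assumption \ref{assumption:invariant} or \ref{assumption:spurious} together with a triangle inequality.

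For step (i), observe that on $\mathcal{E}_{tr}$ both $g_c(\mathcal{N}_1(\bm{x})) = y$ and $g_s(\mathcal{N}_1(\bm{x})) = y$, so each of $g_c, g_s$ is a label-consistent target for the feature-to-label map that ERM fits; the architectural alignment hypothesis is what selects which one is recovered. In Part 1 the hypothesis $\text{Alignment}(\mathcal{N}', g_c, \epsilon, \delta) \leq |\mathcal{E}_{tr}|$ says the available $|\mathcal{E}_{tr}|$ samples meet the sample-complexity budget for each module of $\mathcal{N}'$ to learn the corresponding piece of $g_c$'s decomposition; Theorem \ref{thm:aa} then yields $\mathbb{P}_{p_{\text{train},\mathcal{N}_1}}[\|\mathcal{N}'(\bm{s}) - g_c(\bm{s})\| \leq \epsilon] > 1 - \delta$, and symmetrically for $g_s$ in Part 2.

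For step (ii), the bad event $E = \{\bm{s} : \|\mathcal{N}'(\bm{s}) - g(\bm{s})\| > \epsilon\}$, with $g \in \{g_c, g_s\}$, satisfies $\int_E p_{\text{test},\mathcal{N}_1}(\bm{s})\,d\bm{s} \leq C \int_E p_{\text{train},\mathcal{N}_1}(\bm{s})\,d\bm{s} \leq C\delta$ directly from $\max_{\bm{s}} p_{\text{test},\mathcal{N}_1}(\bm{s})/p_{\text{train},\mathcal{N}_1}(\bm{s}) \leq C$, so the approximation survives on the test features with failure at most $C\delta$. Step (iii) then finishes each case. In Part 1, the triangle inequality gives $\|\mathcal{N}(\bm{x}) - y\| \leq \|\mathcal{N}'(\bm{s}) - g_c(\bm{s})\| + \|g_c(\bm{s}) - y\| \leq 2\epsilon$, and a union bound over the transfer event ($C\delta$) and the test-side invariance event of Assumption \ref{assumption:invariant} ($\delta$) yields total failure $(C+1)\delta = O(\delta)$, i.e.\ the claimed $O(\epsilon)$ error at probability $1 - O(\delta)$. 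In Part 2, the reverse triangle inequality gives $\|\mathcal{N}(\bm{x}) - y\| \geq \|g_s(\bm{s}) - y\| - \|\mathcal{N}'(\bm{s}) - g_s(\bm{s})\| > \omega(\epsilon) - \epsilon$; since $\omega(\epsilon)$ dominates $\epsilon$, the right-hand side is again $\omega(\epsilon)$, with the same $O(\delta)$ failure.

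I expect the crux — and the step most in need of care — to be step (i). ERM on $\mathcal{E}_{tr}$ is agnostic between $g_c$ and $g_s$ because both perfectly reproduce the training labels, so the argument that the trained network converges to the architecturally aligned function (rather than merely being able to represent it) must lean squarely on the algorithmic-alignment guarantee and on matching its sample-complexity budget to $|\mathcal{E}_{tr}|$; this is also what makes the two statements mutually exclusive in practice, since the architecture aligns predominantly with one correlation. A secondary, more mechanical obstacle is the asymptotic bookkeeping: verifying that $\omega(\epsilon) - \epsilon$ stays $\omega(\epsilon)$ and that the constant $C$, the factor in $2\epsilon$, and the $(C+1)\delta$ blow-up are all consistently absorbed into the $O(\cdot)$ and $\omega(\cdot)$ notation.
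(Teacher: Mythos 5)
Your proposal follows essentially the same route as the paper's proof: invoke the IID alignment guarantee (Theorem \ref{thm:aa}) on the training distribution, transfer the resulting bound to the test features via the bounded density ratio of Assumption \ref{assumption:ds} (the paper packages this as Lemma \ref{lem:ds}), and then combine with the test-side behaviour of $g_c$ or $g_s$ from Assumptions \ref{assumption:invariant} and \ref{assumption:spurious}. The only difference is cosmetic and in your favour: you close the argument with a triangle inequality plus a union bound over the two failure events, whereas the paper multiplies the two success probabilities, so your bookkeeping is, if anything, the more careful of the two.
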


\begin{remark} \textup{(Interpretations of Theorem \ref{thm:arch}) By choosing a sufficiently small $\epsilon$, only one of $\text{Alignment}(\mathcal{N}',g_c,\epsilon,\delta) \leq |\mathcal{E}_{tr}|$ and $\text{Alignment}(\mathcal{N}',g_s,\epsilon,\delta) \leq |\mathcal{E}_{tr}|$ holds. Thus, Theorem \ref{thm:arch} shows that the networks aligned with invariant correlations are more robust to distribution shifts. In Appendix \ref{subsec:synthetic_experiments}, we build a synthetic dataset that satisfies all the assumptions. The experimental results exactly match Theorem \ref{thm:arch}. In practical datasets, the labels may have colored noise, which depends on the spurious correlation. Under such circumstances, the network should rely on multiple correlations to fit the label well and the correlation that best aligns with the network will have the major impact on its performance. Please refer to Appendix \ref{sec:proof_arch} for the proof.}
\end{remark}


\paragraph{ViT with ERM versus CNNs with DG algorithms} We now use Theorem \ref{thm:arch} to explain the experiments in the last subsection. The first condition of Theorem \ref{thm:arch} shows that if the neural architecture aligns with the invariant correlation, ERM is sufficient to achieve a good performance. In some domains of OfficeHome or DomainNet, the shape attribute has an invariant correlation with the label, illustrated in Fig. \ref{fig:DomainNet}. On the contrary, a spurious correlation exists between the attribute texture and the label. According to the analysis in~\citet{park2022vision}, multi-head attentions (MHA) are low-pass filters with a \emph{shape} bias while convolutions are high-pass filters with a \emph{texture} bias. As a result, a ViT simply trained with ERM can outperform CNNs trained with SOTA DG algorithms.

To improve ViT's performance, Theorem \ref{thm:arch} suggests that we should exploit the properties of invariant correlations. In image recognition, objects are described by functional parts (e.g., visual attributes), with words associated with them~\citep{zhou2014object}. The configuration of the objects has a large degree of freedom, resulting in different shapes among one category. Therefore, functional parts are more fundamental than shape in image recognition and we will develop backbone architectures to capture them in the next section.

\section{Generalizable Mixture-of-Experts for Domain Generalization}
\label{sec:impl}
In this section, we propose Generalizable Mixture-of-Experts (GMoE) for domain generalization, supported by effective neural architecture design and theoretical analysis.
\subsection{Mixture-of-Experts Layer}
\label{sec:impl:moe}
In this subsection, we introduce the mixture-of-experts (MoE) layer, which is an essential component of GMoE. One ViT layer is composed of an MHA and an FFN. In the MoE layer, the FFN is replaced by mixture-of-experts and each expert is implemented by an FFN~\citep{DBLP:conf/iclr/ShazeerMMDLHD17}. Denoting the output of the MHA as $\bm{x}$, the output of the MoE layer with $N$ experts is given by
\begin{align}\label{eq:moe}
    f_{\text{MoE}}(\bm{x}) = \sum_{i=1}^N  G(\bm{x})_i \cdot E_{i}(\bm{x}) = \sum_{i=1}^N \text{TOP}_k(\text{Softmax}(\bm{W}\bm{x})) \cdot \bm{W}^2_{\text{FFN}_i} \phi(\bm{W}^1_{\text{FFN}_i}\bm{x}),
\end{align}
where $\bm{W}$ is the learnable parameter for the gate, $\bm{W}^1_{\text{FFN}_i}$ and $\bm{W}^2_{\text{FFN}_i}$ are learnable parameters for the $i$-th expert, $\phi(\cdot)$ is a nonlinear activation function, and $\text{TOP}_k(\cdot)$ operation is a one-hot embedding that sets all other elements in the output vector as zero except for the elements with the largest $k$ values where $k$ is a hyperparameter. Given $\bm{x}_{\text{in}}$ as the input of the MoE layer, the update is given by
\begin{align*}
    \bm{x} = f_{\text{MHA}}(\text{LN}(\bm{x}_{\text{in}})) + \bm{x}_{\text{in}} ,\quad \bm{x}_{\text{out}} = f_{\text{MoE}}(\text{LN}(\bm{x})) + \bm{x},
\end{align*}
where $f_{\text{MHA}}$ is the MHA layer, $\text{LN}$ represents layer normalization, and $\bm{x}_{\text{out}}$ is the output of the MoE layer.

\subsection{Visual Attributes, Conditional Statements, and Sparse MoEs}\label{sec:method}
\begin{wrapfigure}[12]{R}{0.30\textwidth}
\vspace{-1em}
    \begin{algorithm}[H]\label{alg:condition}
      \DontPrintSemicolon
      Define intervals $I_i \subset \mathbb{R}, i = 1, \cdots, M$\;
      Define functions $h_i, , i = 1, \cdots, M+1$\;
      \Switch(){$h_1(\bm{x})$}
      { \If{$ h_1(\bm{x}) \in I_i$}{apply $h_{i+1}$ to $\bm{x}$}}
    \caption{\texttt{Conditional Statements} }
    \end{algorithm}
\end{wrapfigure}
In real world image data, the label depends on multiple attributes. Capturing \emph{diverse} visual attributes is especially important for DG. For example, the definition of an \emph{elephant} in the Oxford dictionary is ``a very large animal with thick grey skin, large ears, two curved outer teeth called tusks, and a long nose called a trunk''. The definition involves three shape attributes (i.e., large ears, curved outer teeth, and a long nose) and one texture attribute (i.e., thick grey skin). In the IID ImageNet task, using the most discriminative attribute, i.e., the thick grey skin, is sufficient to achieve high accuracy~\citep{geirhos2018imagenet}. However, in DomainNet, elephants no longer have grey skins while the long nose and big ears are preserved and the network relying on grey skins will fail to generalize.

The conditional statement (i.e., IF/ELSE in programming), as shown in Algorithm \ref{alg:condition}, is a powerful tool to efficiently capture the visual attributes and combine them for DG. Suppose we train the network to recognize the elephants on DomainNet, as illustrated in the first row of Fig. \ref{fig:DomainNet}. For the elephants in different domains, shape and texture vary significantly while the visual attributes (large ears, curved teeth, long nose) are invariant across all the domains. Equipped with conditional statements, the recognition of the elephants can be expressed as ``if an animal has large ears, two curved outer teeth, and a long nose, then it is an elephant''. Then the subtasks are to recognize these visual attributes, which also requires conditional statements. For example, the operation for ``curved outer teeth'' is that ``if the patch belongs to the teeth, then we apply a shape filter to it''. In literature, the MoE layer is considered an effective approach to implement conditional computations~\citep{DBLP:conf/iclr/ShazeerMMDLHD17,DBLP:conf/nips/RiquelmePMNJPKH21}. We formalize this intuition in the next theorem.

\begin{theorem}\label{thm:moe} An MoE module in ~\eqref{eq:moe} with $N$ experts and $k=1$ aligns with the conditional statements in Algorithm \ref{alg:condition} with 
\begin{align}\label{eq:align_moe}
    \text{Alignment} = \left\{
\begin{aligned}
&(N+1) \cdot \max\left( \mathcal{M}^*_{\mathscr{P}}, \mathcal{M}(G, h_{1},\epsilon,\delta) \right), && \text{if } N < M,\\
&(N+1) \cdot \max \left( \max_{i\in \{1,\cdots, M\} } \mathcal{M}(f_{\text{FFN}_i}, h_{i+1},\epsilon,\delta), \mathcal{M}(G, h_{1},\epsilon,\delta) \right), &&\text{if } N \geq M,  \\
\end{aligned}
\right.
\end{align}
where $\mathcal{M}(\cdot,\cdot,\cdot,\cdot)$ is defined in Definition \ref{def:aa}, and $\mathcal{M}^*_{\mathscr{P}}$ is the optimal objective value of the following optimization problem:
\begin{equation}\label{eq:sc_opt}
\begin{aligned}
\mathscr{P}:&\underset{\mathcal{I}_1, \cdots \mathcal{I}_N}{\text{minimize}}
& &  \max_{i \in \{1,\cdots,N \} } \quad \mathcal{M}(f_{\text{FFN}_i}, ([1_{I_j}]_{j\in \mathcal{I}_{i} } \circ h_1)^T \cdot [h_j]_{j\in \mathcal{I}_{i} },\epsilon,\delta) \\
& \text{ subject to }
& & \cup_{i=1}^N \mathcal{I}_i = \{2,3, \cdots, M+1\},
\end{aligned},
\end{equation}
where $1_{I_j}$ is the indicator function on interval $I_j$.
\end{theorem}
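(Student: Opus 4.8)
The plan is to realize Algorithm \ref{alg:condition} as a single target function and then exhibit an explicit module-to-subfunction assignment for the MoE layer in \eqref{eq:moe}, after which the alignment value in \eqref{eq:align_moe} is read off directly from Definition \ref{def:aa}. First I would rewrite the conditional statement in closed form: since the intervals $I_1, \dots, I_M$ partition the range of the switch variable, the output of Algorithm \ref{alg:condition} on input $\bm{x}$ is exactly $g(\bm{x}) = \sum_{i=1}^{M} 1_{I_i}(h_1(\bm{x})) \, h_{i+1}(\bm{x})$, a sum of $M$ indicator-gated branches. The MoE layer with $k=1$ has $n = N+1$ learnable modules, namely the single gate $G$ and the $N$ experts $E_1, \dots, E_N$, which fixes the prefactor $n = N+1$ in the alignment value; it then remains to assign each module a subfunction of $g$ and to bound the per-module complexities $\mathcal{M}(\cdot, \cdot, \epsilon, \delta)$.

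For the case $N \ge M$ I would assign the gate to reproduce $h_1$ together with the fixed (non-learned) thresholding that maps $h_1(\bm{x})$ to the index $i$ with $h_1(\bm{x}) \in I_i$, and assign expert $E_i$ the branch $h_{i+1}$ for $i \le M$ (the remaining $N-M$ experts are never selected and can be taken to be trivial). Substituting these subfunctions into \eqref{eq:moe}, the top-$1$ gate activates exactly the expert whose interval contains $h_1(\bm{x})$, so the MoE output collapses to $1_{I_i}(h_1(\bm{x})) \, h_{i+1}(\bm{x})$ for that $i$, coinciding with $g(\bm{x})$; this establishes alignment in the sense of Definition \ref{def:aa}. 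The per-module complexities are $\mathcal{M}(G, h_1, \epsilon, \delta)$ for the gate and $\mathcal{M}(f_{\text{FFN}_i}, h_{i+1}, \epsilon, \delta)$ for the active experts, and taking $n \cdot \max_i$ yields the second branch of \eqref{eq:align_moe}.

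For the case $N < M$ there are too few experts for a one-to-one assignment, so I would group the branches by choosing a cover $\mathcal{I}_1, \dots, \mathcal{I}_N$ of $\{2, \dots, M+1\}$ and let expert $E_i$ learn the combined branch $([1_{I_j}]_{j \in \mathcal{I}_i} \circ h_1)^T \cdot [h_j]_{j \in \mathcal{I}_i}$, i.e. the restriction of $g$ to the intervals it owns, while the gate still realizes $h_1$ and routes $\bm{x}$ to the expert owning the active interval. The same substitution argument shows the reconstructed MoE equals $g$, so alignment holds for every admissible cover; since we report the tightest attainable alignment, I would optimize the expert-side complexity over the cover, which is precisely the program $\mathscr{P}$ in \eqref{eq:sc_opt} with optimal value $\mathcal{M}^*_{\mathscr{P}}$, giving the first branch of \eqref{eq:align_moe}.

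The main obstacle I anticipate is the gate step: I must verify that the fixed $\text{Softmax}$--$\text{TOP}_1$ head can convert the learned quantity $h_1(\bm{x})$ into the correct one-hot interval selection without incurring extra sample complexity beyond $\mathcal{M}(G, h_1, \epsilon, \delta)$, and that the retained top-$1$ weight can be normalized to $1$ so the reconstruction of $g$ is exact rather than merely proportional. Handling the measure-zero cases where $h_1(\bm{x})$ lands on an interval endpoint, and confirming that the $\epsilon$--$\delta$ guarantees of the branch subfunctions compose through the discontinuous routing decision, are the remaining technical points; these parallel the IID alignment result (Theorem \ref{thm:aa}) but must be rechecked for the hard gating map.
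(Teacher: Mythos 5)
Your proposal matches the paper's proof essentially step for step: both express the conditional statement as indicator-gated branches, assign the gate the function $h_1$ (composed with the fixed interval thresholding $H$), assign one expert per branch when $N \geq M$ with the surplus experts trivial, group branches via a cover of $\{2,\dots,M+1\}$ and optimize over covers to obtain $\mathscr{P}$ when $N < M$, and read off the $(N+1)\cdot\max$ alignment value from Definition \ref{def:aa}. The technical caveats you flag at the end (exactness of the softmax--top-$1$ reconstruction, interval endpoints, composition of $\epsilon$--$\delta$ guarantees through the hard routing) are real but are also left unaddressed in the paper's own argument, which simply asserts that the substituted modules simulate Algorithm \ref{alg:condition}.
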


\begin{remark} \textup{(Interpretations of Theorem \ref{thm:moe}) In algorithmic alignment, the network better aligns with the algorithm if the alignment value in \eqref{eq:alignment} is lower. The alignment value between MoE and conditional statements depends on the product of $N+1$ and a sample complexity term. When we increase the number of experts $N$, the alignment value first decreases as multiple experts decompose the original conditional statements into several simpler tasks. As we further increase $N$, the alignment value increases because of the factor $N+1$ in the product. Therefore, the MoE aligns better with conditional statements than with the original FFN (i.e., $N=1$). In addition, to minimize \eqref{eq:sc_opt}, similar conditional statements should be grouped together. By experiments in Section \ref{subsec:model_analysis} and Appendix \ref{subsec:cub_dg}, we find that sparse MoE layers are indeed experts for visual attributes, and similar visual attributes are handled by one expert. Please refer to Appendix \ref{sec:proof_moe} for the proof.}
\end{remark}



%

\subsection{Adapting MoE to Domain Generalization}
\label{sec:impl:diff_teq}
In literature, there are several variants of MoE architectures, e.g., ~\citet{DBLP:conf/nips/RiquelmePMNJPKH21,fedus2022review}, and we should identify one for DG. By algorithmic alignment, in order to achieve a better generalization, the architecture of sparse MoEs should be designed to effectively handle visual attributes. In the following, we discuss our architecture design for this purpose.

\paragraph{Routing scheme} Linear routers (i.e., \eqref{eq:moe}) are often adopted in MoEs for vision tasks~\citep{DBLP:conf/nips/RiquelmePMNJPKH21} while recent studies in NLP show that the cosine router achieves better performance in cross-lingual language tasks~\citep{DBLP:journals/corr/abs-2204-09179}. For the cosine router, given input $\bm{x} \in \mathbb{R}^d$, the embedding $\bm{W}\bm{x} \in \mathbb{R}^{d_e}$ is first projected onto a hypersphere, followed by multiplying a learned embedding $\bm{E} \in \mathbb{R}^{d_e \times N}$. Specifically, the expression for the gate is given by
\begin{align*}
    G(\bm{x}) = \text{TOP}_k\left(\text{Softmax}\left(\frac{\bm{E}^T\bm{W}\bm{x}}{\tau \|\bm{W}\bm{x}\| \|\bm{E} \|}  \right) \right),
\end{align*}
where $\tau$ is a hyper-parameter. In the view of image processing, $\bm{E}$ can be interpreted as the cookbook for visual attributes~\citep{ferrari2007learning,zhou2014object} and the dot product between $\bm{E}$ and $\bm{W}\bm{x}$ with $\ell_2$ normalization is a matched filter. We opine that the linear router would face difficulty in DG. For example, the elephant image (and its all patches) in the Clipart domain is likely more similar to other images in the Clipart domain than in other domains. The issue can be alleviated with a codebook for visual attributes and matched filters for detecting them. Please refer to Appendix~\ref{subsec:ablation_model_design} for the ablation study.

\paragraph{Number of MoE layers} \emph{Every-two} and \emph{last-two} are two commonly adopted placement methods in existing MoE studies~\citep{DBLP:conf/nips/RiquelmePMNJPKH21,DBLP:conf/iclr/LepikhinLXCFHKS21}. Specifically, every-two refers to replacing the even layer's FFN with MoE, and last-two refers to placing MoE at the last two even layers. For IID generalization, every-two often outperforms last-two~\citep{DBLP:conf/nips/RiquelmePMNJPKH21}. We argue that last-two is more suitable for DG as the conditional sentences for processing visual attributes are high-level. From experiments, we empirically find that last-two achieves better performance than every-two with fewer computations. Please refer to Appendix~\ref{subsec:detailed_arch} for more discussions and Appendix~\ref{subsec:ablation_model_design} for the ablation study.


The overall backbone architecture of GMoE is shown in Fig. \ref{fig:gmoe}. To train diverse experts, we adopt the perturbation trick and load balance loss as in~\citet{DBLP:conf/nips/RiquelmePMNJPKH21}. Due to space limitation, we leave them in Appendix~\ref{subsec:aux_loss}. 

\begin{figure}[tp]
\centering
\vspace{-4mm}
\includegraphics[width=0.8\textwidth]{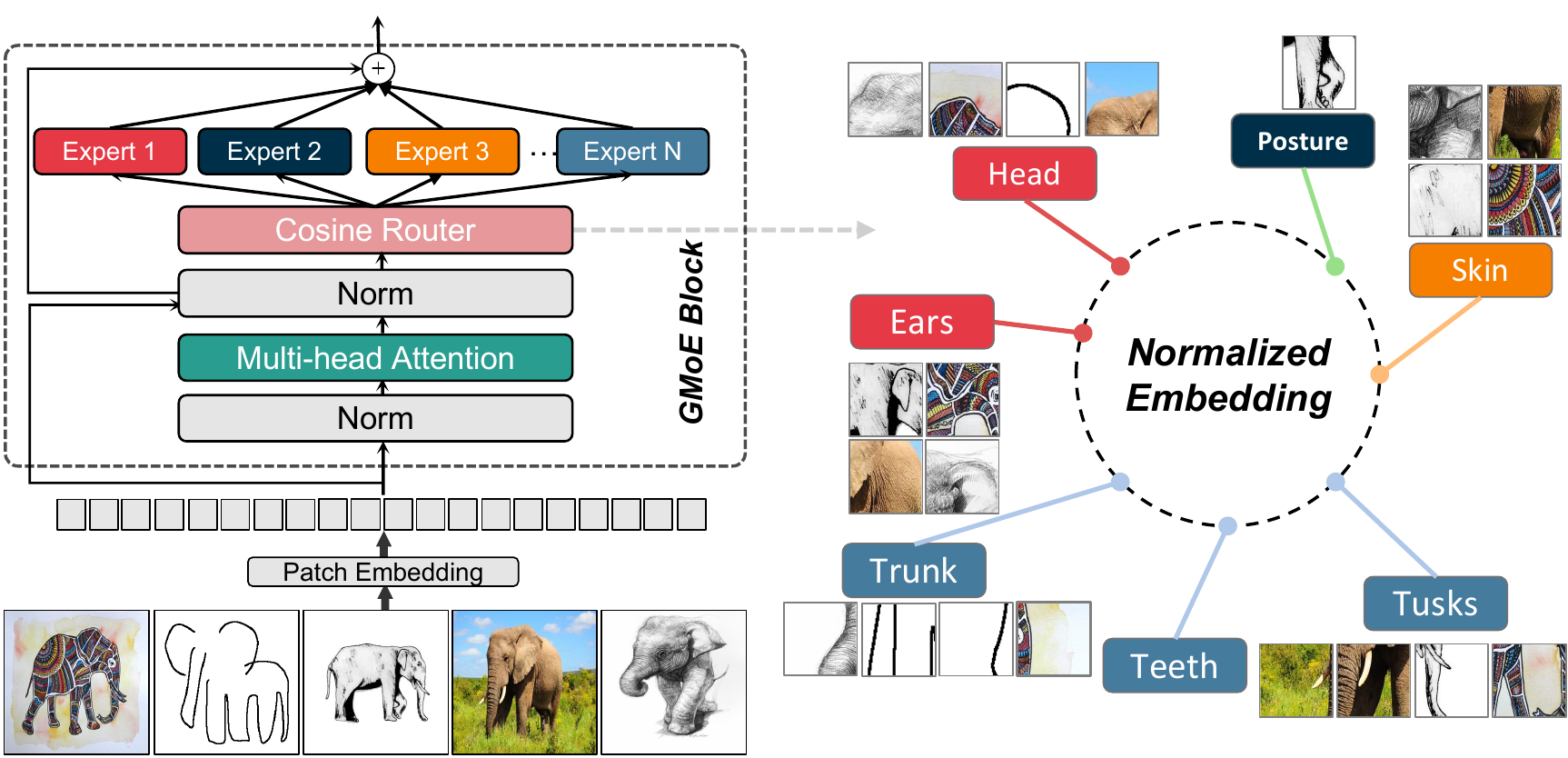}
\caption{Overview architecture of GMoE. The cosine router distributes normalized image patches of different visual attributes to corresponding experts. Our analysis and experiments (in Section \ref{sec:method} and Section \ref{subsec:model_analysis}) demonstrate that an expert is potentially responsible for a group of similar visual attributes.}
\vspace{-4mm}
\label{fig:gmoe}
\end{figure}

\section{Experimental Results}
\label{sec:exps}
In this section, we evaluate the performance of GMoE on large-scale DG datasets and present model analysis to understand GMoE.

\subsection{DomainBed Results}
\label{subsec:domainbed_results}
In this subsection, we evaluate GMoE on DomainBed~\citep{DBLP:conf/iclr/GulrajaniL21} with $8$ benchmark datasets: PACS, VLCS, OfficeHome, TerraIncognita, DomainNet, SVIRO, Wilds-Camelyon and Wilds-FMOW. Detailed information on datasets and evaluation protocols are provided in Appendix \ref{subsec:domainbed}. The experiments are averaged over 3 runs as suggested in \citep{DBLP:conf/iclr/GulrajaniL21}.

We present results in Table~\ref{tab:main_results} with \textbf{train-validation selection}, which include baseline methods and recent DG algorithms and GMoE trained with ERM. The results demonstrate that GMoE without DG algorithms already outperforms counterparts on almost all the datasets. Meanwhile, GMoE has excellent performance in \textbf{leave-one-domain-out criterion}, and we leave the results in Appendix \ref{subsec:lodo} due to space limit. In the lower part of Table~\ref{tab:main_results}, we test our methods on three large-scale datasets: SVIRO, Wilds-Camelyon, and Wilds-FMOW. The three datasets capture real-world distribution shifts across a diverse range of domains. We adopt the data preprocessing and domain split in DomainBed. As there is no previous study conducting experiments on these datasets with DomainBed criterion, we only report the results of our methods, which reveal that GMoE outperforms the other two baselines.
\vspace{-3mm}

\begin{table}[t]
\vspace{-2em}
\setlength{\tabcolsep}{5pt}
\renewcommand{\arraystretch}{1.0}
\centering
\caption{Overall out-of-domain accuracies with train-validation selection criterion. The best result is highlighted in \textbf{bold}. GMoE achieves the \textbf{best} performance on PACS, VLCS, OfficeHome, and DomainNet while ranking in the \underline{third-best} on TerraIncognita.}
\label{tab:main_results}
\resizebox{\textwidth}{!}{%
\begin{tabular}{c|ccccc}
\toprule
\rowcolor{COLOR_MEAN}
\textbf{Algorithm} & \textbf{PACS} & \textbf{VLCS} & \textbf{OfficeHome} & \textbf{TerraInc} & \textbf{DomainNet} \\ \midrule
ERM (ResNet50)~\citep{vapnik1991principles}   & 85.7 $\pm$ 0.5  & 77.4 $\pm$ 0.3  & 67.5 $\pm$ 0.5        & 47.2 $\pm$ 0.4      & 41.2 $\pm$ 0.2       \\
IRM [ArXiv 20]~\citep{DBLP:journals/corr/abs-1907-02893}   & 83.5 $\pm$ 0.8  & 78.5 $\pm$ 0.5  & 64.3 $\pm$ 2.2        & 47.6 $\pm$ 0.8      & 33.9 $\pm$ 2.8       \\
DANN [JMLR 16]~\citep{DBLP:journals/jmlr/GaninUAGLLML16}  & 84.6 $\pm$ 1.1  & 78.7 $\pm$ 0.3  & 68.6 $\pm$ 0.4        & 46.4 $\pm$ 0.8      & 41.8 $\pm$ 0.2       \\
CORAL [ECCV 16]~\citep{DBLP:conf/eccv/SunS16} & 86.0 $\pm$ 0.2  & 77.7 $\pm$ 0.5  & 68.6 $\pm$ 0.4        & 46.4 $\pm$ 0.8      & 41.8 $\pm$ 0.2       \\
MMD [CVPR 18]~\citep{DBLP:conf/cvpr/LiPWK18}   & 85.0 $\pm$ 0.2  & 76.7 $\pm$ 0.9  & 67.7 $\pm$ 0.1        & 42.2 $\pm$ 1.4      & 39.4 $\pm$ 0.8       \\
FISH [ICLR 22]~\citep{DBLP:journals/corr/abs-2104-09937}  & 85.5 $\pm$ 0.3  & 77.8 $\pm$ 0.3  & 68.6 $\pm$ 0.4        & 45.1 $\pm$ 1.3      & 42.7 $\pm$ 0.2       \\
SWAD [NeurIPS 21]~\citep{DBLP:conf/nips/ChaCLCPLP21}  & 88.1 $\pm$ 0.1  & 79.1 $\pm$ 0.1  & 70.6 $\pm$ 0.2        & 50.0 $\pm$ 0.3      & 46.5 $\pm$ 0.1       \\
Fishr [ICML 22]~\citep{rame2021fishr}  &  85.5 $\pm$ 0.2  & 77.8 $\pm$ 0.2  & 68.6 $\pm$ 0.2        & 47.4 $\pm$ 1.6 & 41.7 $\pm$ 0.0       \\
MIRO [ECCV 22]~\citep{DBLP:journals/corr/abs-2203-10789}  & 85.4 $\pm$ 0.4  & 79.0 $\pm$ 0.0  & 70.5 $\pm$ 0.4        & \textbf{50.4 $\pm$ 1.1}     & 44.3 $\pm$ 0.2       \\
ERM (ViT-S/16) [ICLR 21]~\citep{DBLP:conf/iclr/DosovitskiyB0WZ21}   & 86.2 $\pm$ 0.1  & 79.7 $\pm$ 0.0  & 72.2 $\pm$ 0.4        & 42.0 $\pm$ 0.8      & 47.3 $\pm$ 0.2       \\
\rowcolor{ROW_COLOR}
\textbf{GMoE-S/16 (Ours)} & \textbf{88.1 $\pm$ 0.1} & \textbf{80.2 $\pm$ 0.2}  & \textbf{74.2   $\pm$ 0.4}  & \underline{48.5} $\pm$ 0.4 & \textbf{48.7 $\pm$ 0.2} \\
\midrule
\rowcolor{COLOR_MEAN}
\midrule
\textbf{Algorithms} & \multicolumn{2}{c}{\textbf{SVIRO}}   & \multicolumn{2}{c}{\textbf{Wilds-Camelyon}}   & \textbf{Wilds-FMOW} \\ \midrule
ERM (ResNet50)~\citep{vapnik1991principles}     & \multicolumn{2}{c}{85.7 $\pm$ 0.1} & \multicolumn{2}{c}{93.1 $\pm$ 0.2} & 40.6 $\pm$ 0.4 \\
ERM (ViT-S/16) [ICLR 21]~\citep{DBLP:conf/iclr/DosovitskiyB0WZ21}     & \multicolumn{2}{c}{89.6 $\pm$ 0.0} & \multicolumn{2}{c}{91.1 $\pm$ 0.1} & 44.8 $\pm$ 0.2 \\
\rowcolor{ROW_COLOR}
\textbf{GMoE-S/16 (Ours)}              & \multicolumn{2}{c}{\textbf{90.3 $\pm$ 0.1}} & \multicolumn{2}{c}{\textbf{93.7 $\pm$ 0.2}} & \textbf{46.6 $\pm$ 0.4} \\

\bottomrule
\end{tabular}%
}
\end{table}
\subsection{GMoE with DG Algorithms}
\label{subsec:loss_design}
GMoE's generalization ability comes from its internal backbone architecture, which is orthogonal to existing DG algorithms. This implies that the DG algorithms can be applied to improve the GMoE's performance.  To validate this idea, we apply two DG algorithms to GMoE, including one modifying loss functions approaches (Fish) and one adopting model ensemble (Swad). The results in Table~\ref{tab:with_loss} demonstrate that adopting GMoE instead of ResNet-50 brings significant accuracy promotion to these DG algorithms. Experiments on GMoE with more DG algorithms are in Appendix~\ref{subsec:ablation_dg_algos}.
\vspace{-3mm}
\begin{wraptable}[11]{R}{0.42\textwidth}
\vspace{-4mm}
\small
    \caption{GMoE trained with DG algorithms.}
    \label{tab:with_loss}
    \renewcommand{\arraystretch}{1.1}
    \setlength{\tabcolsep}{10pt}
    \begin{tabular}{c|c}
    \toprule
    \rowcolor{COLOR_MEAN}
    \textbf{Algorithm} & \textbf{DomainNet} \\ \midrule
    GMoE & 48.7 \\ \hline
    Fish~\citep{rame2021fishr} & 42.7 \\
    \rowcolor{ROW_COLOR}
    \textbf{GMoE w/Fish}  &      \textbf{48.8}              \\ \hline
    Swad~\citep{DBLP:conf/nips/ChaCLCPLP21} & 46.5 \\
    \rowcolor{ROW_COLOR}
    \textbf{GMoE w/Swad}  & \textbf{49.6}              \\\bottomrule
    \end{tabular}%
\vspace{-4mm}
\end{wraptable}


\subsection{Single-Source Domain Generalization Results}
In this subsection, we create a challenging task, \emph{single-source domain generalization}, to focus on generalization ability of backbone architecture. Specifically, we train the model only on data from one domain, and then test the model on multiple domains to validate its performance across all domains. This is a challenging task as we cannot rely on multiple domains to identify invariant correlations, and popular DG algorithms cannot be applied. We compare several models mentioned in above analysis (\eg ResNet, ViT, GMoE) with different scale of parameters, including their float-point-operations per second (flops), IID and OOD accuracy. From the results in Table~\ref{tab:purely_generalization}, we see that GMoE's OOD generalization gain over ResNet or ViT is much larger than that in the IID setting, which shows that GMoE is suitable for challenging domain generalization. Due to space limitation, we leave experiments with other training domains in Appendix~\ref{subsec:purely_ood_results}.
\vspace{-3mm}

\subsection{Model Analysis}
\label{subsec:model_analysis}
In this subsection, we present diagnostic datasets to study the connections between MoE layer and the visual attributes.



\begin{table}[t]
\vspace{-2em}
\centering
\caption{Single-source DG accuracy (\%). Models are trained on Paint domain and tested (1) on Paint validation set (2) the rest 5 domains' validation sets on DomainNet. The flops are reference values to compare the model's computational efficiency. \textbf{IID Imp.} denotes IID improvement on Paint's validation set comparing with ResNet50. \textbf{OOD Imp.} denotes average OOD improvement across 5 test domains.}
\label{tab:purely_generalization}
\setlength{\tabcolsep}{6pt}
\renewcommand{\arraystretch}{1.2}
\footnotesize
\begin{tabular}{c|ccccccc|cc}
\toprule
\rowcolor{ROW_COLOR}
\textbf{MACs} & \cellcolor{COLOR_MEAN} \textbf{Paint} & \textbf{Clipart} & \textbf{Info} & \cellcolor{COLOR_MEAN} \textbf{Paint} & \textbf{Quick} & \textbf{Real} & \textbf{Sketch} & \cellcolor{COLOR_MEAN} \textbf{IID Imp.} & \textbf{OOD Imp.} \\ \midrule
4.1G & ResNet50 & 37.1 & 12.9 & 62.7 & 2.2 & 49.3 & 33.3 & - & - \\
7.9G & ResNet101 & 40.5 & 13.1 & 63.4 & 3.1 & 51.2 & 35.4 & 1.1\% & 12.4\% \\
4.6G & ViT-S/16 & 42.7 & 15.9 & 69.0 & 5.0 & \textbf{56.4} & 37.0 & 10.0\% & 38.6\% \\
4.8G & GMoE-S/16 & \textbf{43.5} & \textbf{16.1} & \textbf{69.3} & \textbf{5.3} & \textbf{56.4} & \textbf{38.0} & \textbf{10.5\%} & \textbf{42.3\%} \\
\bottomrule
\end{tabular}%
\vspace{-4mm}
\end{table}
\paragraph{Diagnostic datasets: CUB-DG}
We create CUB-DG from the original Caltech-UCSD Birds (CUB) dataset~\citep{wah2011caltech}. We stylize the original images into another three domains, Candy, Mosaic and Udnie. The examples of CUB-DG are presented in Fig. \ref{fig:cub-dg}. We evaluate GMoE and other DG algorithms that address domain-invariant representation (e.g., DANN). The results are in Appendix~\ref{subsec:cub_dg}, which demonstrate superior performance of GMoE compared with other DG algorithms. CUB-DG datasets provides rich visual attributes (e.g. beak's shape, belly's color) for each image. That additional information enables us to measure the correlation between visual attributes and the router's expert selection.
\vspace{-3mm}

\paragraph{Visual Attributes \& Experts Correlation}
We choose GMoE-S/16 with $6$ experts in each MoE layer. After training the model on CUB-DG, we perform forward passing with training images and save the routers top-1 selection. Since the MoE model routes patches instead of images and CUB-DG has provided the location of visual attributes, we first match a visual attribute with its nearest $9$ patches, and then correlate the visual attribute and its current $9$ patches' experts. In Fig. \ref{fig:correlation}, we show the 2D histogram correlation between the selected experts and attributes. Without any supervision signal for visual attributes, the ability to correlate attributes automatically emerges during training. Specifically, 1) each expert focus on distinct visual attributes; and 2) similar attributes are attended by the same expert (e.g., the left wing and the right wing are both attended by e4). This verifies the predictions by Theorem \ref{thm:moe}.
\vspace{-3mm}
\paragraph{Expert Selection} To further understand the expert selections of the entire image, we record the router's selection for each patch (in GMoE-S/16, we process an image into $16 \times 16$ patches), and then visualize the top-1 selections for each patch in Fig. \ref{fig:expert}. In this image, we mark the selected expert for each patch with a black number and draw different visual attributes of the bird (\eg beak and tail types) with large circles. We see the consistent relationship between Fig. \ref{fig:correlation} and Fig. \ref{fig:expert}. In detail, we see (1) experts $0$ and $2$ are consistently routed with patches in the \textit{background} area; (2) the \textit{left wing}, \textit{right wing}, \textit{beak} and \textit{tail} areas are attended by expert $3$; (3) the \textit{left leg} and \textit{right leg} areas are attended by expert $4$. More examples are given in Appendix \ref{subsec:cub_dg}.

\vspace{-2mm}
\begin{figure*}[ht]
    \centering
    \subfigure[CUB-DG examples.]
    {
        \includegraphics[width=0.24\columnwidth]{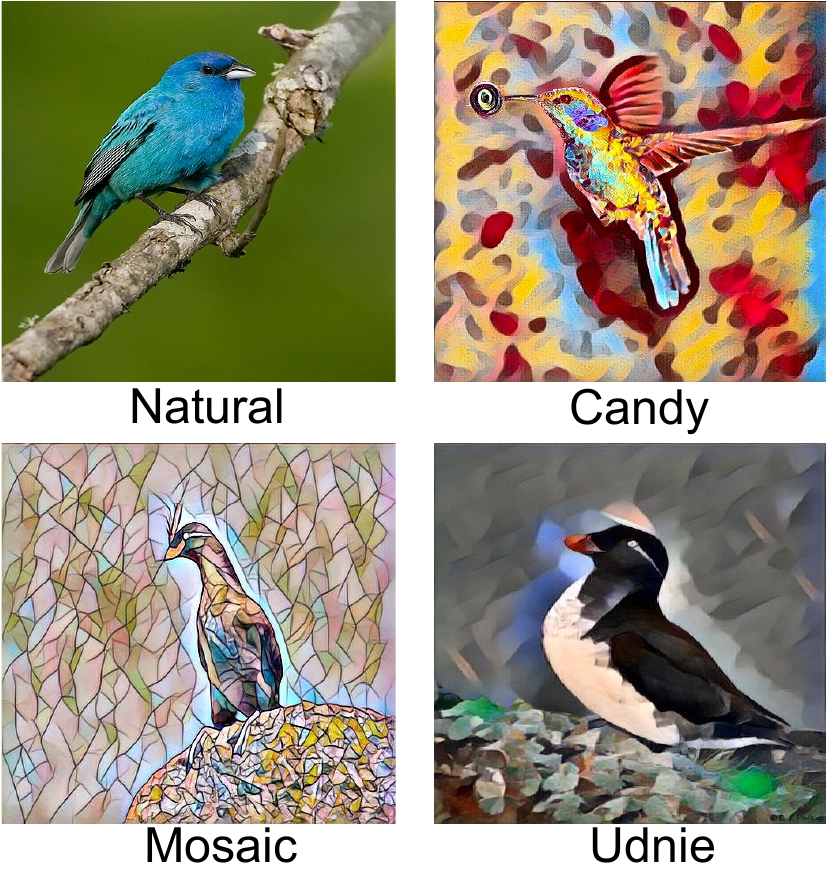}
        \label{fig:cub-dg}
    }\hfil
    \subfigure[Visual attributes.]
    {
        \includegraphics[width=0.31\columnwidth]{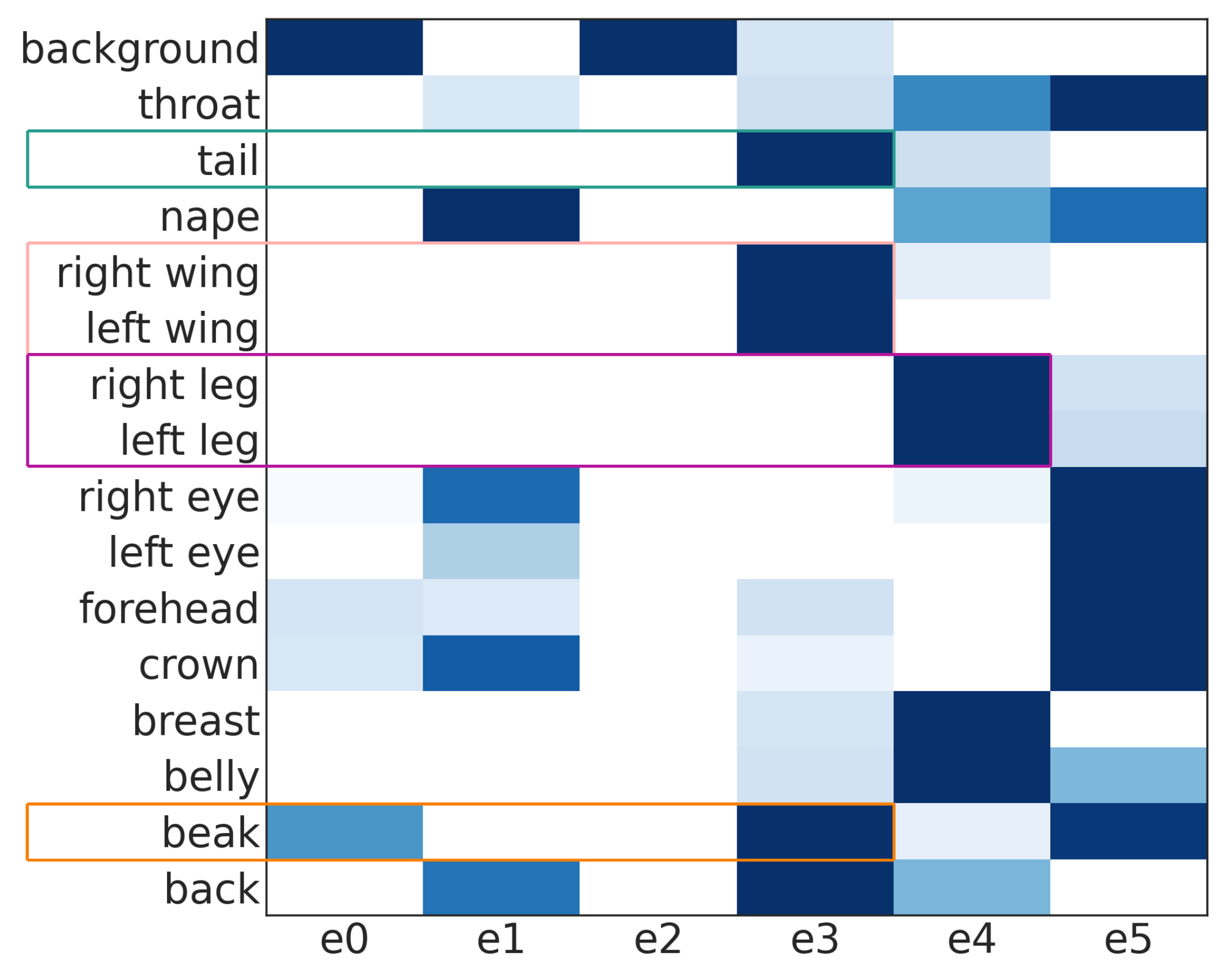}
        \label{fig:correlation}
    }\hfil
    \subfigure[Expert selection.]
    {
        \includegraphics[width=0.34\columnwidth]{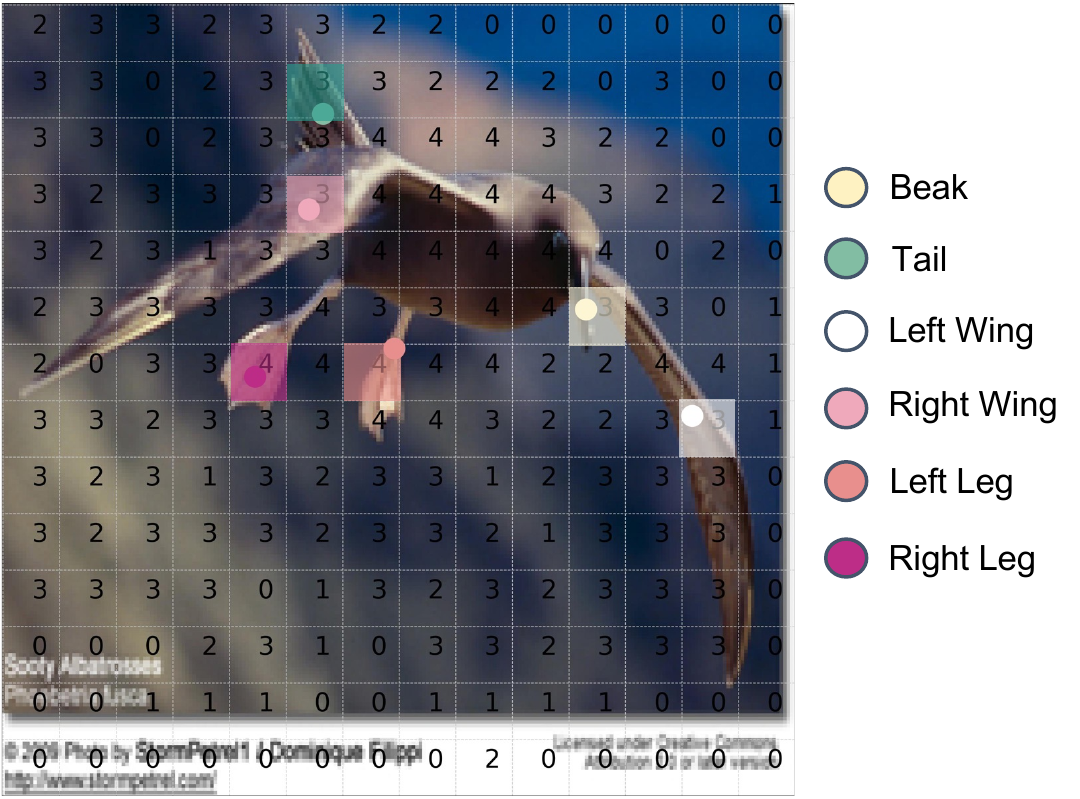}
        \label{fig:expert}
    }
    \vspace{-2mm}
    \caption{(a) Examples of CUB-DG datasets from four domains. (b)The $y$-axis corresponds to $15+1$ attributes ($15$ visual attributes + $1$ background). The $x$-axis corresponds to the selected expert id. (c) Finetuned GMoE's router decision of block-$10$. The image is from CUB-DG's natural domain.}
    \vspace{-4mm}
\end{figure*}
\vspace{-2mm}

\section{Conclusions}
\vspace{-3mm}
This paper is an initial step in exploring the impact of the backbone architecture in domain generalization. We proved that a network is more robust to distribution shifts if its architecture aligns well with the invariant correlation, which is verified on synthetic and real datasets. Based on our theoretical analysis, we proposed GMoE and demonstrated its superior performance on DomainBed. As for future directions, it is interesting to develop novel backbone architectures for DG based on algorithmic alignment and classic computer vision.

\bibliographystyle{iclr2023_conference}
\bibliography{egbib}

 \clearpage
 \appendix
 \tableofcontents
 \section{Related Works}
\label{sec:related}
\subsection{Domain Generalization}
Domain Generalization (DG) aims to maintain the good performance of machine learning models even in the domains that are different from the training (source) domain.
The following are the categories of mainstream domain generalization research.

\textbf{(1) Invariant Learning}: Aligning domain distributions and finding invariance across domains has been often studied with empirical results and theoretical proofs~\citep{DBLP:journals/jmlr/GaninUAGLLML16,DBLP:conf/icml/0002CZG19}. Specifically, researchers have explicitly sought aligning feature distributions based on the maximum mean discrepancy (MMD)~\citep{DBLP:conf/cvpr/LiPWK18}, second order correlation~\citep{DBLP:conf/eccv/SunS16}, moment matching~\citep{DBLP:conf/iccv/PengBXHSW19}, etc.
Besides aligning feature distributions, Arjovsky et al.~\citep{DBLP:journals/corr/abs-1907-02893} proposed IRM to learn an ideal invariant classifier on top of the representation space, which has inspired many follow-up works~\citep{DBLP:conf/icml/KruegerCJ0BZPC21,DBLP:conf/nips/AhujaCZGBMR21,zhou2022sparse}. These studies implement invariant learning via loss function designs. In this paper, we show that if we have a suitable backbone architecture, ERM is sufficient to find the invariant correlations, which indicates that the backbone architecture is as important as loss function design.

\textbf{(2) Ensemble Learning and Meta-learning}: Apart from learning invariant features and correlations across the domain, model-specific or domain-specific information helps improve the performance on DG datasets. The ensemble learning methods combine different models to exploit model-specific information~\citep{DBLP:conf/icip/ManciniBC018,DBLP:journals/corr/abs-2011-12672,DBLP:journals/corr/abs-2110-10832,DBLP:journals/corr/abs-2203-04600}. In addition, SWAD~\citep{cha2021swad} inhibits models from being overfit to local sharp minima by averaging model weights below a validation loss threshold. Meta-learning-based approaches~\citep{DBLP:conf/aaai/LiYSH18,li2019feature,dou2019domain,zhang2021adaptive,DBLP:journals/corr/abs-2110-09410} leverage domain-specific feature to improve the accuracy on each domain.

\textbf{(3) Data Manipulation}: Diverse training data are helpful for improving generalization, and researchers have proposed different manipulation/augmentation techniques ~\citep{DBLP:conf/eccv/NazariK20,DBLP:conf/iclr/RiemerCALRTT19}, domain randomization~\citep{DBLP:conf/iccv/YueZZSKG19,DBLP:conf/iccv/ZakharovKI19}. Furthermore, a line of works~\citep{DBLP:conf/cvpr/QiaoZP20,DBLP:conf/nips/LiuLYW18,DBLP:conf/nips/ZhaoLYG0HCK19} have exploited generating data samples to enhance the model generalization ability.

\textbf{(4) Neural Architecture Search}: Recently, neural architecture search-based methods were employed for DG. In~\citet{DBLP:conf/icml/ZhangAX0C21}, the lottery ticket hypothesis was extended to DG settings: a biased full network contains an unbiased subnetwork that can achieve better OOD performance. The network pruning method was further adopted to identify such subnetworks. Differentiable architecture search was employed in~\citet{bai2021ood} to find a good CNN architecture for DG. These works are independent of backbone architecture design.

One unique advantage of the proposed method is that it is orthogonal to the above approaches, meaning that the performance of GMoE might be enhanced when combined with these approaches.

\subsection{Vision Transformers}
Originated from the machine translation tasks, Transformer~\citep{DBLP:conf/nips/VaswaniSPUJGKP17} has recently received great attention in computer vision~\citep{DBLP:conf/iclr/DosovitskiyB0WZ21,DBLP:conf/iccv/LiuL00W0LG21,wang2022image} for their unprecedented performance in image recognition, semantic segmentation, and visual question answering. In addition to strong performance, some recent works empirically demonstrate the robustness of the ViTs over CNNs in terms of adversarial noise~\citep{DBLP:journals/corr/abs-2110-07858} and distribution shifts~\citep{zhang2022delving}. Nevertheless, the theoretical understandings remain elusive and this paper theoretically validates these effects. The analysis also motivates us to employ mixture-of-experts to enhance the performance of ViT models.

\subsection{Sparse Mixture-of-Experts}
Mixture-of-Experts models, or MoEs, make use of the outputs from several sub-models (experts) through an input-dependent routing mechanism for stronger model performance~\citep{DBLP:journals/neco/JacobsJNH91,DBLP:journals/neco/JordanJ94}.
This training paradigm has led to the development of a plethora of methods for a wide ranges of applications~\citep{hu1997patient,tani1999learning}. However, integrating MoEs and big models will inevitably introduce even larger model sizes and longer inference time. Sparse MoEs~\citep{DBLP:conf/iclr/ShazeerMMDLHD17} were proposed with their routers to select only a few experts so that the inference time is on-par with the standalone counterpart. They were considered promising ways to scale up vision models~\citep{DBLP:conf/nips/RiquelmePMNJPKH21}. Some existing studies have applied MoE-type architecture to generalization tasks. \citep{guo2018multi} considers the muti-source domain adaptation in NLP, where one classifier is trained on each domain and a MoE is adopted to ensemble the classifiers. \citep{rahaman2021dynamic} investigates the systematic generalization problem and proposes to dynamically compose experts instead of selecting experts in MoEs. In this paper, we consider the domain generalization (DG) problem and prove that a backbone more aligned to invariant correlations is more robust to distribution shift. Based on the theory, we repurpose MoE for DG tasks and make several modifications for performance enhancement.

\subsection{Out-of-Distribution Generalization Theory}
The classic PAC learning~\citep{valiant1984theory} is only applicable to IID settings and recently there have been a line of works to investigate the theory of OOD generalization, including structured causal models (SCM)~\citep{DBLP:journals/corr/abs-1907-02893,DBLP:conf/nips/AhujaCZGBMR21,zhou2022sparse} and extrapolation theory~\citep{xu2020neural,ziyin2020neural,ye2021towards}. The SCM-based approaches often assume that the invariant correlation is linear~\citep{DBLP:journals/corr/abs-1907-02893,DBLP:conf/nips/AhujaCZGBMR21,zhou2022sparse}. The extrapolation theory focused on the situation where the supports of training distribution and test distribution are different~\citep{xu2020neural,ziyin2020neural,ye2021towards}, where the networks have very limited ability for nonlinear function approximation~\citep{xu2020neural}. Different from existing works, this paper studies the situation where the invariant correlation is an arbitrary nonlinear function, and one of our main focuses is to justify why these assumptions hold in DG of vision tasks.

\subsection{Algorithmic Alignment}
The algorithmic alignment framework~\citep{xu2019what} was originally proposed to understand the effectiveness of specialized network structures in reasoning tasks. Specifically, it characterizes the generalization bound of reasoning tasks by studying the alignment between the reasoning algorithm and the computation graph of the neural network. The underlying intuition is that if they align well, the neural network only needs to learn \emph{simple} functions, which leads to better sample efficiency. This framework was further extended in \citet{li2021does} to investigate the impact of backbone architectures on the robustness of noisy labels. This paper adopts algorithmic alignment to provide a novel perspective of DG and develops a new architecture based on it.

\subsection{Empirical Investigations of Distribution Shift} 
Prior to this work, some empirical studies~\citep{wiles2021fine,sivaprasad2021reappraising} systematically investigate the impact of the data argumentation, the optimizer, the backbone architecture, and DG algorithms on domain generalization datasets, drawing various conclusions. Nevertheless, they did not explain when a specific model will succeed in DG and how to design the backbone architecture to improve DG performance. As far as we know, our paper is an initial attempt to theoretically investigate these questions, which provides a novel perspective of DG. In this paper, we prove that a backbone is more robust to distribution shift if it aligns with the invariant correlation in the dataset and a better alignment leads to better performance in DG. A novel architecture is designed based on our theory. We believe these results will encourage further research in this direction.

\section{Theorems and Proofs}
\label{sec:proofs}
\subsection{Proof of Theorem \ref{thm:arch}}
\label{sec:proof_arch}
We first state the algorithmic alignment theorem in the IID setting and then extend it to the DG setting.
\begin{theorem}\label{thm:aa} (Alignment improves IID generalization;~\citep{xu2019what}) Fix $\epsilon$ and $\delta$. Given a target function $g$ and a neural network $\mathcal{N}$, suppose $\{\bm{x}_i\}_{i=1}^M$ are i.i.d. samples drawn from some distribution $D$, and let $y_i = g(\bm{x}_i)$. Assumptions:

\textbf{(a) Algorithm stability.} Let $A$ be a learning algorithm for
$\mathcal{N}_i$'s. Suppose $f = A(\{\bm{x}_i,y_i\})$ and $\hat{f} = A(\{\hat{\bm{x}}_i,y_i\})$. For any $\bm{x}$, $\|f(\bm{x}) - \hat{f}(\bm{x})\| \leq L_0 \cdot \max_i \|x_i - \hat{x}_i\|$, where $x_i$ is the $i$-th coordinate of $\bm{x}$.

\textbf{(b) Sequential learning.} We train $\mathcal{N}_i$ sequentially: $\mathcal{N}_1$ has input samples $\{\bm{x}_i^{(1)}, f_1(\bm{x}_i^{(1)}) \}_{i=1}^N$, with $\bm{x}_i^{(1)}$ obtained from the training dataset. For $j > 1$, the inputs $\hat{\bm{x}}_i^{(j)}$ for $\mathcal{N}_j$ are the outputs from the previous modules, but labels are generated by the correct functions $f_{j-1}, \cdots, f_1$ on $\hat{\bm{x}}_i^{(1)}$.

\textbf{(c) Lipschitzness.} The learned functions $\hat{f}_j$ satisfy $\|\hat{f}_j(\bm{x}) - \hat{f}_j(\hat{\bm{x} })\| \leq L_1\|\bm{x} - \hat{\bm{x}}\|$ for some $L_1$.

Under assumptions (a)(b)(c), $\text{Alignment}(\mathcal{N},g,\epsilon,\delta) \leq M$ implies there is a learning algorithm $A$ such that
\begin{align*}
    \mathbb{P}_{\bm{x} \sim D}[\|\mathcal{N}(\bm{x}) - g(\bm{x})\|\leq O(\epsilon)]  \geq 1 - O(\delta),
\end{align*}
where $\mathcal{N}$ is the network generated by $A$ on the training data $\{\bm{x}_i, y_i\}_{i=1}^M$.
\end{theorem}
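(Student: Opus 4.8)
The plan is to exploit the decomposition of $g$ into the sub-functions $f_1,\ldots,f_n$ induced by the computation graph of $\mathcal{N}$ (Definition~\ref{def:aa}), together with the alignment hypothesis, to argue that each module learns its own sub-function accurately, and then to control how the per-module errors propagate through the sequential composition dictated by assumption~(b). First I would unpack the hypothesis: since $\text{Alignment}(\mathcal{N},g,\epsilon,\delta) = n\max_i \mathcal{M}(f_i,\mathcal{N}_i,\epsilon,\delta) \le M$, each module satisfies $\mathcal{M}(f_i,\mathcal{N}_i,\epsilon,\delta) \le M/n$. By the meaning of the sample-complexity measure, this guarantees a learning algorithm $A$ under which $\mathcal{N}_i$, trained on $M/n$ i.i.d. samples drawn from the relevant distribution, recovers $f_i$ to precision $\epsilon$ with failure probability at most $\delta$.

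The core of the argument is an induction on the module index that tracks $e_j$, the deviation of the $j$-th module's realized output from the ideal value $f_j\circ\cdots\circ f_1(\bm{x})$. For the base case, assumption~(b) trains $\mathcal{N}_1$ on correct inputs, so $e_1 \le \epsilon$ holds on the success event. For the inductive step there are two distinct perturbation effects to reconcile: at evaluation time $\mathcal{N}_j$ receives the \emph{approximate} outputs of the upstream modules rather than the true inputs to $f_j$, so assumption~(c) (Lipschitzness of the learned $\hat f_j$ with constant $L_1$) bounds the amplification of this incoming error, while assumption~(a) (stability with constant $L_0$) bounds the discrepancy between the function learned on correct training inputs and the one the sequential scheme effectively uses. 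Combining these via the triangle inequality yields a recursion of the form $e_j \le \epsilon + L_1 e_{j-1} + (\text{stability term})$, which telescopes to $e_n \le \epsilon\cdot\sum_{k=0}^{n-1} L_1^{\,k} + \cdots = O(\epsilon)$ because $n$, $L_0$, and $L_1$ are constants independent of the sample size $M$.

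Finally I would close the probability bookkeeping with a union bound: each of the $n$ modules fails its $\epsilon$-guarantee with probability at most $\delta$, so all succeed simultaneously with probability at least $1 - n\delta = 1 - O(\delta)$, and on that event the accumulated bound gives $\|\mathcal{N}(\bm{x}) - g(\bm{x})\| \le O(\epsilon)$, which is exactly the claimed conclusion.

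The main obstacle is the error-propagation analysis that interlocks assumptions~(a)--(c). One must carefully separate the two mechanisms --- stability of the \emph{learning algorithm} under perturbed training data versus Lipschitz amplification of a \emph{fixed} learned function under perturbed evaluation data --- and verify that the effective input distribution fed to $\mathcal{N}_j$ (the outputs of the upstream modules) stays close enough to the distribution assumed in the sample-complexity guarantee for that guarantee to remain valid. Keeping every constant independent of $M$ throughout is what ultimately makes the final bound genuinely $O(\epsilon)$ rather than one that grows with the dataset size.
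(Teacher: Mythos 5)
The paper offers no proof of this statement: it is quoted from \citet{xu2019what} and used as a black-box lemma in the appendix to prove Theorem~\ref{thm:arch} (the line ``We now prove Theorem~\ref{thm:aa}'' there is a typo for Theorem~\ref{thm:arch}). Your reconstruction follows the same route as the original proof in that reference --- per-module sample-complexity bounds of $M/n$ extracted from the alignment hypothesis, induction over the sequentially trained modules with assumption~(a) absorbing the perturbation of the training inputs and assumption~(c) controlling error amplification at evaluation time, and a union bound over the $n$ modules giving $1-n\delta = 1-O(\delta)$ --- so it is essentially the intended argument and I see no gap.
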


\begin{remark} \textup{(Explanations of Assumptions in Theorem \ref{thm:aa}) The first and third assumptions are common assumptions in machine learning and are practical. The second assumption is impractical as we do not have auxiliary labels. However, the same pattern is observed for end-to-end learning in experiments~\citep{xu2019what,li2021does}.}
\end{remark}

We now prove Theorem \ref{thm:aa}.

\begin{theorem} (Impact of Backbone Architecture in DG) Let $\mathcal{N}' = \{\mathcal{N}_2,\cdots, \mathcal{N}_n \}$. Assuming we train the neural network with ERM, and Assumption \ref{assumption:ds}, \ref{assumption:invariant}, \ref{assumption:spurious}, we have the following statements:
\begin{enumerate}
    \item If $\text{Alignment}(\mathcal{N}',g_c,\epsilon,\delta) \leq |\mathcal{E}_{tr}|$, we have $\mathbb{P}_{D_{te}}[\|\mathcal{N}(\bm{x}) - y\| \leq O(\epsilon)] > 1 - O(\delta)$;
    \item If $\text{Alignment}(\mathcal{N}',g_s,\epsilon,\delta) \leq |\mathcal{E}_{tr}|$, we have $\mathbb{P}_{D_{te}}[\|\mathcal{N}(\bm{x}) - y\| > \omega(\epsilon)] > 1 - O(\delta)$.
\end{enumerate}
\end{theorem}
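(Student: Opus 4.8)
The plan is to reduce both statements to the IID alignment guarantee of Theorem~\ref{thm:aa} applied to the sub-network $\mathcal{N}'$ acting on the post-$\mathcal{N}_1$ features, and then to transport the resulting high-probability bounds from the training feature distribution to the test feature distribution using the bounded density ratio of Assumption~\ref{assumption:ds}. Concretely, I would treat $\bm{s} = \mathcal{N}_1(\bm{x})$ as the input and view $g_c$ (for Part~1) or $g_s$ (for Part~2) as the target function to be learned by $\mathcal{N}'$. Since the labels are noise-free and $g_c(\mathcal{N}_1(\bm{x})) = y$ on $\mathcal{E}_{tr}$, fitting $g_c$ is consistent with ERM driving the training error to zero; the hypothesis $\text{Alignment}(\mathcal{N}',g_c,\epsilon,\delta) \le |\mathcal{E}_{tr}|$ is then exactly the condition required to invoke Theorem~\ref{thm:aa} with $M = |\mathcal{E}_{tr}|$ samples.

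First I would apply Theorem~\ref{thm:aa} to obtain, on the training feature distribution,
\[
\mathbb{P}_{\bm{s}\sim p_{\text{train},\mathcal{N}_1}}\bigl[\|\mathcal{N}'(\bm{s}) - g_c(\bm{s})\| \le O(\epsilon)\bigr] \ge 1 - O(\delta).
\]
Next comes the key domain-generalization step: converting this into a statement under $D_{te}$. Writing $B = \{\bm{s}: \|\mathcal{N}'(\bm{s}) - g_c(\bm{s})\| > O(\epsilon)\}$ for the bad event, Assumption~\ref{assumption:ds} gives $\int_B p_{\text{test},\mathcal{N}_1}(\bm{s})\,d\bm{s} \le C \int_B p_{\text{train},\mathcal{N}_1}(\bm{s})\,d\bm{s} \le C\cdot O(\delta) = O(\delta)$, since $C$ is a constant independent of the sample size. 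Hence $\mathcal{N}'$ still approximates $g_c$ to within $O(\epsilon)$ with probability $1-O(\delta)$ under the test feature distribution.

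Finally I would combine this with the invariance assumption. For Part~1, the triangle inequality $\|\mathcal{N}(\bm{x}) - y\| \le \|\mathcal{N}'(\mathcal{N}_1(\bm{x})) - g_c(\mathcal{N}_1(\bm{x}))\| + \|g_c(\mathcal{N}_1(\bm{x})) - y\|$, together with Assumption~\ref{assumption:invariant} and a union bound over the two $O(\delta)$ failure events, yields $\mathbb{P}_{D_{te}}[\|\mathcal{N}(\bm{x}) - y\| \le O(\epsilon)] > 1 - O(\delta)$. Part~2 is symmetric but uses the reverse triangle inequality: from $\text{Alignment}(\mathcal{N}',g_s,\epsilon,\delta)\le|\mathcal{E}_{tr}|$ I get $\|\mathcal{N}'(\mathcal{N}_1(\bm{x})) - g_s(\mathcal{N}_1(\bm{x}))\| \le O(\epsilon)$ on $D_{te}$, while Assumption~\ref{assumption:spurious} forces $\|g_s(\mathcal{N}_1(\bm{x})) - y\| > \omega(\epsilon)$, so $\|\mathcal{N}(\bm{x}) - y\| > \omega(\epsilon) - O(\epsilon) = \omega(\epsilon)$ for sufficiently small $\epsilon$, again after a union bound.

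The main obstacle I anticipate is justifying that ERM training realizes the learning algorithm $A$ of Theorem~\ref{thm:aa}, and in particular that alignment with $g_c$ rather than $g_s$ is what determines which of the two zero-training-error solutions ERM selects: both $g_c$ and $g_s$ fit the training data exactly, so the entire content of the theorem lives in the implicit bias encoded by the sample-complexity measure $\mathcal{M}$. I would also need to be careful to apply Theorem~\ref{thm:aa} over the feature variable $\bm{s} = \mathcal{N}_1(\bm{x})$ (treating $\mathcal{N}_1$ as fixed, e.g.\ pretrained for disentanglement) so that the density-ratio transport, which Assumption~\ref{assumption:ds} states for the post-$\mathcal{N}_1$ densities, matches precisely the quantity being bounded.
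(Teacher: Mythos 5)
Your proposal matches the paper's proof essentially step for step: invoke the IID alignment guarantee (Theorem~\ref{thm:aa}) to bound $\|\mathcal{N}(\bm{x}) - g_c(\mathcal{N}_1(\bm{x}))\|$ on the training distribution, transport the failure event to the test distribution via the bounded density ratio of Assumption~\ref{assumption:ds} (the paper isolates this as Lemma~\ref{lem:ds}, identical to your bad-event integral bound), and then combine with Assumption~\ref{assumption:invariant} or~\ref{assumption:spurious} via the triangle and reverse-triangle inequalities. The only cosmetic differences are that you combine the two failure events with a union bound where the paper writes a product $(1-\delta)(1-O(\delta))$, both yielding $1-O(\delta)$, and your closing caveat about ERM selecting the aligned zero-training-error solution is a gap the paper shares rather than resolves.
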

\begin{proof}
The proof is mainly based on Theorem \ref{thm:aa}. We tackle the distribution shift in $\bm{x}$ with Lemma \ref{lem:ds} and analyze the distribution shift in $y$ with Assumption \ref{assumption:invariant}, \ref{assumption:spurious}.

    \textbf{First condition:} From Theorem \ref{thm:aa},
    \begin{align*}
        \mathbb{P}_{\bm{x} \sim D_{tr}}[\|\mathcal{N}(\bm{x}) - g_c(\mathcal{N}_1(\bm{x}) )\| > O(\epsilon)]  \leq O(\delta).
    \end{align*}
    By Lemma \ref{lem:ds}, we have
    \begin{align}\label{eq:N_gc}
        \mathbb{P}_{\bm{x} \sim D_{te}}[\|\mathcal{N}(\bm{x}) - g_c(\mathcal{N}_1(\bm{x}))\| \leq O(\epsilon)]  \leq C \mathbb{P}_{\bm{x} \sim D_{tr}}[\|\mathcal{N}(\bm{x}) - g_c(\bm{x})\| < O(\epsilon)] \leq O(\delta).
    \end{align}
    Combing with the second condition of Assumption \ref{assumption:invariant}, we have
    \begin{align*}
        &\mathbb{P}_{D_{te }}[\|\mathcal{N}(\bm{x}) - g(\bm{x})\| \leq O(\epsilon)] \\
        \geq &\mathbb{P}_{D_{te}}[\|g_c(\mathcal{N}_1(\bm{x}) ) - g(\bm{x})\| \leq O(\epsilon)]\cdot \mathbb{P}_{D_{te}}[\|\mathcal{N}(\bm{x}) - g_c(\mathcal{N}_1(\bm{x}) )\| \leq O(\epsilon)] \\
        \geq & (1 - \delta) (1 - O(\delta)) = 1 - O(\delta),
    \end{align*}
    where the first inequality follows $O(\epsilon)+O(\epsilon) = O(\epsilon)$, and the second inequality follows the second condition of Assumption \ref{assumption:invariant} and \eqref{eq:N_gc}.
   
    \textbf{Second condition:} Following the proof of \eqref{eq:N_gc}, we obtain
    \begin{align}\label{eq:N_gs}
        \mathbb{P}_{D_{te}}[\|\mathcal{N}(\bm{x}) - g_s(\mathcal{N}_1(\bm{x}) ) \| \leq O(\epsilon)] \leq 1 - O(\delta).
    \end{align}
    Combing with the second condition of Assumption \ref{assumption:spurious}, we have
    \begin{align*}
        &\mathbb{P}_{D_{te }}[\|\mathcal{N}(\bm{x}) - g(\bm{x})\| > \omega(\epsilon)] \\
        \geq &\mathbb{P}_{D_{te}}[\|g_s(\mathcal{N}_1(\bm{x}) )  - g(\bm{x})\| > \omega(\epsilon)] \cdot \mathbb{P}_{D_{te }}[\|\mathcal{N}(\bm{x}) - g_s(\mathcal{N}_1(\bm{x}) )\| \leq O(\epsilon)] \\
        \geq & (1 - \delta) (1 - O(\delta)) = 1 - O(\delta),
    \end{align*}
    where the first inequality follows $\omega(\delta)-O(\delta) = O(\delta)$, and the second inequality follows the second condition of Assumption \ref{assumption:spurious} and \eqref{eq:N_gs}. This finishes the proof for Theorem \ref{thm:arch}.
\end{proof}

\subsection{Synthetic Experiments for Theorem \ref{thm:arch}}
\label{subsec:synthetic_experiments}
In this subsection, we set up synthetic experiments to illustrate Theorem \ref{thm:arch}.
\subsubsection{Noiseless Version}
We first set up experiments following the assumptions in Theorem \ref{thm:arch}.

\textbf{Training dataset generation:} Consider data pairs $(\bm{x},y) \in (\mathbb{R}^K)^P \times \{1,\cdots, K\}$ in the training and validation datasets generated as follows.
\begin{itemize}
    \item Generate label $y \in  \{1,\cdots, K\}$ uniformly.
    \item Generate $K$ mutually orthogonal feature vectors $\{\bm{c}_k\}$ such that $\bm{c}_k \in \mathbb{R}^K$ and $\|\bm{c}_k\|_2 = 1$.  
    \item Generate $x$ as a collection of $P$ patches: $\bm{x} = (\bm{x}^{(1)}, \cdots, \bm{x}^{(P)}) \in (\mathbb{R}^d)^P$
    \begin{itemize}
    \item \textbf{Pixel-level feature.} For the first patch, the $y$-th pixel is set as $1$ and other pixels drawn from $N(0,1)$.
    \item \textbf{Patch-level feature.} Uniformly select one and only one patch given by $\bm{c}_y$.
    \item \textbf{Random noise.} The rest patches are Gaussian noise drawn independently from $N(0,\bm{I}_K)$.
\end{itemize}
\end{itemize}
The generation of the validation dataset follows that of the training dataset.

\textbf{Test dataset generation:} Two OOD test datasets are given. The first dataset differs from the training dataset in the pixel-level feature. Specifically, we set the $(y+1) \text{mod } K$-th pixel of the first patch as $1$ and other pixels in the first patch follow $N(0,1)$, i.e., making the pixel-level features spurious. The second dataset is distinguished from the training dataset such that the patch-level feature is $\bm{c}_{(y+1) \text{mod } K}$, i.e., making patch-level features spurious.

\textbf{Backbone architectures:} We adopt MLPs and fully convolutional networks (FCNs) as MLPs align with pixel-level features while FCNs align with patch-level features. It is clear that both MLPs and FCNs can fit the label only from pixel-level features or patch-level features.

\textbf{Parameters in experiments:} We set $P = 10$, $K = 4$. We generate $100,000$ samples for the training dataset and $2,000$ test samples for the validation and test datasets. We use a three-layer MLP with hidden size $\{40,100,100,4\}$ and a two-layer FCN with $20$ filters.

\begin{figure*}
    \centering
    \subfigure[MLP's performance.]
    {
        \includegraphics[width=0.48\columnwidth]{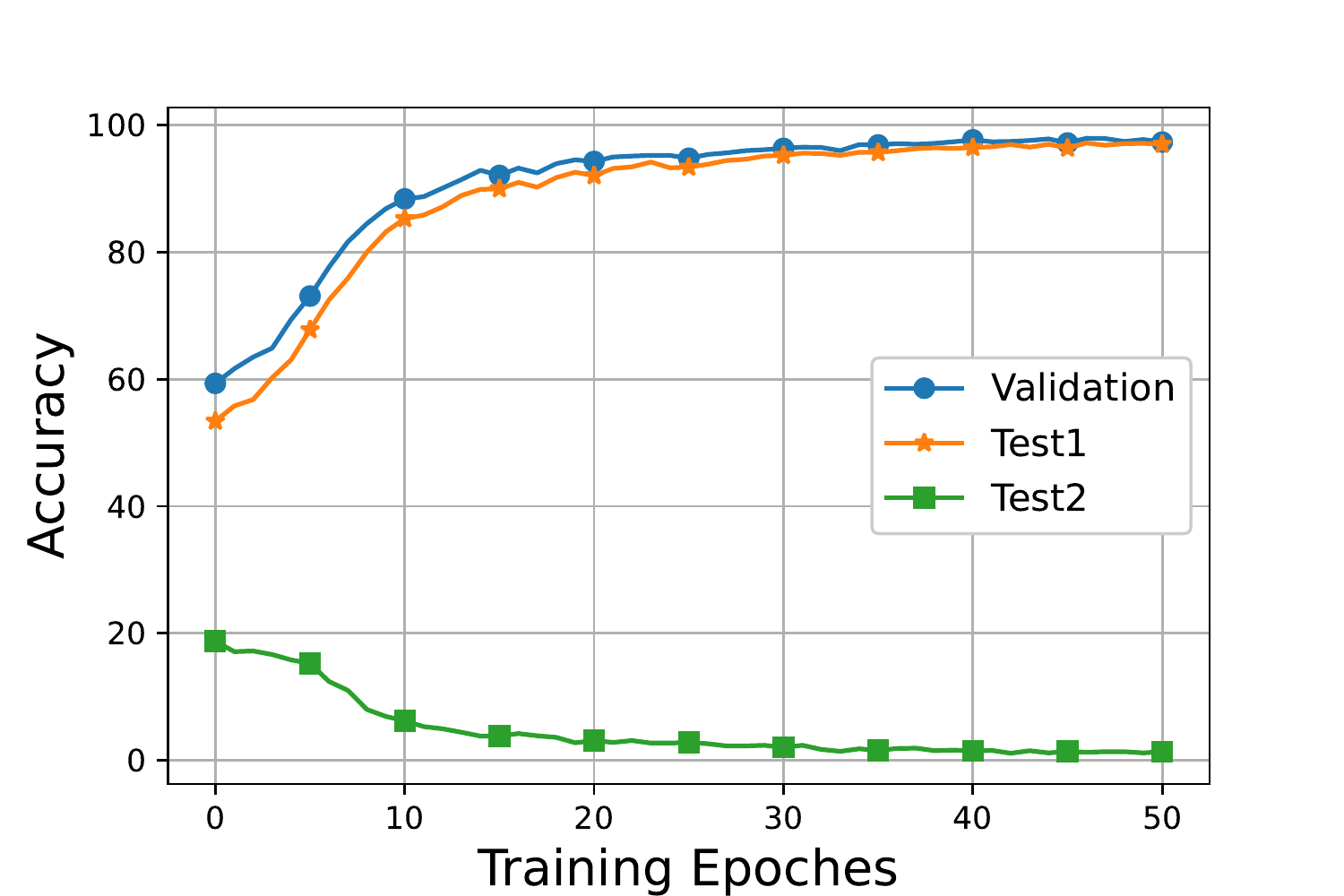}
    }\hfil
    \subfigure[FCN's performance.]
    {
        \includegraphics[width=0.48\columnwidth]{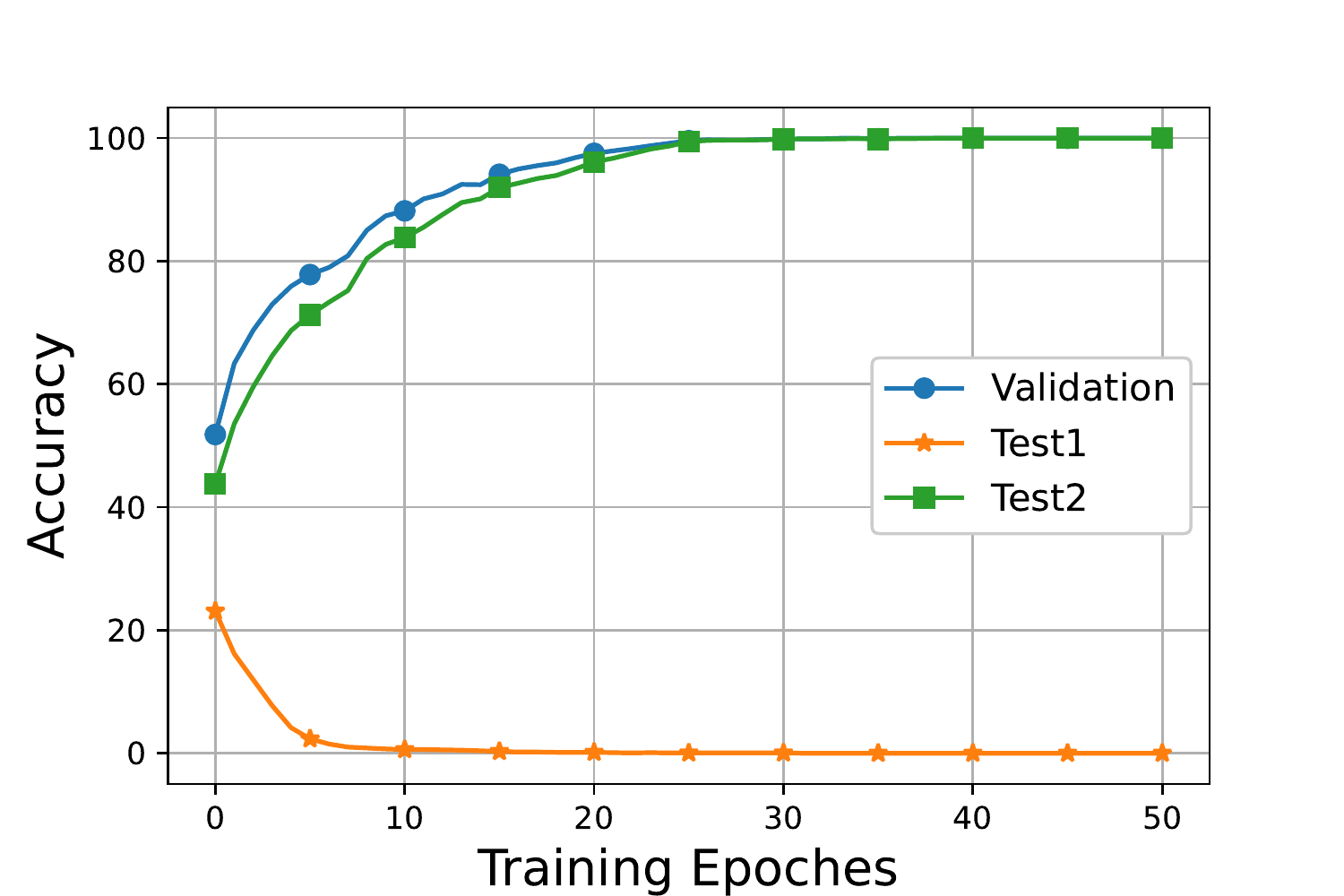}

    }\hfil
   
    \caption{Performance of the MLP and FCN on the synthetic dataset. MLP aligns with the invariant correlations in test dataset 1 while FCN aligns with the invariant correlations in test dataset 2.}
    \label{fig:MLP_FCN}
\end{figure*}

The experimental results are shown in Fig. \ref{fig:MLP_FCN}. The MLP completely fails on the second dataset while the FCN completely fails on the first dataset, which is consistent with Theorem \ref{thm:arch}.  

\subsubsection{Noisy Version}
In the previous experiment, both pixel-level features and patch-level features can perfectly fit the label. We give a noisy version as follows:
\begin{enumerate}
    \item \textbf{Pixel-level feature.} Given a probability value $p_1$. With probability $p_1$, the $y$-th pixel of the first patch is set as $1$ and other pixels are drawn from $N(0,1)$. With probability $1 - p_1$, the $y'$-th ($y'\neq y$) pixel of the first patch is set as $1$ and other pixels are drawn from $N(0,1)$.
    \item \textbf{Patch-level feature.} Given a probability value $p_2$. With probability $p_2$, uniformly select one and only one patch given by $\bm{c}_y$. With probability $1 - p_2$, uniformly select one and only one patch given by $\bm{c}_{y'}$ ($y'\neq y$).
\end{enumerate}

\begin{figure}
    \centering
    \subfigure[$p_1 = 0.9, p_2 = 0.9$.]
    {
        \includegraphics[width=0.31\columnwidth]{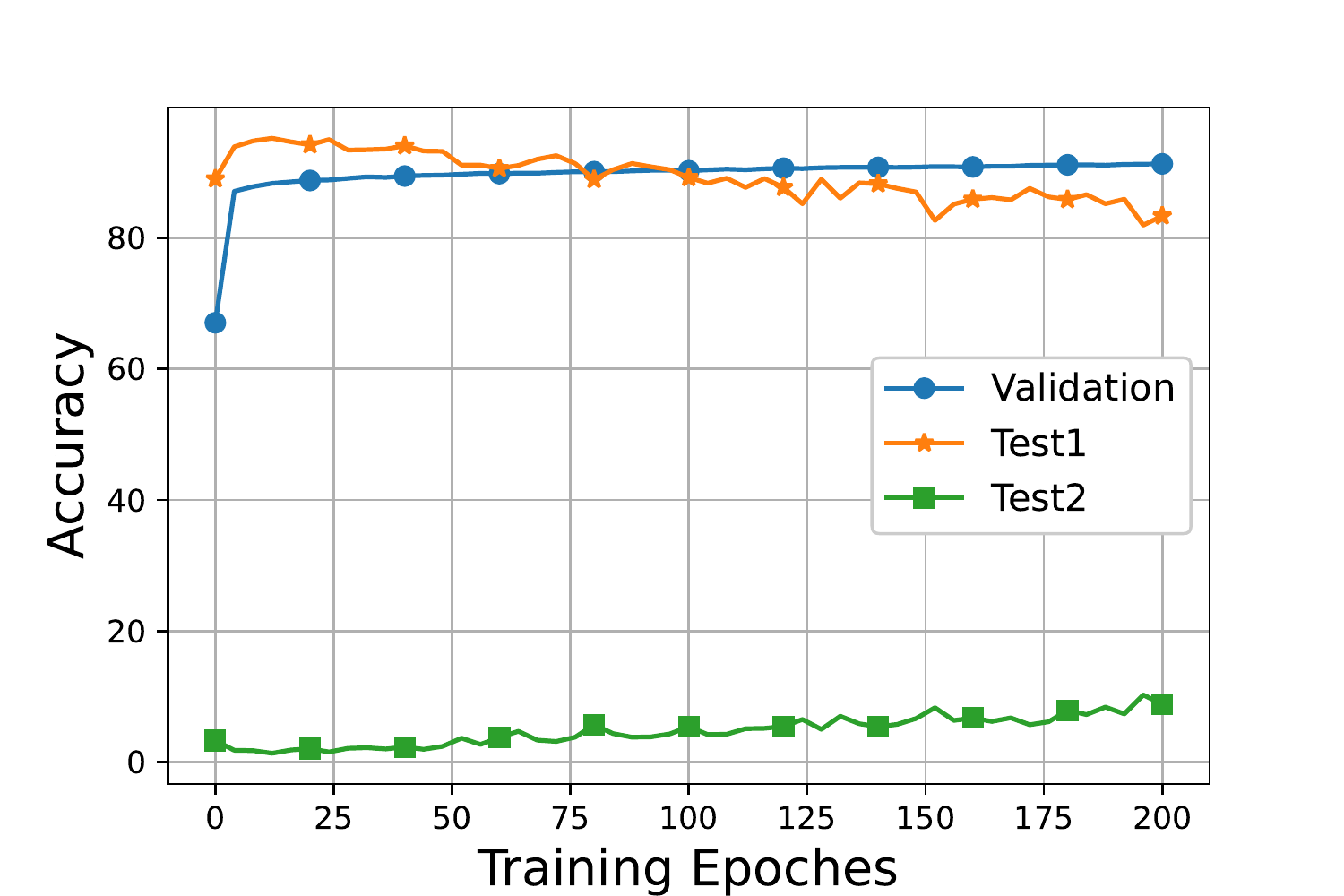}
    }\hfil
    \subfigure[$p_1 = 0.8, p_2 = 0.8$.]
    {
        \includegraphics[width=0.31\columnwidth]{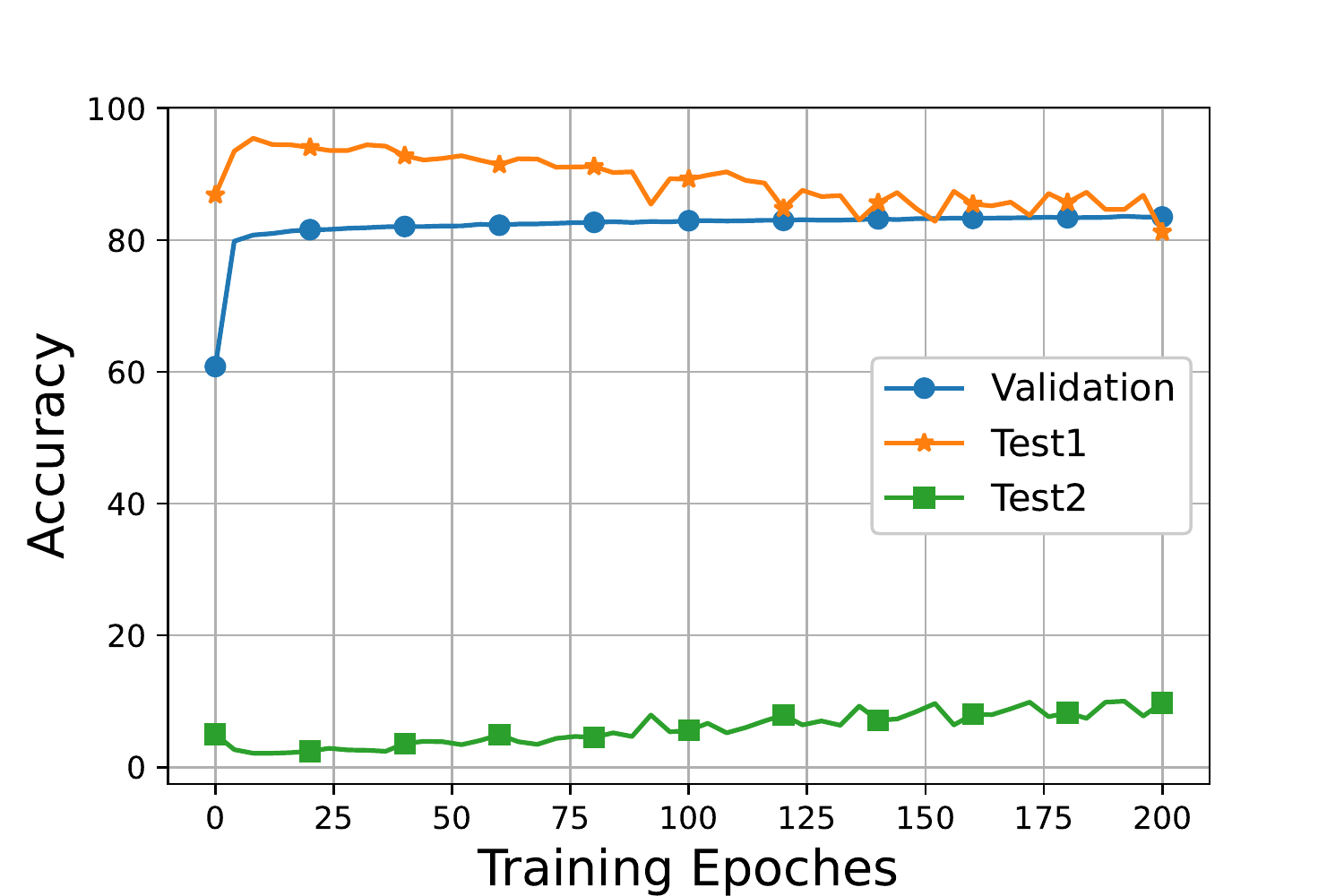}

    }\hfil
    \subfigure[$p_1 = 0.7, p_2 = 0.7$.]
    {
        \includegraphics[width=0.31\columnwidth]{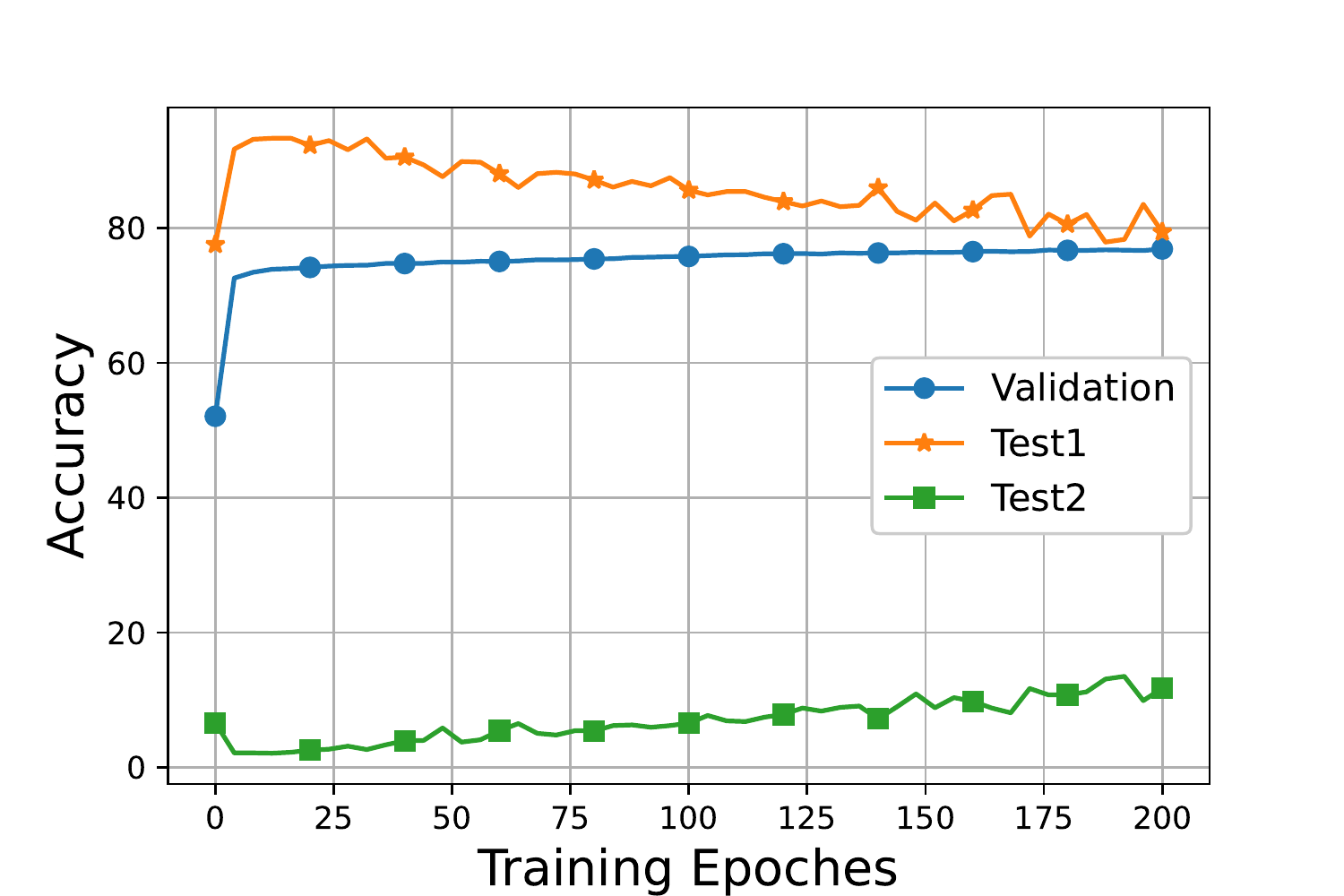}

    }
    \medskip
    \subfigure[$p_1 = 0.9, p_2 = 0.95$.]
    {
        \includegraphics[width=0.31\columnwidth]{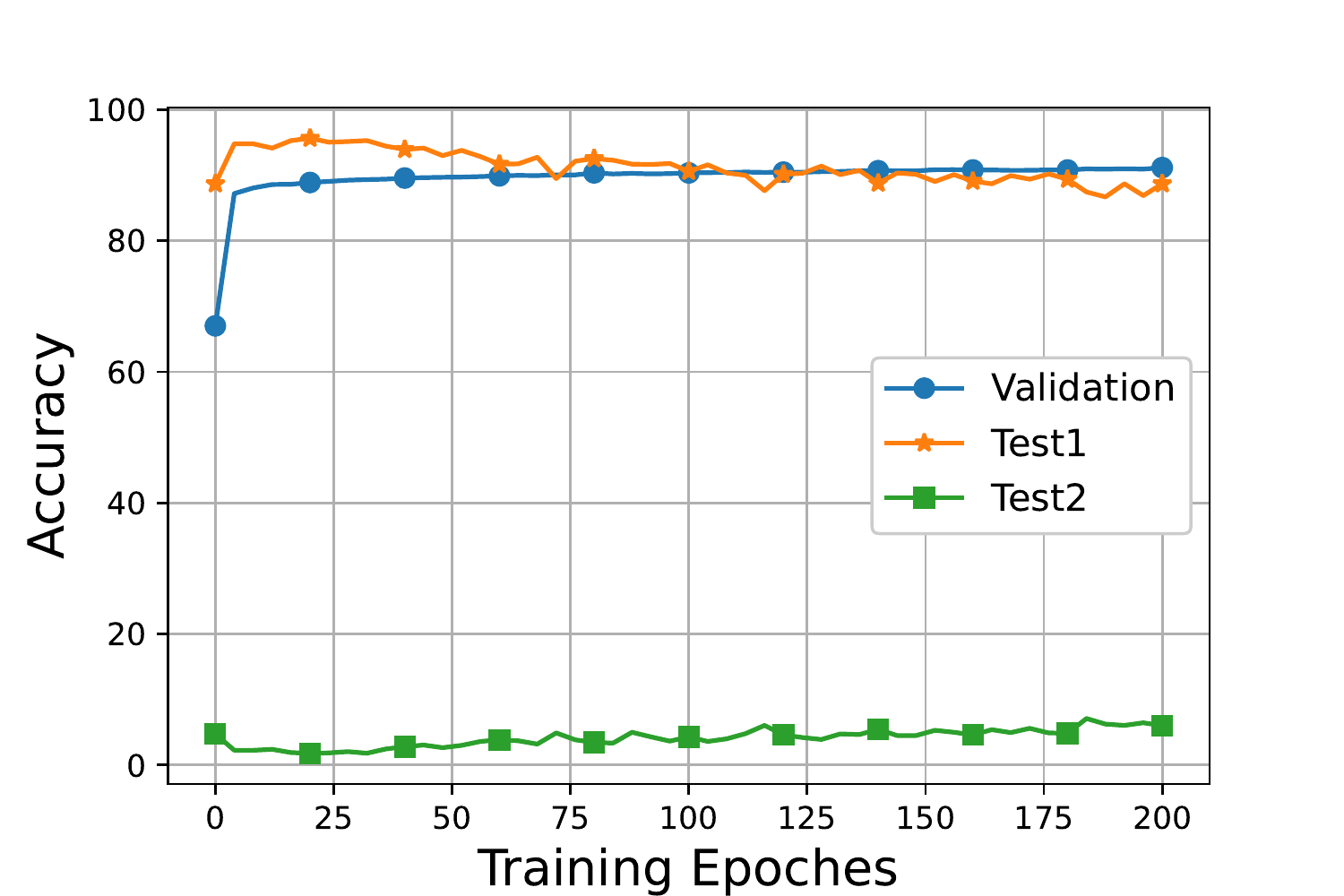}

    }\hfil
    \subfigure[$p_1 = 0.8, p_2 = 0.85$.]
    {
        \includegraphics[width=0.31\columnwidth]{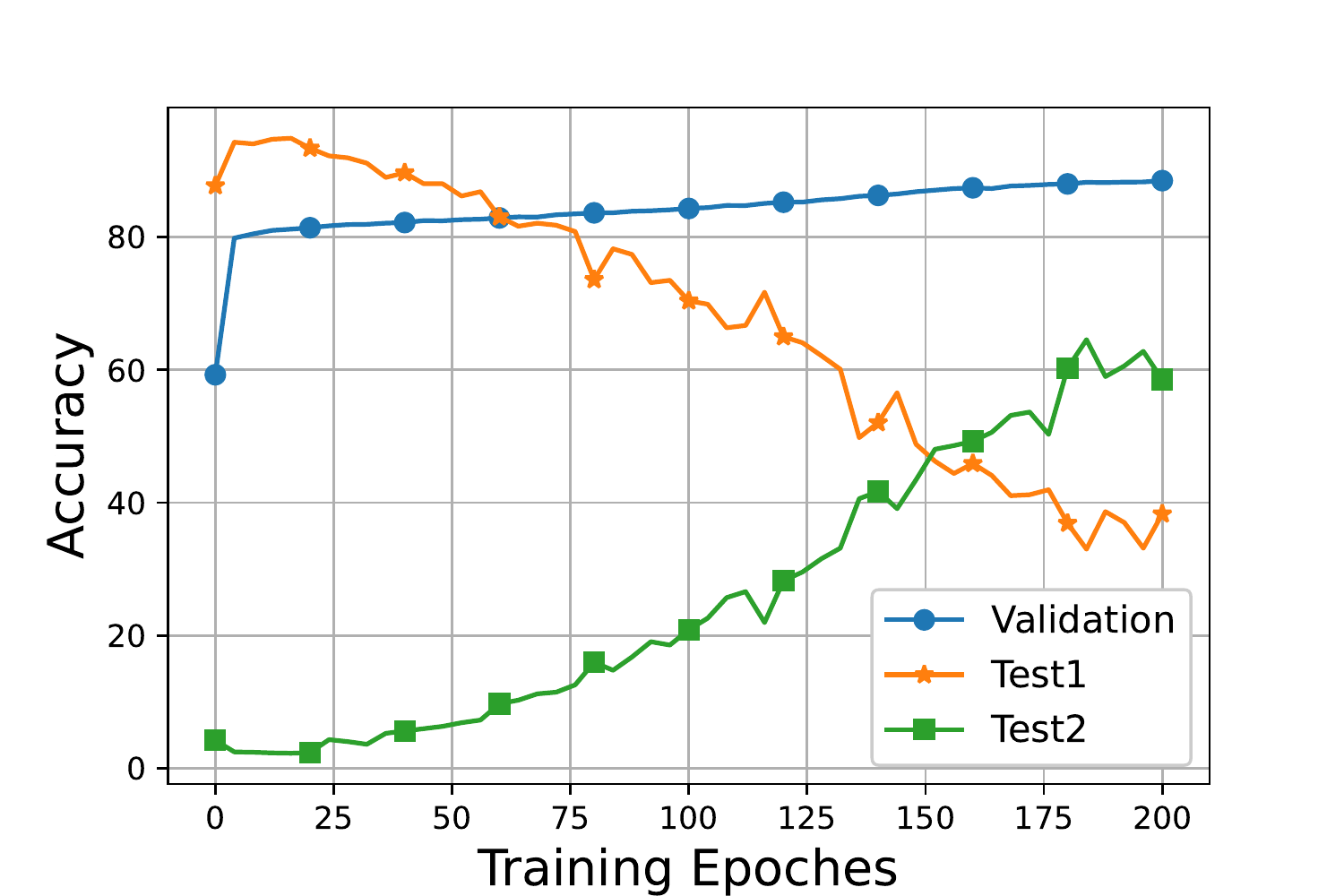}

    }\hfil
    \subfigure[$p_1 = 0.7, p_2 = 0.75$.]
    {
        \includegraphics[width=0.31\columnwidth]{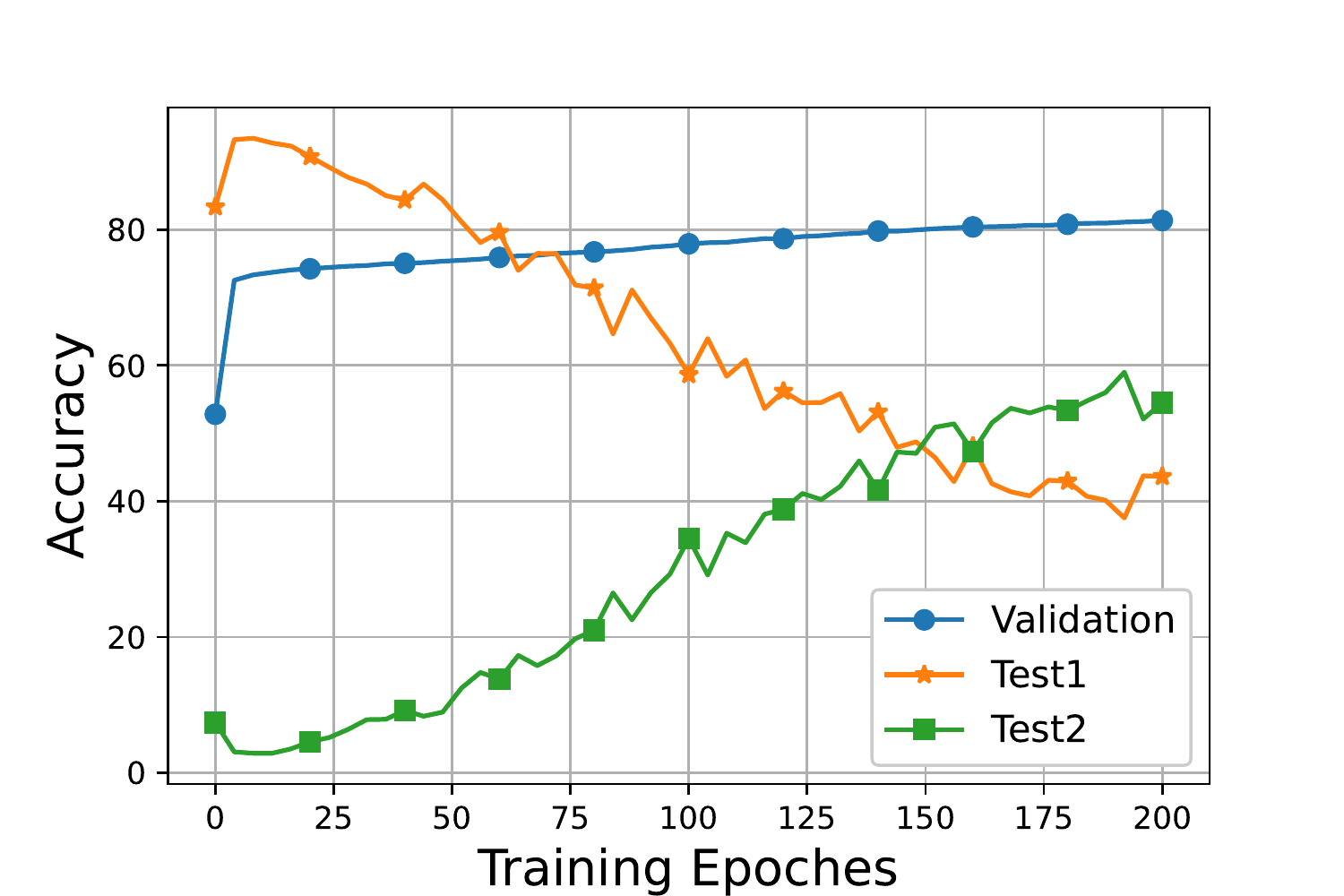}

    }
        \medskip
    \subfigure[$p_1 = 0.9, p_2 = 1$.]
    {
        \includegraphics[width=0.31\columnwidth]{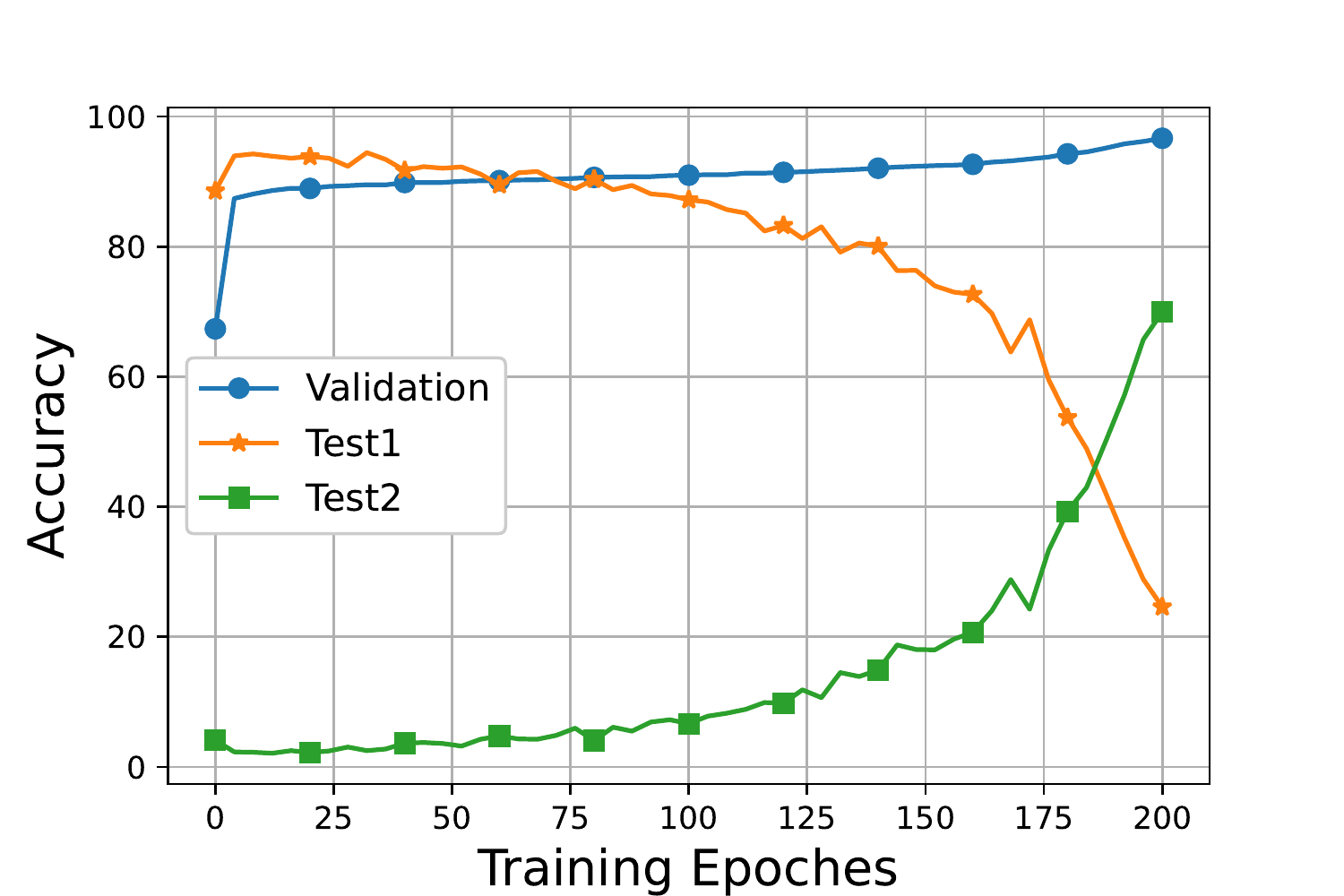}

    }\hfil
    \subfigure[$p_1 = 0.8, p_2 = 0.9$.]
    {
        \includegraphics[width=0.31\columnwidth]{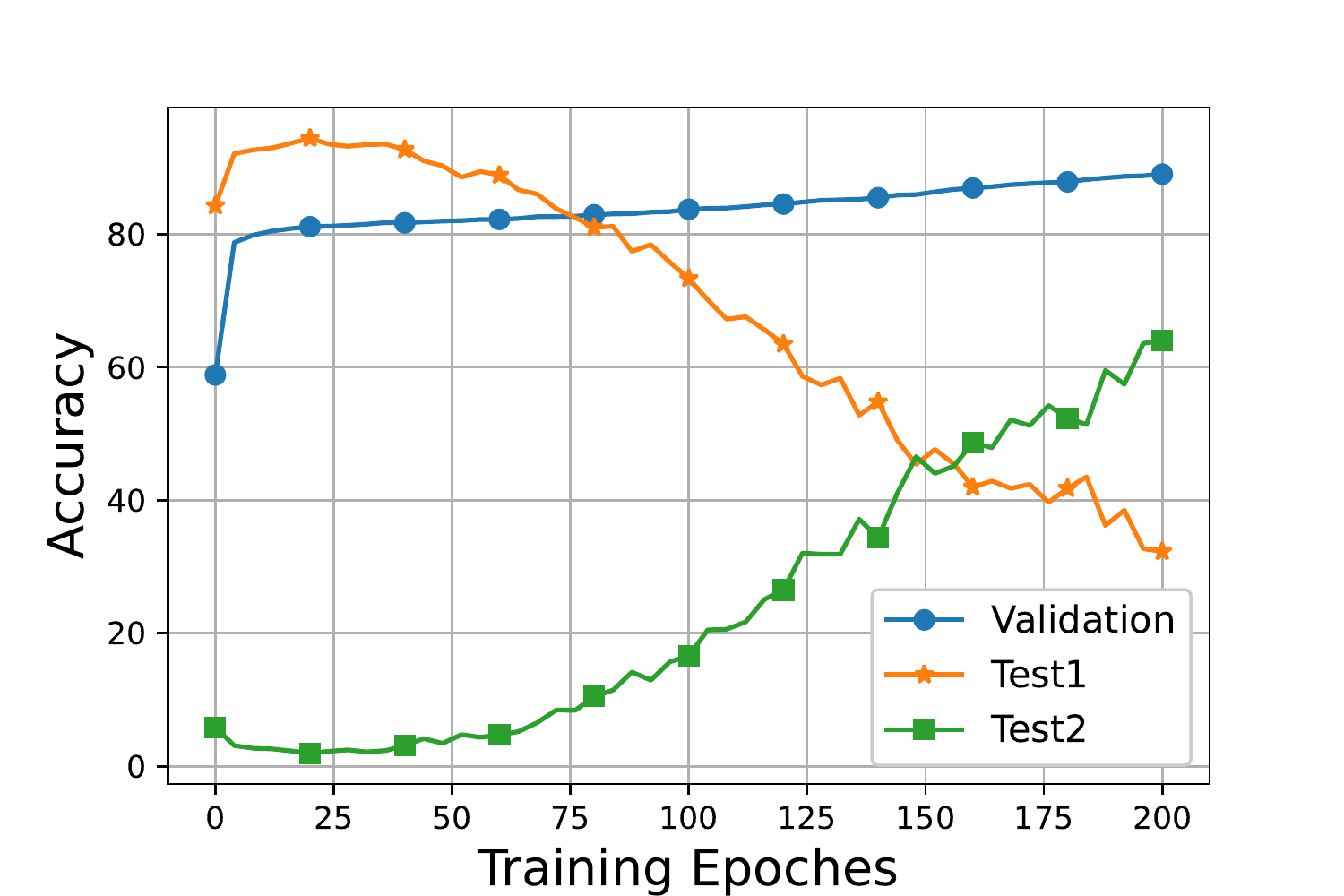}

    }\hfil
    \subfigure[$p_1 = 0.7, p_2 = 0.8$.]
    {
        \includegraphics[width=0.31\columnwidth]{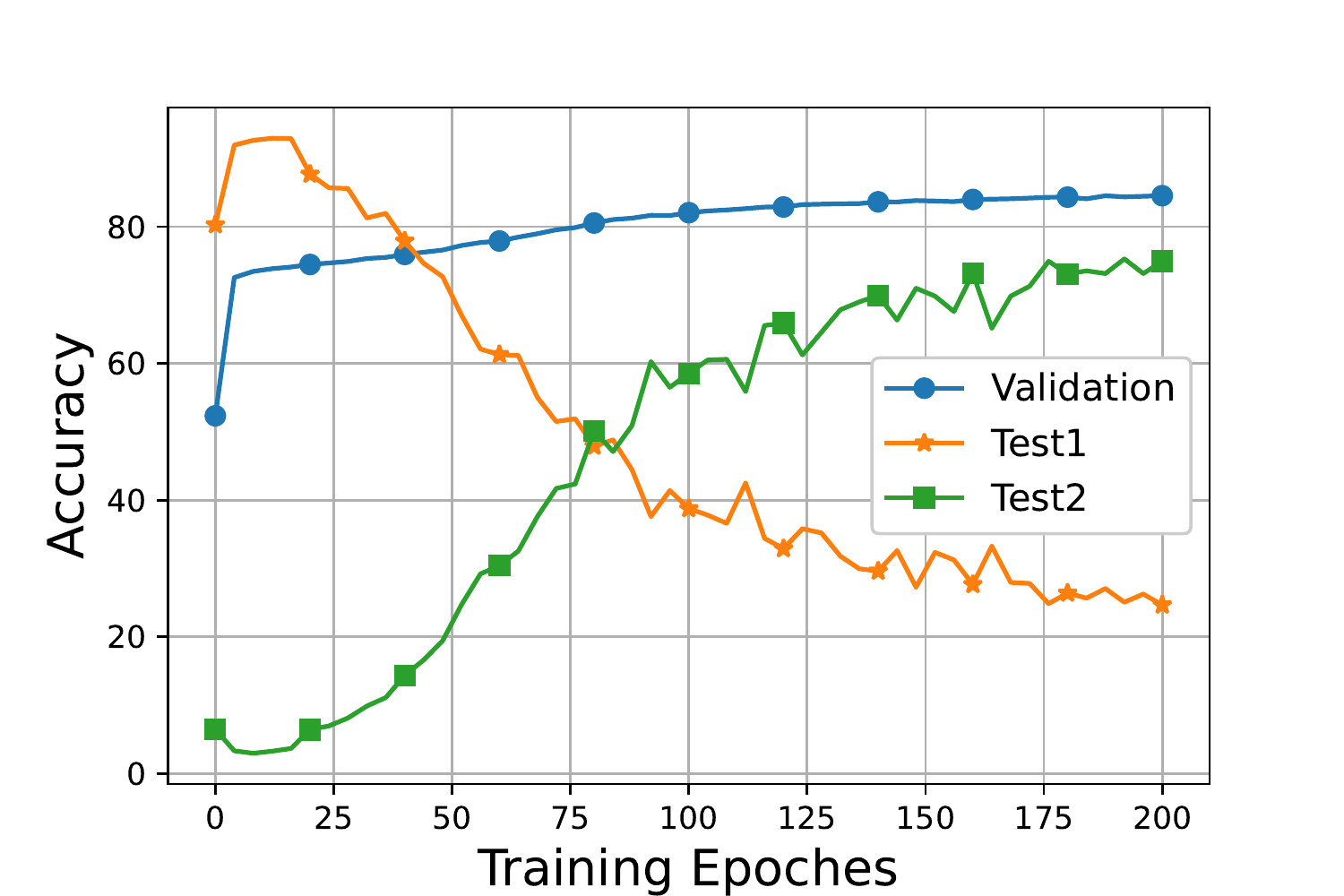}

    }
    \caption{MLP' s performance with different $p_1, p_2$.}
    \label{fig:noisy_syn_exp}
\end{figure}

The experiments with different $p_1$ and $p_2$ are shown in Fig. \ref{fig:noisy_syn_exp}. When the signal-to-noise ratios of pixel-level feature and patch-level feature are similar, the MLP learns the pixel-level feature, which corresponds to its alignment. It shows that the network prefers learning the correlation that aligns with its architecture, even when exploiting other correlations may lead to performance gains. When the difference between $p_1$ and $p_2$ is large, the network will still learn its aligned correlation at the early stage (with above $90\%$ accuracy on Test1), but may change when the number of epochs is large. This suggests that early stopping is beneficial to DG as suggested in~\citet{cha2021swad} under the condition that the backbone aligns to some invariant correlations.

\subsection{Proof of Theorem \ref{thm:moe}}
\label{sec:proof_moe}
\begin{theorem} An MoE module in \eqref{eq:moe} with $N$ experts and $k=1$ aligns the conditional sentences in Algorithm \ref{alg:condition} with
\begin{align*}
    \text{Alignment} = \left\{
\begin{aligned}
&(N+1) \cdot \max\left( \mathcal{M}^*_{\mathscr{P}}, \mathcal{M}(G, h_{1},\epsilon,\delta) \right), && \text{if } N < M,\\
&(N+1) \cdot \max \left( \max_{i\in \{1,\cdots, M\} } \mathcal{M}(f_{\text{FFN}_i}, h_{i+1},\epsilon,\delta), \mathcal{M}(G, h_{1},\epsilon,\delta) \right), &&\text{if } N \geq M,  \\
\end{aligned}
\right.
\end{align*}
where $\mathcal{M}(\cdot,\cdot,\cdot,\cdot)$ is defined in Definition \ref{def:aa}, and $\mathcal{M}^*_{\mathscr{P}}$ is the optimal objective value of the following optimization problem:
\begin{align*}
\mathscr{P}:&\underset{\mathcal{I}_1, \cdots \mathcal{I}_N}{\text{minimize}}
& &  \max_{i \in \{1,\cdots,N \} } \quad \mathcal{M}(f_{\text{FFN}_i}, ([1_{I_j}]_{j\in \mathcal{I}_{i} } \circ h_1)^T \cdot [h_j]_{j\in \mathcal{I}_{i} },\epsilon,\delta) \\
& \text{ subject to }
& & \cup_{i=1}^N \mathcal{I}_i = \{2,3, \cdots, M+1\},
\end{align*}
where $1_{I_i}$ is the indicator function on interval $I_i$.
\end{theorem}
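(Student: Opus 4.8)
The plan is to apply Definition \ref{def:aa} directly, after identifying (i) the module decomposition of the MoE layer and (ii) the natural decomposition of the conditional statement in Algorithm \ref{alg:condition}. The MoE layer in \eqref{eq:moe} consists of one gating network $G$ together with $N$ expert FFNs $E_1, \dots, E_N$, so it has exactly $n = N+1$ modules; this accounts for the leading factor $(N+1)$ in both branches of \eqref{eq:align_moe}. On the algorithm side, Algorithm \ref{alg:condition} first evaluates the switch variable $h_1(\bm{x})$, uses its interval membership to select a branch, and then applies the corresponding branch function $h_{i+1}$; equivalently it computes $g(\bm{x}) = \sum_{i=1}^{M} 1_{I_i}(h_1(\bm{x}))\, h_{i+1}(\bm{x})$. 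The proof then amounts to specifying which MoE module is responsible for which piece of $g$ and reading off $\max_i \mathcal{M}(f_i, \mathcal{N}_i, \epsilon, \delta)$.

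First I would handle the case $N \ge M$. Here there are at least as many experts as branches, so I would match the gate $G$ to the switch function $h_1$ (with $k=1$, $\text{TOP}_1(\text{Softmax}(\bm{W}\bm{x}))$ selects a single expert, and the argmax can realize the interval-membership test on $h_1$), and match expert $i$ to the branch function $h_{i+1}$ for $i=1,\dots,M$, leaving the remaining $N-M$ experts idle with a trivial target and zero sample complexity. Since under top-1 routing the MoE output equals $E_{i^*(\bm{x})}(\bm{x})$ for the selected index $i^*$, this assignment reproduces $g$ whenever the gate routes $\bm{x}$ to expert $i$ exactly when $h_1(\bm{x}) \in I_i$. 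Definition \ref{def:aa} then yields $\text{Alignment} = (N+1)\cdot \max\big( \max_{i\le M}\mathcal{M}(f_{\text{FFN}_i}, h_{i+1},\epsilon,\delta),\, \mathcal{M}(G, h_1,\epsilon,\delta)\big)$, which is the second branch of \eqref{eq:align_moe}.

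The harder case is $N < M$, where experts must be shared across branches. Here I would partition the branch indices $\{2,\dots,M+1\}$ into $N$ groups $\mathcal{I}_1,\dots,\mathcal{I}_N$ with $\cup_i \mathcal{I}_i = \{2,\dots,M+1\}$, route $\bm{x}$ to expert $i$ whenever $h_1(\bm{x})$ lands in one of the intervals associated with group $\mathcal{I}_i$, and require expert $i$ to realize the restricted conditional $\sum_{j\in \mathcal{I}_i} 1_{I_j}(h_1(\bm{x}))\, h_j(\bm{x})$, which is precisely the target function appearing in the objective of the program $\mathscr{P}$ in \eqref{eq:sc_opt}. The gate again aligns with $h_1$ at cost $\mathcal{M}(G,h_1,\epsilon,\delta)$, while each expert now carries the cost of learning its combined piecewise branch. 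Since any top-1 routing induces such a partition and the alignment value from Definition \ref{def:aa} is governed by the worst expert, minimizing over admissible partitions gives the per-expert cost $\mathcal{M}^*_{\mathscr{P}}$, and the first branch of \eqref{eq:align_moe} follows.

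I expect the main obstacle to be the $N<M$ case: I must argue that the optimal alignment is attained by a hard partition of the branches rather than some overlapping or fractional assignment, so that the reduction to the combinatorial program $\mathscr{P}$ is tight. This uses that top-1 routing sends each input to a single expert, so each branch interval is owned by exactly one expert, and that the $\max_i$ in Definition \ref{def:aa} makes balancing the per-group complexities optimal. A secondary point to verify is that the gate's $\text{Softmax}/\text{TOP}_1$ can implement the interval-membership routing on $h_1$ with sample complexity captured by $\mathcal{M}(G,h_1,\epsilon,\delta)$, so that the switch is correctly attributed to the gate module rather than being double-counted inside the experts.
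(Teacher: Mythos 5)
Your proposal matches the paper's proof essentially step for step: the same $(N+1)$-module decomposition (gate aligned with the switch on $h_1$, experts aligned with the branch functions $h_{i+1}$ when $N\geq M$, with surplus experts given a trivial target), and for $N<M$ the same grouping of branches into sets $\mathcal{I}_1,\dots,\mathcal{I}_N$ so that expert $i$ learns $([1_{I_j}]_{j\in\mathcal{I}_i}\circ h_1)^T\cdot[h_j]_{j\in\mathcal{I}_i}$, with the optimal grouping giving $\mathcal{M}^*_{\mathscr{P}}$. The tightness concern you flag for the $N<M$ case (hard versus overlapping assignment) is not addressed in the paper either, which, consistent with Definition \ref{def:aa}, simply exhibits the decomposition and reads off the alignment value.
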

\begin{proof}
For $N \geq M$, we assign one expert for each function. For $N < M$, similar functions should be assigned to one expert to minimize the sample complexity.

\textbf{Case $N \geq M$:} We define the mapping $H(x) = [1_{I_1}, \cdots, 1_{I_M}] \in \mathbb{R}^M$, where $1_{I_i}$ is the indicator function on interval $I_i$. To show the alignment between MoE and conditional sentences, we define the module functions $f_1, \cdots, f_{N}$ and neural network modules $\mathcal{N}_1, \cdots, \mathcal{N}_{N}$ as
\begin{align*}
    f_i = \left\{
\begin{aligned}
&H \circ h_1, &&\text{if } i=1,  \\
&h_{ i+1 }, && \text{if } 1 \leq i\leq M+1,\\
&0, && \text{o.w.},
\end{aligned}
\right. \quad \mathcal{N}_i = \left\{
\begin{aligned}
&G, &&\text{if }i=1,  \\
&f_{\text{FFN}_i}, && \text{if } 1 \leq i\leq M+1,\\
&f_{\text{FFN}_i}, && \text{o.w.},
\end{aligned}
\right.
\end{align*}
Replacing $\mathcal{N}_i$ with $f_i$, the MoE network simulates Algorithm \ref{alg:condition}. Thus, for $N > M$, MoE algorithmically aligns conditional sentences with
\begin{align}
    \text{Alignment} = (N+1) \cdot \max\left( \max_i \mathcal{M}(f_{\text{FFN}_i}, h_{i+1},\epsilon,\delta), \mathcal{M}(G, h_{1},\epsilon,\delta) \right)
\end{align}
where $\mathcal{M}(\cdot,\cdot,\cdot,\cdot)$ is defined in Definition \ref{def:aa}.

\textbf{Case $N < M$:} We define the mapping $H_0(x) = [1_{\cup_{j \in \mathcal{I}_1} I_j}, \cdots, 1_{\cup_{j \in \mathcal{I}_N} I_j}] \in \mathbb{R}^N$, where $ \mathcal{I}_i$ is the solution to the optimization problem in \eqref{eq:sc_opt}. To show the alignment between MoE and conditional sentences, we define the module functions $f_1, \cdots, f_{N}$ and neural network modules $\mathcal{N}_1, \cdots, \mathcal{N}_{N}$ as
\begin{align*}
    f_i = \left\{
\begin{aligned}
&H_0 \circ h_1, &&\text{if } i=1,  \\
&([1_{I_j}]_{j\in \mathcal{I}_{i-1} } \circ h_1)^T \cdot [h_j]_{j\in \mathcal{I}_{i-1} }, && \text{o.w.},
\end{aligned}
\right. \quad \mathcal{N}_i = \left\{
\begin{aligned}
&G, &&\text{if }i=1,  \\
&f_{\text{FFN}_i}, && \text{o.w.}
\end{aligned}
\right.
\end{align*}
Replacing $\mathcal{N}_i$ with $f_i$, the MoE network simulates Algorithm \ref{alg:condition}. Thus, for $N < M$, MoE algorithmically aligns with conditional sentences with
\begin{align*}
    \text{Alignment} = (N+1) \cdot \max\left( \mathcal{M}^*_{\mathscr{P}}, \mathcal{M}(G, h_{1},\epsilon,\delta) \right)
\end{align*}
where $\mathcal{M}^*_{\mathscr{P}}$ is the optimal objective value to \eqref{eq:sc_opt} and $\mathcal{M}(\cdot,\cdot,\cdot,\cdot)$ is defined in Definition \ref{def:aa}.
\end{proof}

\subsection{Technical Lemmas}
\begin{lemma}\label{lem:ds}
Under Assumption \ref{assumption:ds}, for a given functions $f(\cdot)$ and an interval $A$, $\mathbb{P}_{D_{tr}}[f(\mathcal{N}_1(\bm{x})) \in A] < \delta$ implies  $\mathbb{P}_{D_{te}}[f(\mathcal{N}_1(\bm{x})) \in A] \leq C\delta$.
\end{lemma}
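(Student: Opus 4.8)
The plan is to prove this by a single change-of-measure (importance-weighting) step on the feature space produced by $\mathcal{N}_1$, exploiting the bounded density ratio granted by Assumption \ref{assumption:ds}. Since the event $\{f(\mathcal{N}_1(\bm{x})) \in A\}$ depends on $\bm{x}$ only through the feature $\bm{s} = \mathcal{N}_1(\bm{x})$, I would first push both distributions forward through $\mathcal{N}_1$ and work entirely with the feature densities $p_{\text{train},\mathcal{N}_1}$ and $p_{\text{test},\mathcal{N}_1}$. Define the measurable preimage set
\begin{align*}
    S = \{\bm{s} : f(\bm{s}) \in A\},
\end{align*}
so that both probabilities in the statement are integrals of the respective feature density over the \emph{same} set $S$.

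Next I would write the two probabilities explicitly as
\begin{align*}
    \mathbb{P}_{D_{tr}}[f(\mathcal{N}_1(\bm{x})) \in A] = \int_{S} p_{\text{train},\mathcal{N}_1}(\bm{s})\, d\bm{s}, \qquad \mathbb{P}_{D_{te}}[f(\mathcal{N}_1(\bm{x})) \in A] = \int_{S} p_{\text{test},\mathcal{N}_1}(\bm{s})\, d\bm{s},
\end{align*}
and then apply the pointwise consequence of Assumption \ref{assumption:ds}. The support-coverage hypothesis $\max_{\bm{s}} \frac{p_{\text{test},\mathcal{N}_1}(\bm{s})}{p_{\text{train},\mathcal{N}_1}(\bm{s})} \leq C$ yields the pointwise bound $p_{\text{test},\mathcal{N}_1}(\bm{s}) \leq C\, p_{\text{train},\mathcal{N}_1}(\bm{s})$ for (almost) every $\bm{s}$. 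Substituting this into the test integral and using monotonicity of the integral gives
\begin{align*}
    \mathbb{P}_{D_{te}}[f(\mathcal{N}_1(\bm{x})) \in A] \leq C \int_{S} p_{\text{train},\mathcal{N}_1}(\bm{s})\, d\bm{s} = C\, \mathbb{P}_{D_{tr}}[f(\mathcal{N}_1(\bm{x})) \in A] < C\delta,
\end{align*}
which is exactly the claim.

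This lemma is essentially immediate, so there is no genuine ``hard part'' in the sense of the earlier theorems; the only point requiring a word of care is the well-definedness of the density ratio where $p_{\text{train},\mathcal{N}_1}(\bm{s}) = 0$. Here the support-coverage form of Assumption \ref{assumption:ds} is exactly what is needed: on the region where $p_{\text{train},\mathcal{N}_1}$ vanishes, $p_{\text{test},\mathcal{N}_1}$ must vanish as well (otherwise the ratio would be unbounded), so such points contribute nothing to either integral and the pointwise inequality $p_{\text{test},\mathcal{N}_1} \leq C\, p_{\text{train},\mathcal{N}_1}$ holds on all of $S$ up to a null set. I would state this briefly to make the change of measure rigorous, after which the three displayed lines above complete the argument.
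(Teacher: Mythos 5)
Your proof is correct and follows essentially the same route as the paper's: both define the preimage set $f^{-1}(A)$, express the two probabilities as integrals of the pushforward feature densities over that set, and apply the pointwise bound $p_{\text{test},\mathcal{N}_1}(\bm{s}) \leq C\, p_{\text{train},\mathcal{N}_1}(\bm{s})$ from Assumption \ref{assumption:ds}. Your added remark on the vanishing-density region is a small rigor improvement over the paper's version but does not change the argument.
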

\begin{proof}
We denote the inverse mapping as $f^{-1}(A) = \{\bm{s}|f(\bm{s}) \in A\}$. Note that
\begin{align*}
    &\mathbb{P}_{D_{te}}[f(\mathcal{N}_1(\bm{x})) \in A] = \int_{\bm{s} \in f^{-1}(A)} p_{\text{test},\mathcal{N}_1}(\bm{s})d\bm{s} \\
    \leq &C \int_{\bm{s} \in f^{-1}(A)} p_{\text{train},\mathcal{N}_1}(\bm{s})d\bm{s} = C \mathbb{P}_{D_{tr}}[f(\mathcal{N}_1(\bm{x})) \in A]
\end{align*}
This finishes the proof for Lemma \ref{lem:ds}.
\end{proof}
\begin{remark}\textup{Assumption \ref{assumption:ds} is similar to $\mu$-exploration in batch reinforcement learning (RL), which aims at controlling the distribution shift in RL~\citep{xie2021batch}. Lemma \ref{lem:ds} suggests that the generalization with the same training and test support resembles the IID generalization. Unfortunately, this is the largest distribution shift we can assume as the impossibility of nonlinear out-of-support generalization is proved in~\citet{xu2020neural}.}
\end{remark}
 \section{Implementations}

\subsection{Detailed Architecture}
\label{subsec:detailed_arch}
For image input $\mathbf{X}$, we first process the images as a sequence of equal-sized patches $\bm{x}_i \in \mathbf{X}$ using a 1-layer convolutional neural network with layer normalization. Then those patches are added with positional embeddings, and the patch embeddings (tokens) are ready to be processed by later $L$ blocks.

To make a fair comparison with other reported methods on DomainBed, we choose the ViT-S/16 which has similar parameters and run-time memory cost with ResNet-50 as the basic architecture for our main model GMoE-S/16. The chosen ViT-S/16 model has an input patch size of $16 \times 16$, 6 heads in multi-head attention layers, and 12 transformer blocks.

We consider two-layer configurations, \emph{every-two} and \emph{last-two}, that are widely adopted in Sparse MoE design~\citep{DBLP:conf/nips/RiquelmePMNJPKH21}. The detailed architectures are illustrated in Figure~\ref{fig:gmoe_arch}. Besides, we also consider evaluating GMoE's generalization performance with larger scale models (\eg ViT-Base).

V-MoE~\citep{DBLP:conf/nips/RiquelmePMNJPKH21} trains $32$ experts model on ImageNet-21K dataset. However, considering the number of training data, we need to adopt a smaller number of experts. More experts require larger datasets to ensure that every expert is adequately trained since each expert only sees a small portion of the dataset. Domain generalization datasets are usually 1-2 orders of magnitude smaller than ImageNet-21K. Based on this fact, each GMoE block contains $6$ experts. For each image patch, the cosine router selects the TOP-2 index out of $6$ and routes the patch to the corresponding $2$ experts.

We put the ablation study results on layer configuration and larger size backbone model on Sec.~\ref{subsec:ablation_model_design}.

\begin{figure}[h]
    \centering
    \subfigure[\textit{Every 2} GMoE.]
    {
        \includegraphics[width=0.48\columnwidth]{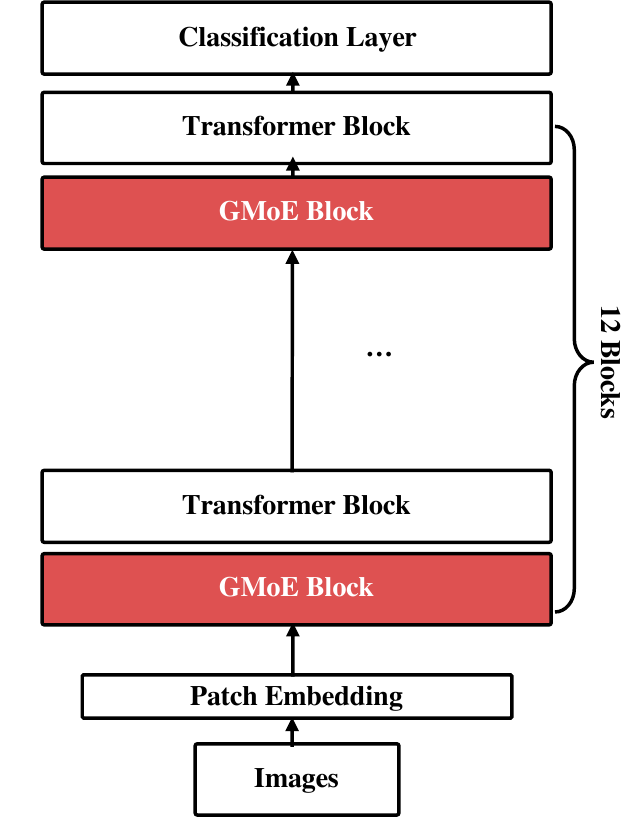}
        \label{fig:gmoe_vis1}
    }
    \subfigure[\textit{Last 2} GMoE.]
    {
        \includegraphics[width=0.48\columnwidth]{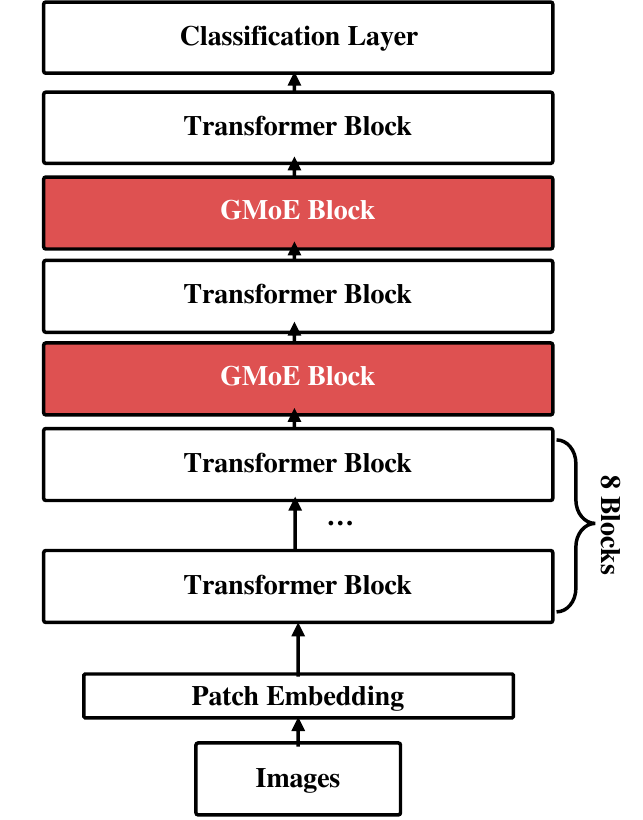}
        \label{fig:vit_vis1}
    }
    \caption{Diagrams of detailed GMoE architecture with two types of layer configuration.}
    \label{fig:gmoe_arch}
\end{figure}

\subsection{Model Initialization}
\label{subsec:model_init}
To make a fair comparison with other algorithms from DomainBed (mostly with ResNet-50 pretrained on ImageNet-1K), we also utilize the pretrained ViT models on ImageNet-1K from \href{https://github.com/facebookresearch/deit/blob/main/README_deit.md}{DeiT} to initialize our GMoE. Our practice avoids the generalization performance improvement of GMoE coming from a stronger pretrained model (\eg directly from V-MoE pretrained on ImageNet-21K). Meanwhile, there is no released pretrained MoE model on ImageNet-1K. In Table~\ref{tab:backbone_comparison}, we see ViT-S/16 and GMoE-S/16 achieve excellent generalization performance even with fewer parameters and smaller architecture.

In the GMoE block, we initialize the multiple experts with pretrained ViT block's FFN in the same position (\eg block-8 and 10) and multiply it by $E$ copies ($E$ is the number of experts). For cosine routers, we randomly initialize them in each GMoE block, to make them evenly choose experts at first.

\begin{table}[htp]
\centering
\caption{Comparison of the different backbone models used in the experiment. ImageNet-1K* denotes GMoE are initialized with ViT pretrained on ImageNet-1K and copy weights for different experts.}
\setlength{\tabcolsep}{6pt}
\renewcommand{\arraystretch}{1.6}
\label{tab:backbone_comparison}
\resizebox{0.85\textwidth}{!}{%
\begin{tabular}{c|cccc}
\toprule
\rowcolor{COLOR_MEAN}
\textbf{Backbone}           & ResNet-50 & ResNet-101 &  ViT-S/16    & GMoE-S/16 \\ \hline
\textbf{Pretrained Dataset} & ImageNet-1K & ImageNet-1K & ImageNet-1K & ImageNet-1K* \\ \hline
\textbf{Parameters}             & 25.6M   & 42.9M &  21.7M       & 33.8M \\ \hline
\textbf{Flops}             & 4.1G        & 7.9G & 4.6G        & 4.8G \\
\bottomrule
\end{tabular}%
}
\end{table}

\subsection{Perturbation}
Another specific design for the router is adding Gaussian noise to increase stability. Hard routing brings sparsity benefits while also causing discontinuities in routing. In a TOP-2 selection routing, a small perturbation would cause order changes if the largest output and second-largest output are close.
In domain generalization, data come from diverse domains and we need to learn a model to capture the invariant semantic meaning of an object.
Existing work~\citep{DBLP:journals/corr/abs-2208-02813} theoretically claims that adding noise to the gating function provides a smooth transition of the gating function during training. Empirically, we find adding noise with a standard deviation of $\frac{1}{N}$ would empirically improve performance and stabilize training dynamics.

\subsection{Auxiliary loss}
\label{subsec:aux_loss}
It has been observed that the gating network tends to converge to a self-reinforcing imbalanced state, where a few experts are more frequently activated than others. Following~\citet{DBLP:conf/nips/RiquelmePMNJPKH21,DBLP:conf/iclr/ShazeerMMDLHD17}, we define an importance loss $\mathcal{L}_{\text{imp}}$ to encourage balanced usage of experts. The importance of expert $e$ is defined as the normalization of the gating function's output in a batch of images:
\begin{align}
\label{eq:importance}
    \text{imp}_{e} (\textbf{X})= \sum\limits_{\textbf{x} \in \textbf{X}} g(\bm{x})_{e}
\end{align}
And the loss $\mathcal{L}_{imp}$ is defined by the squared coefficient of variation of $\text{imp}(\textbf{X})$:
\begin{align*}
   \mathcal{L}_{\text{imp}} = \left(\frac{\textsc{std}(\text{imp}(\textbf{X}))}{\textsc{mean}(\text{imp}(\textbf{X}))}\right)^{2}
\end{align*}
While importance loss seeks that all experts have on average similar output routing weights, it is still likely some experts would not be routed (see example in Table~\ref{tab:simple_case}).
\begin{table}[htp]
\centering
\caption{Simple example on $\text{TOP}_{1}$ score (before SoftMax operation) where average weights across patches ($x_{1,2,3}$) are balanced. However, Expert 2 has always been ignored.\\}
\label{tab:simple_case}

\setlength{\tabcolsep}{14pt}
\renewcommand{\arraystretch}{1.3}
\footnotesize
\begin{tabular}{c|cccc|c}
\toprule
\rowcolor{COLOR_MEAN}
\textbf{Patch} & \textbf{Expert 1} & \textbf{Expert 2} & \textbf{Expert 3} & \textbf{Expert 4} & \textbf{Selected Expert} \\ \hline
$x_1$      & 0.9 & 0.4 & 0.1 & 0.2 & Expert 1 \\ \hline
$x_2$      & 0.2 & 0.4 & 0.9 & 0.1 & Expert 3 \\ \hline
$x_3$      & 0.1 & 0.4 & 0.2 & 0.9 & Expert 4 \\ \bottomrule
\end{tabular}%
\end{table}

To address this issue, besides balancing across patches, we also need to encourage balanced assignment across experts. Ideally, we expect a balanced assignment via a load loss $\mathcal{L}_{\text{load}}$. However, the assignment is a discrete value; then we expect a proxy to make it differentiable in back-propagation. Following the idea in ~\citet{DBLP:conf/nips/RiquelmePMNJPKH21,DBLP:conf/iclr/ShazeerMMDLHD17,mustafa2022multimodal}, for each token $\bfx$ and each expert $e_i$, we could compute the probability of $e_i$ once being selected in $\text{TOP}_{k}$ after SoftMax operation, and then kept if re-sampling again \textit{only} the noise for expert $e_i$. In other words, we are computing the probability that having expert $e_i$ still being among the $\text{TOP}_k$ while re-sampling again \textit{only} the noise of the expert. This probability is formally defined as in Eq.~\ref{eq:load_prob}, where $\eta_{k}$ the $K$-th largest entry after SoftMax operation and $\Phi$ is the cumulative distribution function of a Gaussian distribution.
\begin{align}
\label{eq:load_prob}
    p_{e}(\bm{x}) = 1 - \Phi\left(\frac{\eta_k - (\bm{E}^{T} \bm{W} \bm{x})_{e}}{\sigma}\right)
\end{align}
Then the expert load could be defined as $\text{load}_{e}(\textbf{X}) = \sum_{x \in \textbf{X}} p_{e}(\bm{x})$ and the whole experts load is defined as $\text{load}(\textbf{X}) = \{\text{load}_{e} (\textbf{X})\}_{e=1}^{E}$ .The load loss $\mathcal{L}_{\text{load}}$ is defined by
\begin{align*}
   \mathcal{L}_{\text{load}}(\textbf{X}) = \left(\frac{\textsc{std} (\text{load}(\bm{x}))}{\textsc{mean} (\text{load}(\bm{x}))}\right)^{2}
\end{align*}
With the above adaptations for DG tasks, GMoE still works with a simple overall loss:
\begin{align*}
    \mathcal{L}(\textbf{X}) = \mathcal{L}_{\text{classification}} (\textbf{X}) + \frac{1}{2} \lambda (\mathcal{L}_{\text{imp}} (\textbf{X}) + \mathcal{L}_{\text{load}} (\textbf{X}))
\end{align*}
with a hyperparameter $\lambda > 0$ to balance the main task loss and the auxiliary loss, in which usually we view equal importance of the load and importance loss.

 \clearpage
\section{Experimental Results}
\label{sec:extended_experiments}
\subsection{DomainBed Details}
\label{subsec:domainbed}
\paragraph{Benchmark Datasets}
Following previous DG studies, we mainly evaluate our proposed method and report empirical results on DomainBed~\citep{DBLP:conf/iclr/GulrajaniL21} with $8$ benchmark datasets, PACS, VLCS, OfficeHome, TerraIncognita, DomainNet, SVIRO, Wilds-Camelyon and Wilds-FMOW. Besides DomainBed, we also conduct experiments on the self-created CUB-DG dataset for model analysis. In the following, we will provide the details of different datasets.

\paragraph{Dataset Details}
In Table~\ref{tab:dataset}, we provide statistics for the 8 datasets in DomainBed, as well as our self-created dataset CUB-DG (e.g., number of domains, categories and images).
\begin{table}[ht]
\setlength{\tabcolsep}{12pt}
\renewcommand{\arraystretch}{1.4}
\centering
\caption{Statistics of DomainBed datasets.}
\label{tab:dataset}
\resizebox{0.9\textwidth}{!}{%
\begin{tabular}{c|ccccc}
\toprule
\rowcolor{COLOR_MEAN}
\textbf{Dataset} & \textbf{PACS} & \textbf{VLCS} & \textbf{OfficeHome} & \textbf{TerraInc} & \textbf{DomainNet}  \\ \hline
\# Domains          & 4             & 4             & 4                   & 4                       & 6    
 \\ \hline
\# Classes          & 7             & 5             & 65                  & 10                      & 345  
 \\ \hline
\# Examples         & 9,991         & 10,729        & 15,588              & 24,788                  & 586,575 \\
\bottomrule
\end{tabular}%
}
\end{table}
\begin{table}[ht]
\setlength{\tabcolsep}{14pt}
\renewcommand{\arraystretch}{1.4}
\centering
\caption{Statistics of Wilds and CUB-DG. TerraInc stands for TerraIncognita, Camelyon stands for Wilds-Camelyon and FMOW stands for Wilds-FMOW.}
\label{tab:dataset2}
\resizebox{0.8\textwidth}{!}{%
\begin{tabular}{c|cccc}
\toprule
\rowcolor{COLOR_MEAN}
\textbf{Dataset} & \textbf{SVIRO} & \textbf{W-Camelyon} & \textbf{W-FMOW} & \textbf{CUB-DG} \\ \hline
\# Domains & 10             & 5                       & 6 & 4 \\ \hline
\# Classes & 7              & 2                       & 62 & 200 \\ \hline
\# Examples & 56,000         & 455,954                 & 523,846 & 47,152 \\ \bottomrule
\end{tabular}%
}
\end{table}

In detail, the $9$ multi-domain image classification datasets are comprised of:

\begin{enumerate}
    \item \textbf{PACS}~\citep{DBLP:conf/iccv/LiYSH17} comprises four domains $d \in $ \{art, cartoons, photos, sketches\}. This dataset contains 9, 991 examples of dimension (3, 224, 224) and 7 classes.
    \item \textbf{VLCS}~\citep{DBLP:conf/iccv/FangXR13} comprises photographic domains $d \in $ \{Caltech101, LabelMe, SUN09, VOC2007\}. This dataset contains 10, 729 examples of dimension (3, 224, 224) and 5 classes.
    \item \textbf{Office-Home}~\citep{DBLP:conf/cvpr/VenkateswaraECP17} includes domains $d \in $ \{art, clipart, product, real\}. This dataset contains 15, 588 examples of dimension (3, 224, 224) and 65 classes.
    \item \textbf{TerraIncognita}~\citep{DBLP:conf/eccv/BeeryHP18} contains photographs of wild animals taken by camera traps at locations $d \in \{\text{L100}, \text{L38}, \text{L43}, \text{L46}\}$. Our version of this dataset contains 24, 788 examples of dimension (3, 224, 224) and 10 classes.
    \item \textbf{DomainNet}~\citep{DBLP:conf/iccv/PengBXHSW19} has 6 domains $d \in$ \{clipart, infograph, painting, quickdraw, real, sketch\}. This dataset contains 586, 575 examples of size (3, 224, 224) and 345 classes.
    \item \textbf{SVIRO}~\citep{DBLP:conf/wacv/CruzWBSS20} is a Synthetic dataset for Vehicle Interior Rear seat Occupancy across 10 different vehicles, the 10 domains. \\ The domains are $d \in \{\text{aclass}, \text{escape}, \text{hilux}, \text{i3}, \text{lexus}, \text{tesla}, \text{tiguan}, \text{tucson}, \text{x5}, \text{zoe}\}$. This dataset for image classification contains 56,000 image examples of size (3, 224, 224) and 7 classes.
    \item \textbf{Wilds-Camelyon}~\citep{DBLP:conf/icml/KohSMXZBHYPGLDS21} dataset contains histopathological image slides collected and processed by different hospitals and curated by Wilds benchmark. It contains 455,954 examples of dimension (3, 224, 224) and 2 classes from 5 hospitals.
    \item \textbf{Wilds-FMOW} ~\citep{DBLP:conf/icml/KohSMXZBHYPGLDS21} dataset is a variant of the functional map of the world dataset~\citep{DBLP:conf/cvpr/ChristieFWM18} and contains satellite images of 62 buildings or land classes across 6 regions. The image example is of size (3, 224, 224).
    \item \textbf{CUB-DG} dataset is built from the original Caltech-UCSD Birds (CUB) dataset~\citep{wah2011caltech}. We stylize the original images into three domains, Candy, Mosaic, and Undie, while keeping the original images as the Natural domain. Overall, CUB-DG has 4 domains $d \in $ \{Natural, Candy, Mosaic, Udnie\}. It contains 47,152 bird examples and are categorized into 200 classes. Each image has detailed annotations: 1 category label, 15 positional attributes (\eg left wing, back), 312 binary attributes (\eg \texttt{has\_bill\_shape=dagger}), and 1 bounding box information. We treat the 15 positional attributes as visual attributes of a bird image, and thus we could evaluate the correlation between visual attributes and expert selections.
\end{enumerate}

\paragraph{Evaluation protocols}
We evaluate our proposed method with two protocols on DomainBed~\citep{DBLP:conf/iclr/GulrajaniL21} benchmark.

For \textbf{train-validation selection}, we split each training domain into training and validation subsets. Then, we pool the validation subsets of each training domain to create an overall validation set. Finally, we choose the model maximizing the accuracy on the overall validation set, and report the final accuracy on one leave-out test domain. 

For \textbf{leave-one-domain-out validation selection}, we train the model on all training domains and one domain out as the validation set. We choose the model maximizing the accuracy on the leave-out validation domain, and report accuracy on another leave-out test domain. We should emphasize that the leave-one-domain-out validation setting means we choose two domains as leave-out domains. This term is used in conformity in literature~\citep{DBLP:journals/corr/abs-2203-10789,cha2021swad}.

\paragraph{Standard error bars} We train the model three times with random seeds on weight initializations. The mean and standard error of these repetitions are reported.

\begin{table}[tp]
\centering
\caption{Hyperparameters to reproduce best performance of GMoE on each dataset.}
\setlength{\tabcolsep}{8pt}
\renewcommand{\arraystretch}{1.4}
\label{tab:hparams}
\resizebox{0.85\textwidth}{!}{%
\begin{tabular}{c|ccccc}
\toprule
\rowcolor{COLOR_MEAN}
\textbf{Hyperparameters} & \textbf{PACS} & \textbf{VLCS} & \textbf{OfficeHome} & \textbf{TerraInc} & \textbf{DomainNet} \\ \hline
Learning Rate            & $3 \times 10^{-5}$      & $3 \times 10^{-5}$      & $1 \times 10^{-5}$            & $5 \times 10^{-5}$          & $5 \times 10^{-5}$           \\
Weight Decay             & 0      & $1 \times 10^{-6}$      & $1 \times 10^{-6}$           & $1 \times 10^{-4}$          & 0                  \\
\bottomrule
\end{tabular}%
}
\end{table}

\subsection{Training Details \& Hyperparameters}
\label{subsec:train_details}
In this section, we report the training details of our experiments. We optimize models using Adam optimizer~\citep{DBLP:journals/corr/KingmaB14} with slightly different parameters on different datasets (see Table~\ref{tab:hparams}). The training and inference batch size is set to 32 for each domain.

\begin{table}[htp]
\centering
\caption{Train-validation selection performance comparison with 5K and 15K training iterations, where \textit{train-val} stands for train-validation selection and \textit{lodo} stands for leave-one-domain-out selection. The place marked --- is not applicable as stated in Sec.~\ref{subsec:lodo}.}
\setlength{\tabcolsep}{12pt}
\renewcommand{\arraystretch}{1.4}
\label{tab:iterations_comparison}
\resizebox{\textwidth}{!}{%
\begin{tabular}{c|ccc}
\toprule
\rowcolor{COLOR_MEAN}
\textbf{Iterations}               & \textbf{Algorithms \& Models} & \textbf{DomainNet (\textit{train-val})} & \textbf{DomainNet (\textit{lodo})} \\ \hline
\multirow{8}{*}{5K}  & ERM~\citep{vapnik1991principles}  & 41.2 $\pm$ 0.2  & 40.6 $\pm$ 0.2     \\
& IRM~\citep{DBLP:journals/corr/abs-1907-02893}  & 33.9 $\pm$ 2.8  & 33.5 $\pm$ 3.0     \\
& DANN~\citep{DBLP:journals/jmlr/GaninUAGLLML16}  & 41.8 $\pm$ 0.2 & 38.2 $\pm$ 0.2 \\
& CORAL~\citep{DBLP:conf/eccv/SunS16}  & 41.8 $\pm$ 0.2 & 41.1 $\pm$ 0.1 \\
& MMD~\citep{DBLP:conf/cvpr/LiPWK18}   & 39.4 $\pm$ 0.8 & 23.4 $\pm$ 9.4 \\
& MLDG~\citep{DBLP:conf/aaai/LiYSH18} & 41.2 $\pm$ 0.1 & 41.0 $\pm$ 0.2 \\
& Fish~\citep{DBLP:journals/corr/abs-2104-09937}  & 42.7 $\pm$ 0.2 & --- \\
& Fishr~\citep{rame2021fishr} & 41.7 $\pm$ 0.2  & ---  \\
& ViT-S/16~\citep{DBLP:conf/iclr/DosovitskiyB0WZ21} & 42.4 $\pm$ 0.0 & 42.0 $\pm$ 0.2 \\
& \textbf{GMoE-S/16}  & \textbf{44.6 $\pm$ 0.3}  & \textbf{44.7 $\pm$ 0.2}     \\ \midrule
\multirow{3}{*}{15K} & Swad~\citep{DBLP:conf/nips/ChaCLCPLP21}  & 46.5 $\pm$ 0.1  & ---     \\
& MIRO~\citep{DBLP:journals/corr/abs-2203-10789}   & 47.4 $\pm$ 0.0 & ---      \\
& ViT-S/16~\citep{DBLP:conf/iclr/DosovitskiyB0WZ21} & 47.1 $\pm$ 0.3 & 46.1 $\pm$ 0.4 \\
& \textbf{GMoE-S/16} & \textbf{48.7 $\pm$ 0.2}  & \textbf{48.4} $\pm$ 0.3 \\ \bottomrule
\end{tabular}%
}
\end{table}

Since~\citet{DBLP:conf/iclr/GulrajaniL21} only evaluates baselines for 5K iterations on DomainNet, while 5K iterations are less than 2 epochs on DomainNet. This results in insufficient training on this dataset and it can not fully reveal models' performance with different algorithms. Subsequent state-of-the-art works~\citep{cha2021swad}~\citep{DBLP:journals/corr/abs-2203-10789} re-evaluated ERM and proposed their methods on 15,000 iterations of DomainNet.

To compare with those SOTA counterparts, we report the results of 15K iterations on DomainNet in Table~\ref{tab:main_results}. However, we also train GMoE for 5K iterations to make a fair comparison with both types of algorithms (see Table~\ref{tab:iterations_comparison}). For the rest of the datasets on DomainBed, we train GMoE for 5K iterations following the original setting.

\subsection{Leave-one-domain-out Results}
\label{subsec:lodo}
We have demonstrated the experimental results in Table~\ref{tab:main_results} with train-validation selection. In this section, we will further provide the results of GMoE on leave-one-domain-out selection, which is a more challenging setting. In this setting, we should train GMoE on 5 datasets with in total of 61 individual experiments and summarize them as average performance. Due to the huge amount of experiments, recent methods rarely evaluate their methods on such setting, except for those provided by DomainBed. In the Table~\ref{tab:lodo_results}, we can find that GMoE outperforms previous methods in all 5 datasets.

\begin{table}[htp]
\setlength{\tabcolsep}{12pt}
\renewcommand{\arraystretch}{1.4}
\centering
\caption{Overall out-of-domain accuracies (\%) with leave-one-domain-out selection criterion.}
\label{tab:lodo_results}
\resizebox{\textwidth}{!}{%
\begin{tabular}{c|ccccc}
\toprule
\rowcolor{COLOR_MEAN}
\textbf{Algorithm} & \textbf{PACS} & \textbf{VLCS} & \textbf{OfficeHome} & \textbf{TerraInc} & \textbf{DomainNet} \\ \midrule
ERM (ResNet50) & 83.0 \tiny{$\pm$ 0.7 } & 77.2 \tiny{$\pm$ 0.4 } & 65.7 \tiny{$\pm$ 0.5 } & 41.4 \tiny{$\pm$ 1.4 } & 40.6 \tiny{$\pm$ 0.2} \\
IRM  & 81.5 \tiny{$\pm$ 0.8 } & 76.3 \tiny{$\pm$ 0.6 } & 64.3 \tiny{$\pm$ 1.5 } & 41.2 \tiny{$\pm$ 3.6 } & 33.5 \tiny{$\pm$ 3.0} \\
DANN  & 81.0 \tiny{$\pm$ 1.1 } & 76.9 \tiny{$\pm$ 0.4 } & 64.9 \tiny{$\pm$ 1.2 } & 44.4 \tiny{$\pm$ 1.1 } & 38.2 \tiny{$\pm$ 0.2} \\
CORAL & 82.6 \tiny{$\pm$ 0.5 } & 78.7 \tiny{$\pm$ 0.4 } & 68.5 \tiny{$\pm$ 0.2 } & 46.3 \tiny{$\pm$ 1.7 } & 41.1 \tiny{$\pm$ 0.1} \\
MMD & 83.2 \tiny{$\pm$ 0.2 } & 77.3 \tiny{$\pm$ 0.5 } & 60.2 \tiny{$\pm$ 5.2 } & 46.5 \tiny{$\pm$ 1.5 } & 23.4 \tiny{$\pm$ 9.5} \\
MLDG & 77.2 \tiny{$\pm$ 0.9 } & 82.9 \tiny{$\pm$ 1.7 } & 66.1 \tiny{$\pm$ 0.5 } & 46.2 \tiny{$\pm$ 0.9 } & 41.0 \tiny{$\pm$ 0.2} \\
ViT-S/16 & 86.6 \tiny{$\pm$ 0.1 } & 78.3 \tiny{$\pm$ 0.2 } & 71.9 \tiny{$\pm$ 0.2 } & 41.9 \tiny{$\pm$ 0.3 } & 46.1 \tiny{$\pm$ 0.2} \\
\rowcolor{ROW_COLOR}
\textbf{GMoE-S/16} & \textbf{87.1 \tiny{$\pm$ 0.3}} & \textbf{80.0 \tiny{$\pm$ 0.1}} & \textbf{73.9 \tiny{$\pm$ 0.2} } & \textbf{46.7 \tiny{$\pm$ 0.3} } & \textbf{48.4 \tiny{$\pm$ 0.2}} \\
\bottomrule
\end{tabular}%
}
\end{table}

\subsection{Ablation Study with DG Algorithms}
\label{subsec:ablation_dg_algos}
This section presents ablation study to compare GMoE with DG algorithms and ViT with DG algorithms, as shown in Table \ref{tab:vit_dg}. From the table, we see that GMoE significantly outperforms ViT and ResNet under a variety of DG algorithms, showing the generalizability of the proposed backbone architecture.

\begin{table}[htp]
\setlength{\tabcolsep}{8pt}
\renewcommand{\arraystretch}{1.4}
\centering
\caption{Overall accuracies (\%) with train-validation selection criterion.}
\label{tab:vit_dg}
\resizebox{\textwidth}{!}{%
\begin{tabular}{c|c|ccccc|c}
\toprule
\rowcolor{COLOR_MEAN}
\textbf{Algorithms} & \textbf{Backbones}  & \textbf{PACS} & \textbf{VLCS} & \textbf{OfficeHome} & \textbf{TerraInc} & \textbf{DomainNet} & \textbf{Avg} \\ \midrule
\multirow{2}{*}{Swad}  & ViT-S/16           & 88.3          & 80.4          & 72.7          & 46.9          & 48.8          & 67.4          \\
                       & \textbf{GMoE-S/16} & \textbf{88.6} & \textbf{80.8} & \textbf{74.5} & \textbf{49.1} & \textbf{49.6} & \textbf{68.5} \\ \hline
\multirow{3}{*}{Fish}  & ResNet50           & 85.5          & 77.8          & 68.6          & 45.1          & 43.1          & 64            \\
                       & ViT-S/16           & 87.3          & 78            & 72.2          & 42.7          & 47.3          & 65.5          \\
                       & \textbf{GMoE-S/16} & \textbf{87.9} & \textbf{79.7} & \textbf{73.0}   & \textbf{46.9} & \textbf{48.8} & \textbf{67.1} \\ \hline
\multirow{3}{*}{Fishr} & ResNet50           & 85.5          & 77.8          & 68.6          & 47.4          & 41.7          & 64.2          \\
                       & ViT-S/16           & 85.6          & \textbf{79.1} & 71.8          & 44.1          & 47.6          & 65.6          \\
                       & \textbf{GMoE-S/16} & \textbf{88.3} & 78.8  & \textbf{72.7} & \textbf{45.6} & \textbf{48.3} & \textbf{66.7} \\ \bottomrule
\end{tabular}
}
\end{table}

\subsection{Single-source Domain Generalization Results}
\label{subsec:purely_ood_results}
In this section, we will demonstrate the \emph{single source domain generalization} results on DomainNet's 6 domains. We select different training domains in turn and take the remaining 5 domains as test domains. In Table~\ref{tab:purely_ood_app}, we show the OOD results when models are trained on \emph{only one} domain and test on 5 test domains, as well as the IID results on the training domain's validation set. We specify IID and OOD results in different colors.


\begin{table}[ht]
\centering
\caption{Single-source DG results on DomainNet. We alternatively choose 1 training domain and the other 5 as test domains. We report \colorbox{COLOR_MEAN}{IID generalization} results on training domain's validation set with grey color, and \colorbox{ROW_COLOR}{OOD generalization} results on test domains' validation set with light cyan color. \\}
\label{tab:purely_ood_app}
\setlength{\tabcolsep}{14pt}
\renewcommand{\arraystretch}{1.3}
\footnotesize
\begin{tabular}{c|cccccc}
\toprule
\rowcolor{ROW_COLOR}
\cellcolor{COLOR_MEAN}\textbf{Clipart} & \cellcolor{COLOR_MEAN}\textbf{Clipart} & \textbf{Info} & \textbf{Paint} & \textbf{Quick} & \textbf{Real} & \textbf{Sketch} \\ \hline
ResNet50 & 68.5 & 12.3 & 25.8 & 9.5 & 39.6 & 37.1 \\
ResNet101 & 70.4 & 12.2 & 28.6 & 12.4 & 43.0 & 40.7 \\
ViT-S/16 & 74.7 & 16.2 & 35.3 & 12.7 & 50.9 & 40.4 \\
GMoE-S/16 & \textbf{76.3} & \textbf{16.8} & \textbf{36.2} &  \textbf{13.9} & \textbf{51.6} & \textbf{42.6}
\\ \midrule
\rowcolor{ROW_COLOR}
\cellcolor{COLOR_MEAN}\textbf{Info} & \textbf{Clipart} & \cellcolor{COLOR_MEAN}\textbf{Info} & \textbf{Paint} & \textbf{Quick} & \textbf{Real} & \textbf{Sketch} \\ \hline
ResNet50 & 27.9 & 34.0 & 23.3 & 2.0 & 34.3 & 24.5 \\
ResNet101 & 29.2 & 35.6 & 24.0 & 3.1 & 34.1 & 24.6 \\
ViT-S/16 & 36.3 & 40.2 & 33.6 & \textbf{5.1} & 47.4 & 30.7 \\
GMoE-S/16 & \textbf{36.7} & \textbf{41.0} & \textbf{34.2} & 4.5 & \textbf{48.2} & \textbf{30.8} \\ \midrule
\rowcolor{ROW_COLOR}
\cellcolor{COLOR_MEAN}\textbf{Paint} & \textbf{Clipart} & \textbf{Info} & \cellcolor{COLOR_MEAN}\textbf{Paint} & \textbf{Quick} & \textbf{Real} & \textbf{Sketch} \\ \hline
ResNet50 & 37.1 & 12.9 & 62.7 & 2.2 & 49.3 & 33.3 \\
ResNet101 & 40.5 & 13.1 & 63.4 & 3.1 & 51.2 & 35.4 \\
ViT-S/16 & 42.7 & 15.9 & 69.0 & 5.0 & \textbf{56.4} & 37.0 \\
GMoE-S/16 & \textbf{43.5} & \textbf{16.1} & \textbf{69.3} & \textbf{5.3} & \textbf{56.4} & \textbf{38.0} \\
\rowcolor{ROW_COLOR}
\cellcolor{COLOR_MEAN}\textbf{Quick} & \textbf{Clipart} & \textbf{Info} & \textbf{Paint} & \cellcolor{COLOR_MEAN}\textbf{Quick} & \textbf{Real} & \textbf{Sketch} \\ \hline
ResNet50 & 17.3 & 1.1 & 2.8 & 62.4 & 7.3 & 9.2 \\
ResNet101 & 16.0 & 1.3 & 2.7 & 64.1 & 6.3 & 8.5 \\
ViT-S/16 & 22.1 & \textbf{2.0} & 6.3 & 66.2 & 11.8 & 11.8 \\
GMoE-S/16 & \textbf{22.8} & \textbf{2.0} & \textbf{6.4} & \textbf{66.8} & \textbf{12.2} & \textbf{12.9} \\ \midrule
\rowcolor{ROW_COLOR}
\cellcolor{COLOR_MEAN}\textbf{Real} & \textbf{Clipart} & \textbf{Info} & \textbf{Paint} & \textbf{Quick} & \cellcolor{COLOR_MEAN}\textbf{Real} & \textbf{Sketch} \\ \hline
ResNet50 & 39.5 & 13.0 & 40.0 & 4.4 & 73.6 & 30.5 \\
ResNet101 & 43.3 & 15.6 & 43.3 & 6.8 & 75.3 & 34.0 \\
ViT-S/16 & 46.9 & 18.0 & 46.0 & 5.0 & 79.5 & 33.4 \\
GMoE-S/16 & \textbf{47.4} & \textbf{18.5} & \textbf{46.9} & \textbf{7.5} & \textbf{79.6} & \textbf{36.4} \\ \midrule
\rowcolor{ROW_COLOR}
\cellcolor{COLOR_MEAN}\textbf{Sketch} & \textbf{Clipart} & \textbf{Info} & \textbf{Paint} & \textbf{Quick} & \textbf{Real} & \cellcolor{COLOR_MEAN}\textbf{Sketch} \\ \hline
ResNet50 & 48.9 & 11.7 & 31.6 & 10.8 & 40.7 & 63.4 \\
ResNet101 & 48.4 & 11.8 & 31.9 & 11.3 & 38.1 & 64.3 \\
ViT-S/16 & 52.4 & \textbf{14.4} & \textbf{38.6} & 13.4 & 48.2 & 67.4 \\
GMoE-S/16 & \textbf{52.9} & 14.1 & 36.6 & \textbf{13.7} & \textbf{49.0} & \textbf{68.4} \\
\bottomrule
\end{tabular}%
\end{table}

\subsection{Ablation Study on Model Design}
\label{subsec:ablation_model_design}
In order to validate discussions in Sec.~\ref{sec:impl:diff_teq} and Sec~\ref{subsec:detailed_arch}, we conduct experiments with different layer configurations, as well as adopting larger backbone (\eg ViT-Base) model. From the results in Table~\ref{tab:gmoe_arch_comp}, we see that the \textit{last two} configuration largely exceeds \textit{every two} configuration, thus verifying our idea that visual attributes exist in relatively high-level signals.
This result also demonstrates that the cosine router and larger model lead to better generalization performance.

\begin{table}[ht]
\centering
\caption{Train-validation selection performance comparison for different GMoE models. All GMoEs (S/16 and B/16) have $E = 6$ experts and $L = 12$ blocks. We specify the number of attention heads ($H$), the patch embedding size ($D$), layer configuration, and router's type in the table header.}
\setlength{\tabcolsep}{6pt}
\renewcommand{\arraystretch}{1.4}
\label{tab:gmoe_arch_comp}
\resizebox{\textwidth}{!}{%
\begin{tabular}{c|cccc|ccccc}
\toprule
\rowcolor{COLOR_MEAN}
\textbf{Algorithm} &
  \textbf{Config.} &
  \textbf{Router} &
  $H$ &
  $D$ &
  \textbf{PACS} &
  \textbf{VLCS} &
  \textbf{OfficeHome} &
  \textbf{TerraInc} &
  \textbf{DomainNet} \\ \toprule
GMoE-S/16 & \textit{Every 2} & Linear & 6  & 384  & 81.8 \tiny{$\pm$ 0.2} & 75.0 \tiny{$\pm$ 0.1}  &   64.0 \tiny{$\pm$ 0.4}  & 32.5 \tiny{$\pm$ 0.7} &  46.3 \tiny{$\pm$ 0.3}    \\
GMoE-S/16 & \textit{Every 2} & Cosine & 6  & 384  & 81.4 \tiny{$\pm$ 0.1} & 74.8 \tiny{$\pm$ 0.2}  &   62.2 \tiny{$\pm$ 0.4}  & 40.9 \tiny{$\pm$ 0.3} &  46.4 \tiny{$\pm$ 0.2}    \\ \hline
GMoE-S/16 & \textit{Last 2}  & Linear & 6  & 384  & 87.8 \tiny{$\pm$ 0.2} & 80.0 \tiny{$\pm$ 0.0}  & 72.7 \tiny{$\pm$ 0.2}  & 46.7 \tiny{$\pm$ 0.2} & 48.3 \tiny{$\pm$ 0.1} \\
GMoE-S/16 & \textit{Last 2}  & Cosine & 6  & 384  & 88.1 \tiny{$\pm$ 0.1} & 80.2 \tiny{$\pm$ 0.2}  & 74.2 \tiny{$\pm$ 0.4}  & 48.5 \tiny{$\pm$ 0.4} & 48.7 \tiny{$\pm$ 0.2} \\ \hline
GMoE-B/16 & \textit{Last 2}  & Cosine & 12 & 768 & 89.4 \tiny{$\pm$ 0.1} & 81.2 \tiny{$\pm$ 0.1} & 77.2 \tiny{$\pm$ 0.4} & 49.3 \tiny{$\pm$ 0.3} & 51.3 \tiny{$\pm$ 0.1} \\ \bottomrule
\end{tabular}%
}
\end{table}

\begin{table}[ht]
\centering
\caption{Comparison of different training recipes on ResNet-50 V2 and ViT-S/16.}
\setlength{\tabcolsep}{6pt}
\renewcommand{\arraystretch}{1.4}
\label{tab:training_recipe}
\resizebox{\textwidth}{!}{%
\begin{tabular}{c|ccccccccccc|c}
\toprule
\textbf{Recipe} & \textbf{LR Opt.} & \textbf{TrivialAug.} & \textbf{Ep.} & \textbf{Rand Er.} & \textbf{Label Sm.} & \textbf{FixRes Mt.} & \textbf{WDT} & \textbf{IRT} & \textbf{IN1K} \\ \midrule
ResNet-50 V2 & \greencheck & \greencheck & 600 & \greencheck & \greencheck & \greencheck & \greencheck & \greencheck & 80.8 \\
ViT-S/16 & \greencheck & \redcheck & 300 & \greencheck & \redcheck & \redcheck & \redcheck & \greencheck & 79.9 \\ \bottomrule
\end{tabular}%
}
\end{table}

\begin{table}[ht]
\centering
\caption{Train-validation selection performance comparison for ViT, GMoE, and other DG algorithms with ResNet-50 V2 as backbone model. On DomainNet, results are reported with 15K iterations.}
\label{tab:weights_v2}
\setlength{\tabcolsep}{14pt}
\renewcommand{\arraystretch}{1.4}
\resizebox{\textwidth}{!}{%
\begin{tabular}{c|ccccc}
\toprule
\rowcolor{COLOR_MEAN}
\textbf{Algorithm} & \textbf{PACS} & \textbf{VLCS} & \textbf{OfficeHome} & \textbf{TerraInc} & \textbf{DomainNet} \\ \midrule
ERM (w/ ResNet-50 V2)                & 87.2          & 78.2          & 68.7  & 49.9   &   45.3        \\
Fishr (w/ ResNet-50 V2)             & 87.5          & 77.9          & 70.4     & \textbf{51.7} &   47.0       \\
ViT-S/16           & 86.2          & 79.7          & 72.2     & 42.0   & 47.1       \\
\rowcolor{ROW_COLOR}
\textbf{GMoE-S/16}         & \textbf{88.1}          & \textbf{80.1}          & \textbf{74.2} & 48.5  & \textbf{48.7} \\ \bottomrule
\end{tabular}%
}
\end{table}

\subsection{Ablation Study on ResNet with Strong Data Augmentation}
\label{subsec:ablation_init_weights}
In this part, we will go into greater depth about the pre-trained model's effect on DG performance. As stated in Sec.~\ref{subsec:model_init}, GMoE is initialized from the same architecture ViT models, where pre-trained weights are from~\citet{DBLP:conf/icml/TouvronCDMSJ21}. In addition, the pre-training on ViT and ResNet models may adopt different training recipes. And it is generally believed that models with stronger data augmentation during pre-training could have better performance on downstream tasks.

To verify this difference in detail, we consider comparing GMoE-S/16's pre-trained model, ViT-S/16, with a stronger pre-trained ResNet-50 V2 model\footnote{From \textsc{torchvision}'s \href{https://pytorch.org/blog/how-to-train-state-of-the-art-models-using-torchvision-latest-primitives/}{Pretrained Models}.} with significant training tricks data augmentations. The pre-training details on ViT-S/16 and ResNet V2 are listed in Table~\ref{tab:training_recipe}.

Due to the space limit, we use abbreviations for some terms. In detail, they are:

\begin{itemize}
    \item \textbf{LR Opt.} stands for learning rate optimization.
    \item \textbf{TrivialAug.} stands for trivial augmentation~\citep{muller2021trivialaugment}.
    \item \textbf{Ep.} stands for pre-training epochs.
    \item \textbf{Rand Er.} stands for random erasing~\citep{zhong2017random}.
    \item \textbf{Label Sm.} stands for label smoothing~\citep{szegedy2016rethinking}.
    \item \textbf{FixRes Mt.} stands for FixRes mitigations~\citep{touvron2019fixing}.
    \item \textbf{WDT} stands for weight decay tuning.
    \item \textbf{IRT} stands for inference resize tuning~\citep{touvron2019fixing}.
\end{itemize}

The results in Table~\ref{tab:weights_v2} demonstrate that GMoE and ViT could still remain higher performance than the strong pre-trained model ResNet V2 due to the benefits of the backbone architecture. 


\subsection{Computational cost comparison}
Having shown the effects of GMoE's performance on different domain generalization benchmarks, we now conduct efficiency analysis with respect to \textbf{iteration time} and \textbf{run-time memory} during training and inference. Since algorithms developed from DomainBed mainly adopt the same architectures (\eg ResNet50, Linear Classifier), traditional model complexity measures such as flops and model parameters can not truly reflect the difference in efficiency. By evaluating the above two metrics, different algorithms with more complex loss design or gradient constraints will cause larger overheads with respect to iteration time or run-time memory. From the results in Table~\ref{tab:comp_cost}, we observe that GMoE achieves relatively low run-time memory and training step time among other competitors.

\begin{table}[htp]
\setlength{\tabcolsep}{6pt}
\renewcommand{\arraystretch}{1.4}
\centering
\caption{Comparison of training/inference iteration time and run-time memory for a mini-batch. A mini-batch is formed with 160 images in 224 $\times$ 224 resolutions from DomainBed. For both metrics, lower is better.}
\label{tab:comp_cost}
\resizebox{\textwidth}{!}{%
\begin{tabular}{c|cccccc|cc}
\toprule
\rowcolor{COLOR_MEAN}
\textbf{Training} & \textbf{ERM} & \textbf{DANN} & \textbf{IRM} & \textbf{Fish} & \textbf{Fishr} & \textbf{SWAD} &  \textbf{VIT-S/16} & \textbf{GMoE-S/16} \\ \hline
Step Time (s) $\downarrow$ & 1.01 & 1.02 & 1.10 & 2.79 & 1.10 & 1.21 &  0.90 & 0.98 \\ \hline
Run-time Memory (GB) $\downarrow$ & 13.40 & 13.42 & 13.40 &  3.41 & 15.25 & 14.32 &  11.15 & 12.28 \\ \midrule
\rowcolor{COLOR_MEAN}
\textbf{Inference} & \textbf{ERM} & \textbf{DANN} & \textbf{IRM} & \textbf{Fish} & \textbf{Fishr} & \textbf{SWAD} &  \textbf{VIT-S/16} & \textbf{GMoE-S/16} \\ \hline
Step Time (s) $\downarrow$ & 0.32 & 0.33 & 0.32 & 0.33 & 0.34 & 0.33 &  0.28 & 0.30 \\ \hline
Run-time Memory (GB) $\downarrow$ & 1.82 & 1.83 & 1.82 & 1.82 & 1.83 & 1.84 &  0.76 & 1.05 \\ \bottomrule
\end{tabular}%
}\end{table}
 \clearpage
\section{Model Analysis}

\subsection{CUB-DG Results \& Visualizations}
\label{subsec:cub_dg}

\paragraph{Generalization across Image Stylization}
To evaluate the model performance in generalizing to image stylization, we conduct experiments following DomainBed setting on CUB-DG. We compare GMoE with other three invariant learning DG algorithms, (i.e., DANN~\citep{DBLP:journals/jmlr/GaninUAGLLML16}, Fish~\citep{DBLP:journals/corr/abs-2104-09937} and Fishr~\citep{rame2021fishr}). Among them, DANN and Fish are widely discussed and cited papers, and Fishr is considered the best algorithm of its kind so far.

In this experiment, all three invariant learning methods, including ERM, adopt ResNet-50 as the backbone model. GMoE does not have any specific loss design and constraint. The only difference is the model architecture. In Table~\ref{tab:cub_dg}, we see that GMoE, the new backbone model for DG, is significantly more effective than those methods specifically designed to learn invariance with certain losses and gradient constraints.

\begin{table}[htp]
\setlength{\tabcolsep}{16pt}
\renewcommand{\arraystretch}{1.4}
\centering
\caption{Out-of-domain accuracy (\%) in each domain on CUB-DG dataset. The best is in bold.}
\label{tab:cub_dg}
\resizebox{0.75\textwidth}{!}{%
\begin{tabular}{c|cccc}
\toprule
\rowcolor{COLOR_MEAN}
\textbf{Algorithm} & \textbf{Candy} & \textbf{Mosaic} & \textbf{Natural} & \textbf{Udnie} \\ \hline
ERM (ResNet50)                & 64.3    & 20.8     & 75.3      & 77.9    \\
DANN               & 47.0    & 14.9     & 76.5      & 70.6    \\
Fish               & 62.6    & 22.4     & 84.5      & 77.7    \\
Fishr               & 62.4    & 24.3     & 78.8      & 73.8    \\
\rowcolor{ROW_COLOR}
\textbf{GMoE-S/16  }        & \textbf{83.1}    & \textbf{42.8}     & \textbf{89.3}      & \textbf{82.7}    \\
\bottomrule
\end{tabular}%
}
\end{table}

\paragraph{Experts Selection}
Following the analysis in Sec.~\ref{subsec:model_analysis}, we provide more visualization results in this section.

In Figure~\ref{fig:cub_experts_corr}-\ref{fig:cub_experts_corr2}, we provide the expert selection results in natural domain images across different bird classes. We use different colored lines that correspond to different visual attributes, and we connect these lines across images in a row. It indicates that experts focus on the same visual attributes while seeing different bird images from different angles. This suggests that mixture-of-experts have the ability to capture and process visual attributes \emph{across class}.

In Figure~\ref{fig:cub_vis1}-\ref{fig:cub_vis4}, we provide the expert selection results for the same image on four domains (with four types of stylization). In order to more vividly display the expert selection across domains, we assign different colors to different selected experts. We see that, in most cases, the same areas and visual attributes are handled by the same experts across domains. For example, in Figure~\ref{fig:cub_vis1}, expert 1,3 and 4 focus on the bird while expert 0,2 focus on the background. And it is consistent across four domains with different stylization. The results reveal that experts are potentially invariant in dealing with visual attributes \emph{across domains}.


\paragraph{Multi-head Attention Visualization}
\label{subsec:mha_results}
Ideally, the multi-head attention mechanism can be viewed as jointly attending to multiple places by ensembling multiple attention heads. Each attention head would focus on its specific attention relationship between all patches and inherently collaborate with each other.

\begin{figure}[htp]
    \centering
    \includegraphics[width=0.7\textwidth]{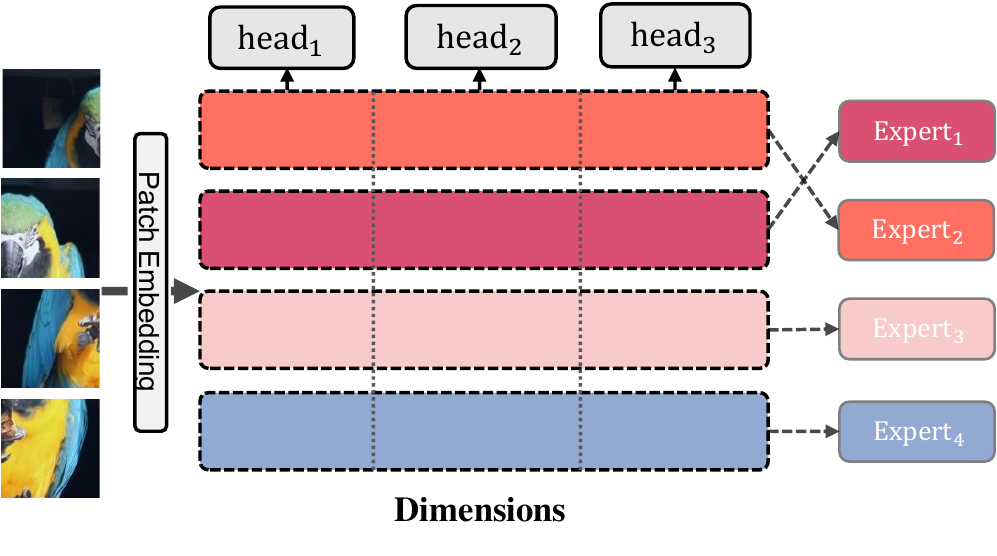}
    \caption{Diagram of a GMoE block, where different attention heads attend to different aspects of relations between patches. Different experts handle the learned attention in different patches. The router has been omitted for brevity.}
    \label{fig:grid_fashion}
\end{figure}

However, recent studies~\citep{DBLP:journals/corr/abs-2006-16362} demonstrate that multiple-head attention (MHA) layers could learn redundant key/query projections, i.e. some heads might attend to similar features in input space. This issue demonstrates that two heads $\text{head}_i$ and $\text{head}_j$ are computing the same key/query representations up to a unitary matrix $\mathbf{I} \in  \mathbb{R}^{d \times d}$ such that $W_{Q_i} = W_{Q_j} \mathbf{I}$ and $W_{K_i} = W_{K_j} \mathbf{I}$. In this case, even though the two heads are computing identical attention scores, i.e. $W_{Q_i} \mathbf{I} \mathbf{I}^{T} W_{K_i}^{T}$, the concatenation $[W_{Q_i}, W_{Q_j}] \in \mathbb{R}^{d \times 2d}$ can also be full rank, which indicates that some attention heads would focus on the same content but are agnostic to each other. We opine that the mixture-of-experts (MoE) layer could alleviate redundancy and increase disentanglement to some extent.

In Figure~\ref{fig:grid_fashion}, presented in a grid fashion, the MHA layer focuses on different signals (features) in different heads while MoE layer disentangles the information by handling each patch to different experts. This mechanism extends the network sparsity and hence is able to improve the model generalizability. In sum, the MoE layer is designed to leverage different experts to capture distinct visual attributes, as well as their correlations (via attention mechanism) to each other.


To see more about the learned representation in multi-head attention layer, we provide the visualization of last layer's attention activations of ViT and GMoE on Figure~\ref{fig:vis1}~\ref{fig:vis2}~\ref{fig:gmoe_vis3}~\ref{fig:gmoe_vis4}.

\clearpage




\begin{figure}
    \centering
    \includegraphics[width=0.95\textwidth]{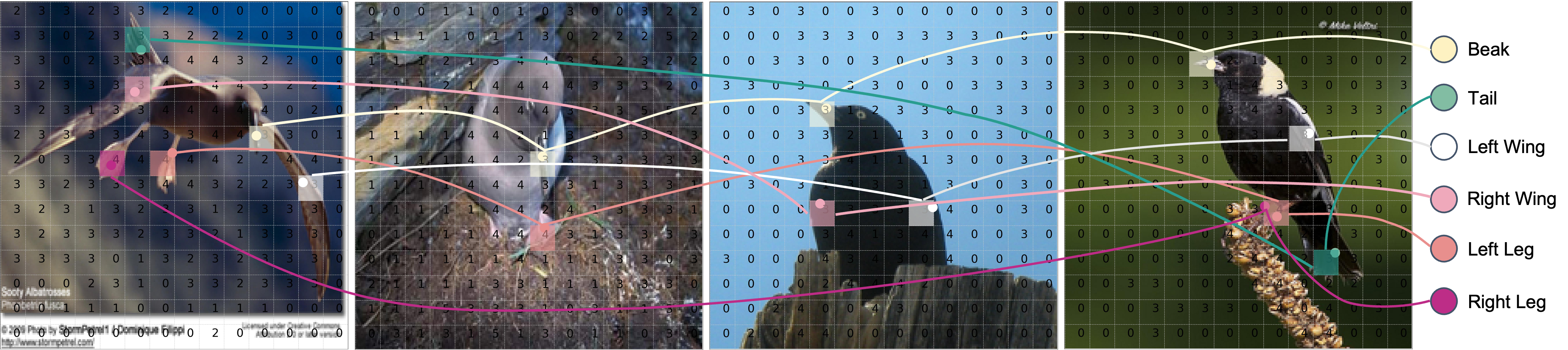}
    \caption{Expert selection visualization on block-10 of GMoE-S/16. Images are from different classes of natural domains on CUB-DG. Different colored lines connect the same type of visual attributes across images. Same visual attribute is processed by the same expert, \eg beak and tail by expert 3, left/right leg by expert 4.}
    \label{fig:cub_experts_corr}
\end{figure}

\begin{figure}
    \centering
    \includegraphics[width=0.95\textwidth]{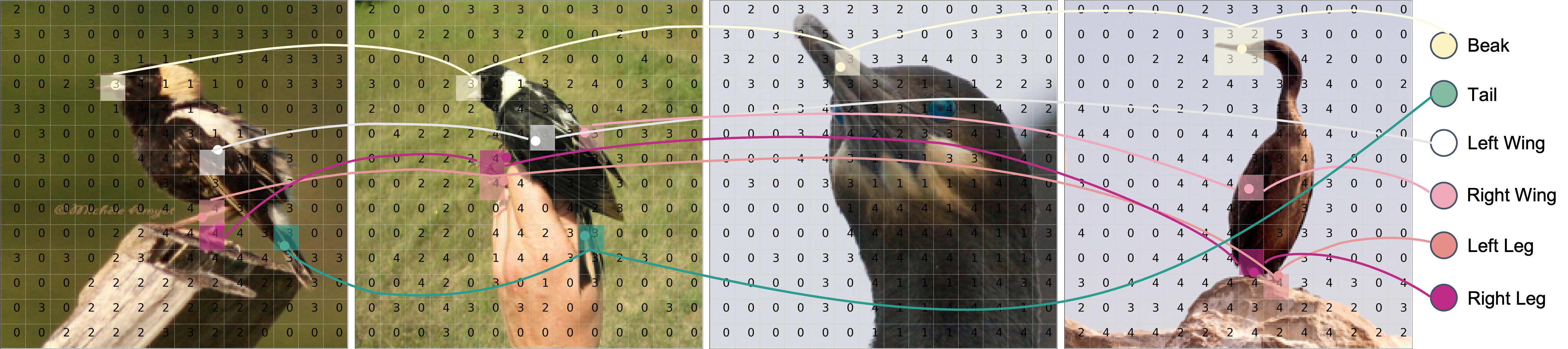}
    \caption{Expert selection visualization on block-10 of GMoE-S/16. Images are from different classes of natural domains on CUB-DG. Different colored lines connect the same type of visual attributes across images. Same visual attribute is processed by the same expert, \eg beak and tail by expert 3, left/right leg by expert 4.}
    \label{fig:cub_experts_corr2}
\end{figure}

\begin{figure}
    \centering
    \includegraphics[width=0.95\textwidth]{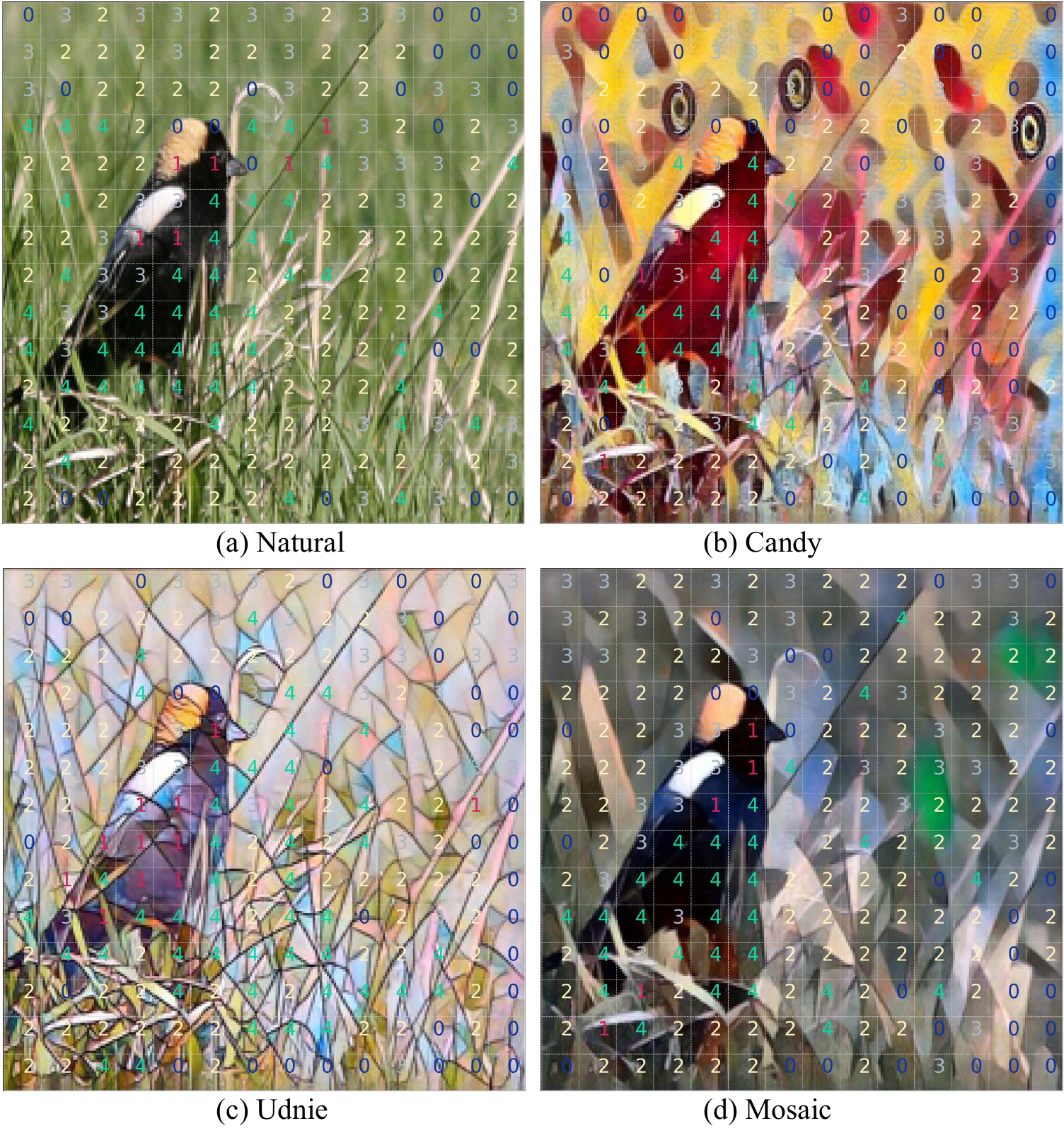}
    \caption{Expert selection visualization on block-10 of GMoE-S/16. Images are from different domains on CUB-DG.}
    \label{fig:cub_vis1}
\end{figure}

\begin{figure}
    \centering
    \includegraphics[width=0.95\textwidth]{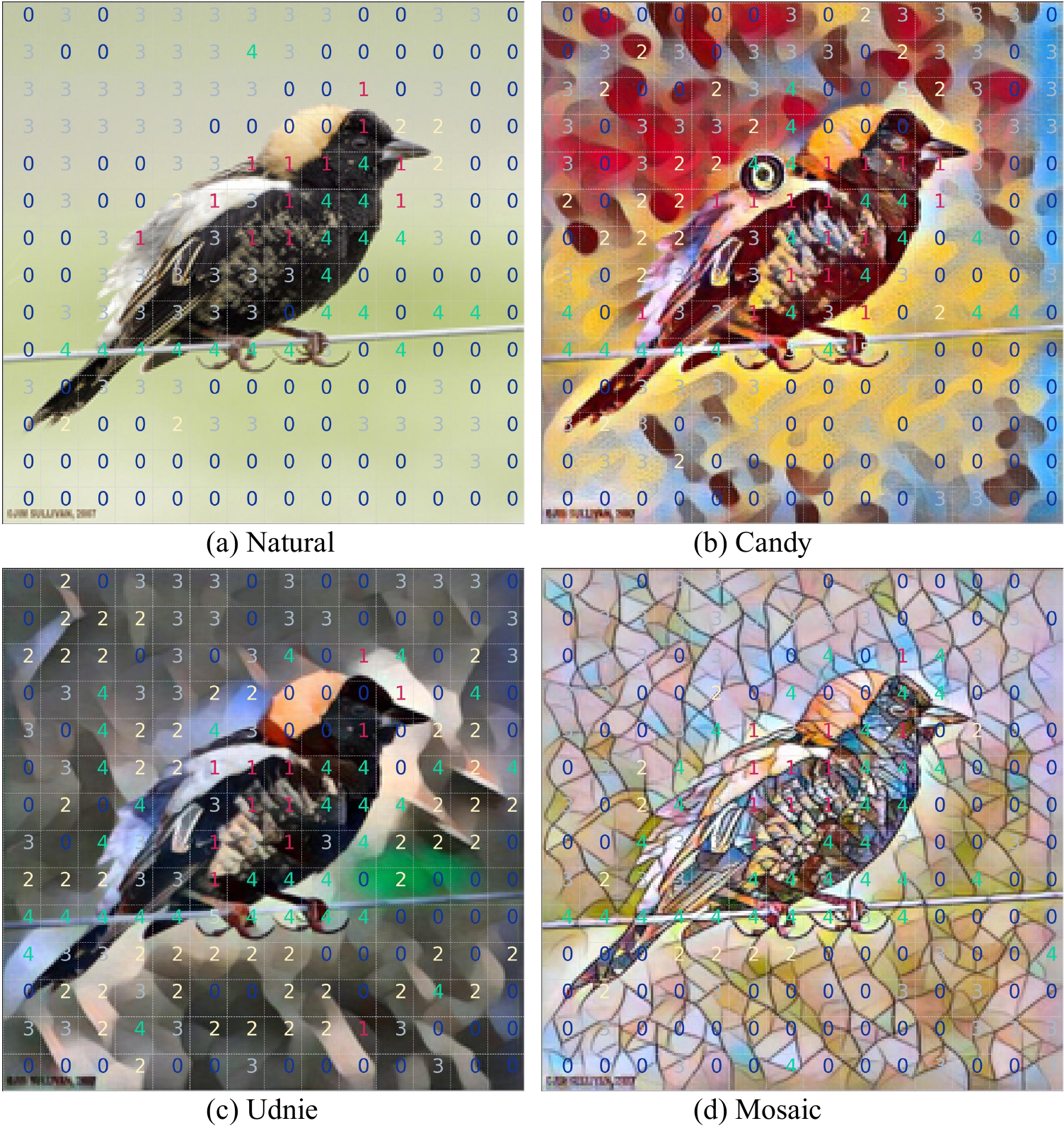}
    \caption{Expert selection visualization on block-10 of GMoE-S/16. Images are from different domains on CUB-DG.}
    \label{fig:cub_vis2}
\end{figure}

\begin{figure}
    \centering
    \includegraphics[width=0.95\textwidth]{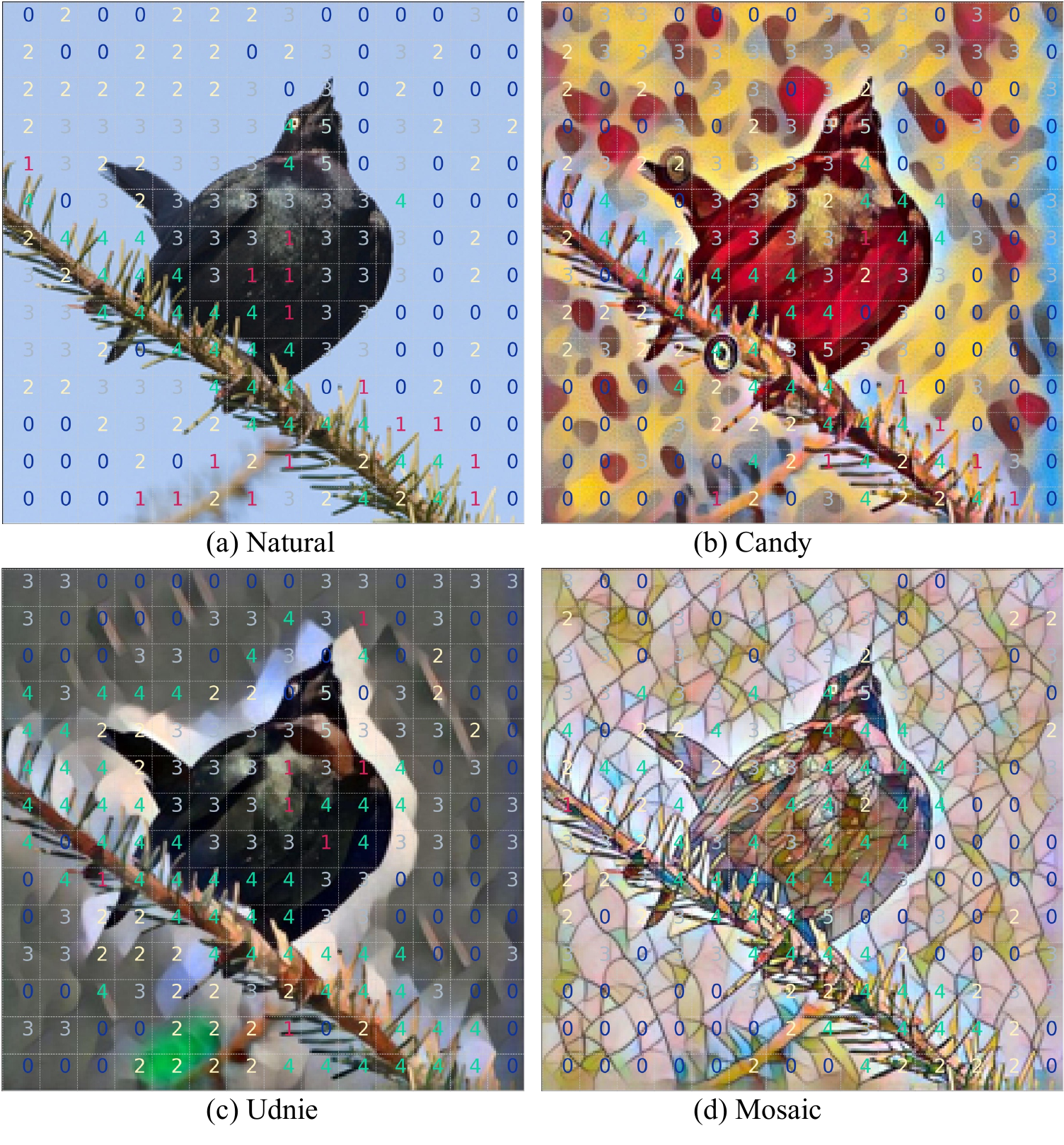}
    \caption{Expert selection visualization on block-10 of GMoE-S/16. Images are from different domains on CUB-DG.}
    \label{fig:cub_vis3}
\end{figure}

\begin{figure}
    \centering
    \includegraphics[width=0.95\textwidth]{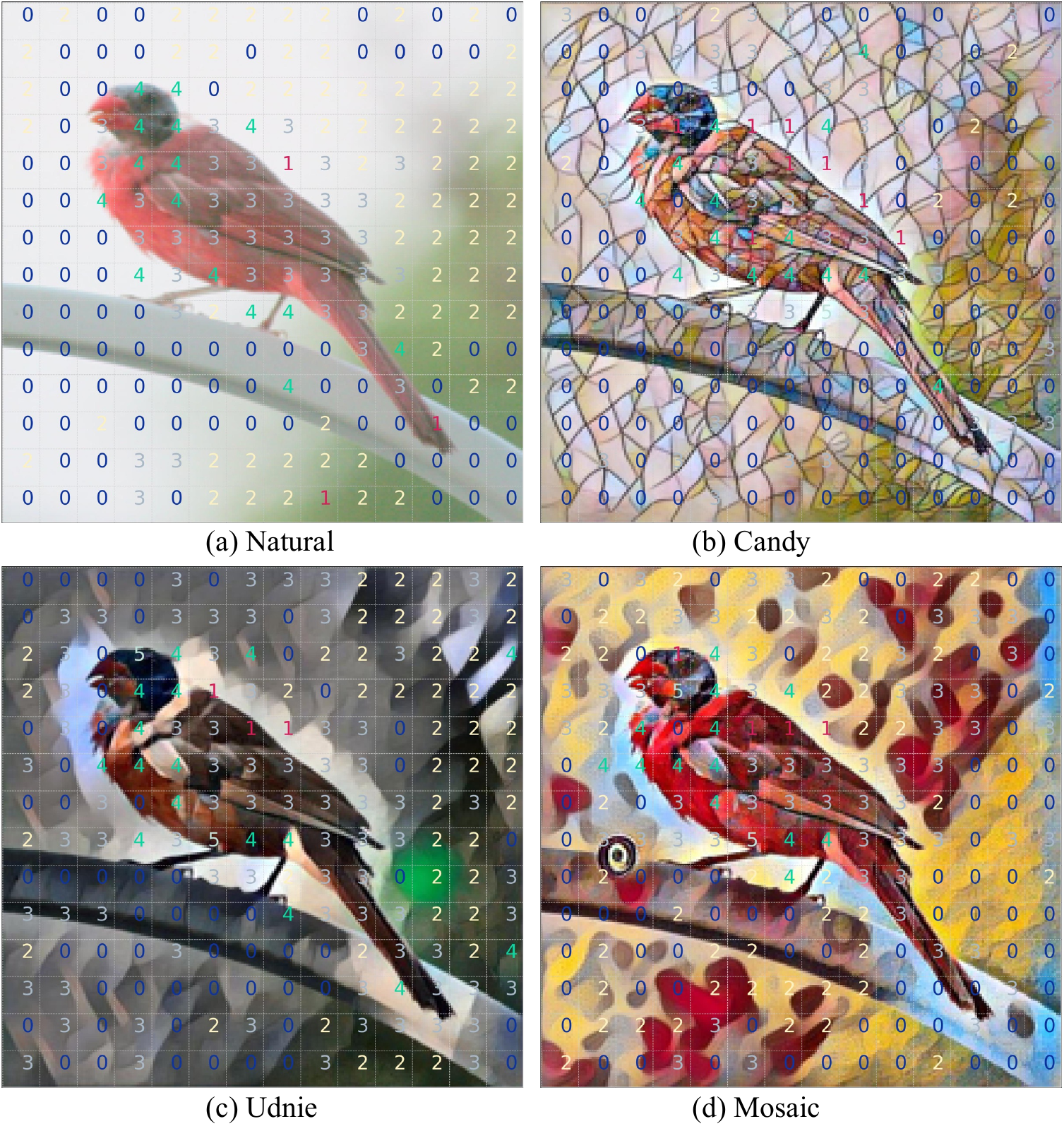}
    \caption{Expert selection visualization on block-10 of GMoE-S/16. Images are from different domains on CUB-DG.}
    \label{fig:cub_vis4}
\end{figure}

\begin{figure}
    \centering
    \includegraphics[width=0.95\textwidth]{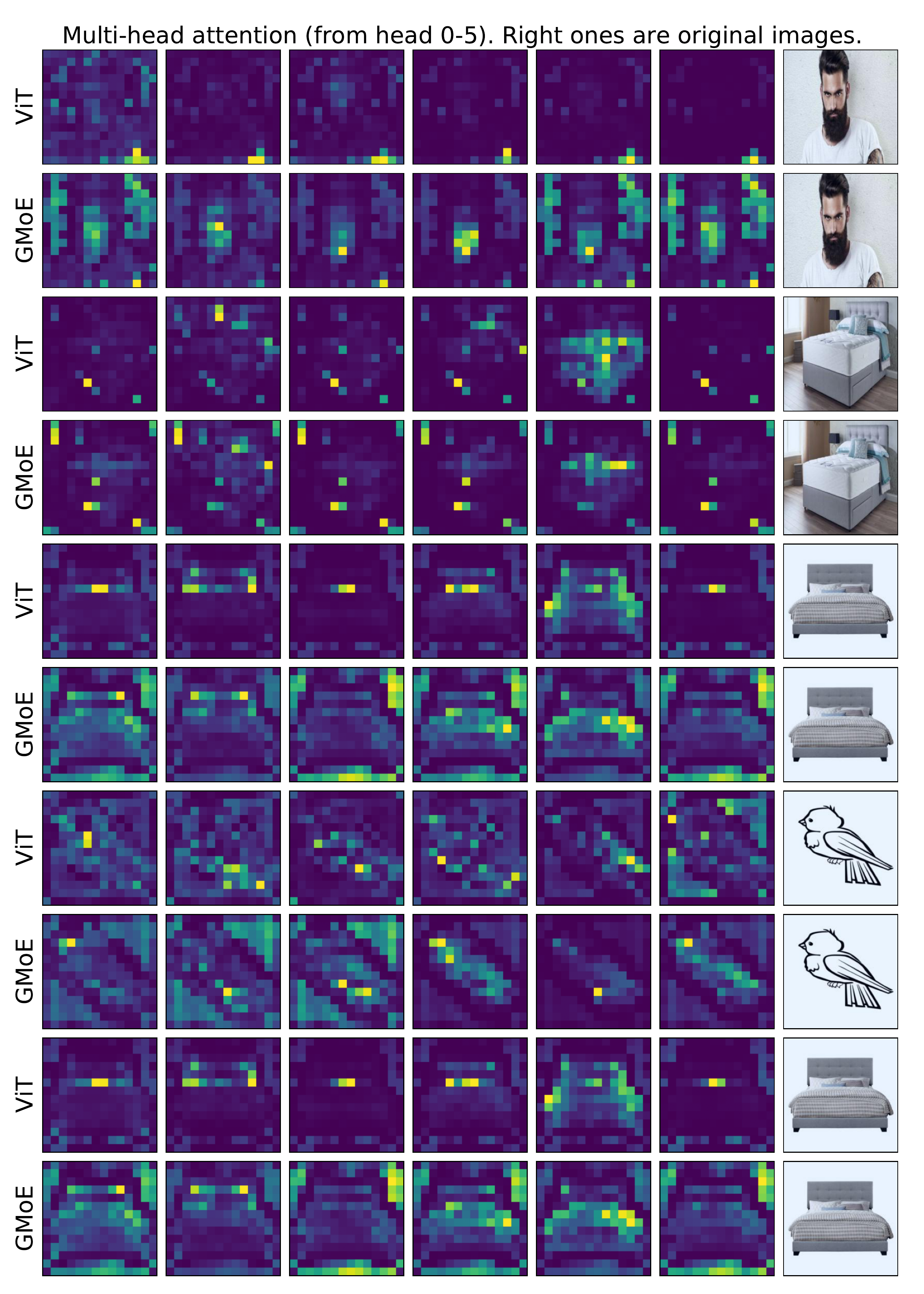}
    \caption{Multi-head attention visualization on the last block of ViT-S/16 and GMoE-S/16. Images are from \textit{Real} domain in DomainNet.}
    \label{fig:vis1}
\end{figure}

\begin{figure}
    \centering
    \includegraphics[width=0.95\textwidth]{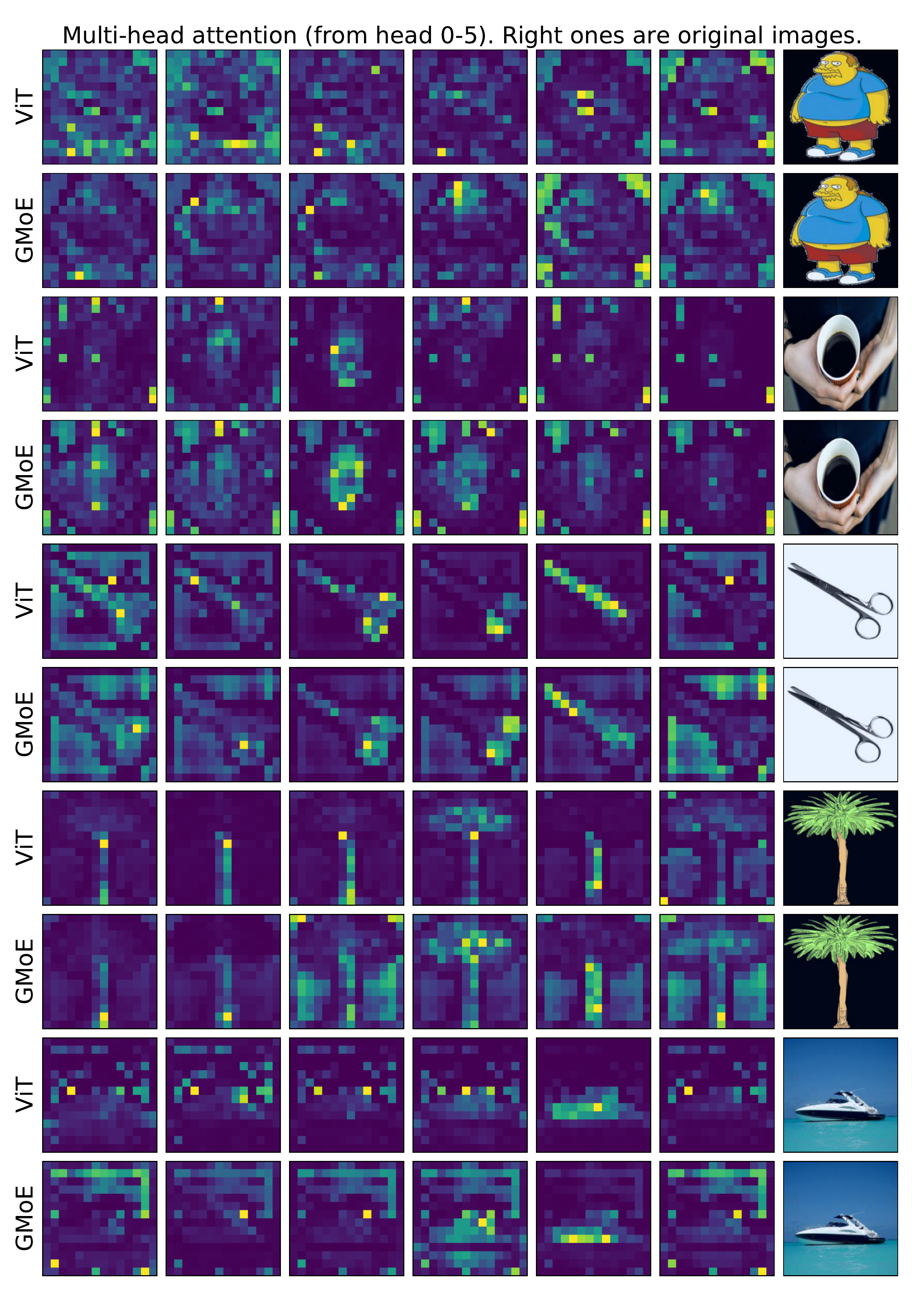}
    \caption{Multi-head attention visualization on the last block of ViT-S/16 and GMoE-S/16. Images are from \textit{Real} domain in DomainNet.}
    \label{fig:vis2}
\end{figure}

\begin{figure}
    \centering
    \includegraphics[width=0.95\textwidth]{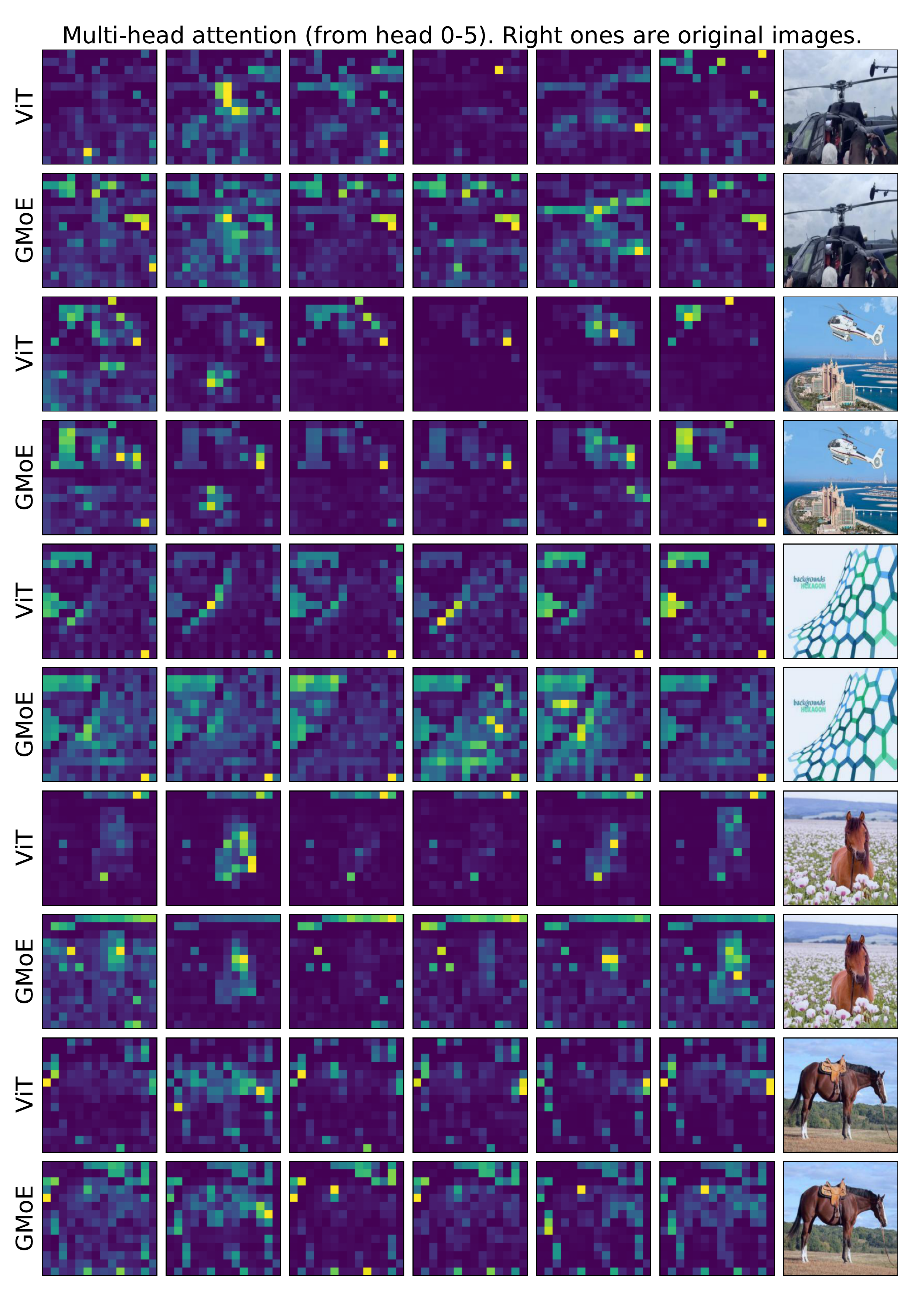}
    \caption{Multi-head attention visualization on the last block of ViT-S/16 and GMoE-S/16. Images are from \textit{Real} domain in DomainNet.}
    \label{fig:gmoe_vis3}
\end{figure}

\begin{figure}
    \centering
    \includegraphics[width=0.95\textwidth]{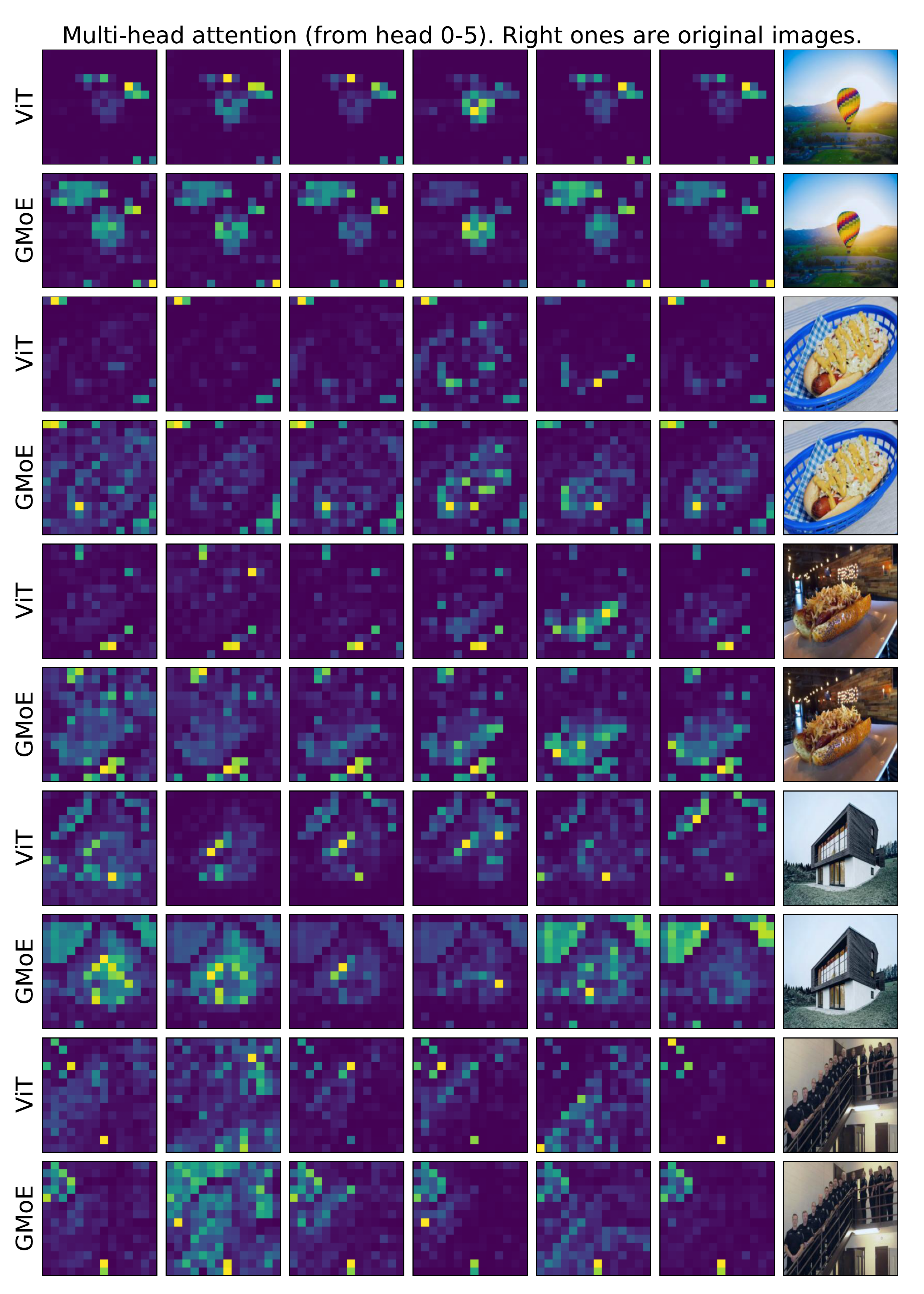}
    \caption{Multi-head attention visualization on the last block of ViT-S/16 and GMoE-S/16. Images are from \textit{Real} domain in DomainNet.}
    \label{fig:gmoe_vis4}
\end{figure}

\end{document}